\definecolor{myblue}{rgb}{0.1,0.2,0.75}
\definecolor{lightblue}{HTML}{A6E5FF}
\newcommand\tsout{\bgroup\markoverwith{\textcolor{magenta}{\rule[0.5ex]{2pt}{0.4pt}}}\ULon}
\theoremstyle{plain}
\newtheorem{theorem}{Theorem}[section]
\newtheorem{proposition}[theorem]{Proposition}
\newtheorem{lemma}[theorem]{Lemma}
\theoremstyle{definition}
\newtheorem{remark}{Remark}
\newcommand{\xt}[1]{\textcolor{red}{XT:#1}}
\newcommand{\black}[1]{\textcolor{black}{#1}} 
\newcommand\blfootnote[1]{%
  \begingroup
  \renewcommand\thefootnote{}\footnote{#1}%
  \addtocounter{footnote}{-1}%
  \endgroup
}
\title{Demystifying the Token Dynamics of \\ Deep Selective State Space Models}
\author{%
  Thieu N. Vo \\
  Department of Mathematics\\ 
  National University of Singapore\\
  \texttt{thieuvo@nus.edu.sg}
  \And
  Duy-Tung Pham\\
  FPT Software AI Center\hspace*{1.23in} \\ 
  \texttt{tungpd10@fpt.com}
  \And
  Xin T. Tong$^\ast$ \\
  Department of Mathematics\\ 
  National University of Singapore\\
  \texttt{mattxin@nus.edu.sg}
  \And
  Tan M. Nguyen$^\ast$ \\
  Department of Mathematics\\ 
  National University of Singapore\\
  \texttt{tanmn@nus.edu.sg}
}
\begin{document}

\maketitle


\begin{abstract}
Selective state space models (SSM), such as Mamba, have gained prominence for their effectiveness in modeling sequential data. Despite their outstanding empirical performance, a comprehensive theoretical understanding of deep selective SSM remains elusive, hindering their further development and adoption for applications that need high fidelity. In this paper, we investigate the dynamical properties of tokens in a pre-trained Mamba model. In particular, we derive the dynamical system governing the continuous-time limit of the Mamba model and characterize the asymptotic behavior of its solutions. In the one-dimensional case, we prove that only one of the following two scenarios happens: either all tokens converge to zero, or all tokens diverge to infinity.  We provide criteria based on model parameters to determine when each scenario occurs. For the convergent scenario, we empirically verify that this scenario negatively impacts the model's performance.  For the divergent scenario, we prove that different tokens will diverge to infinity at different rates, thereby contributing unequally to the updates during model training.  Based on these investigations, we propose two refinements for the model: excluding the convergent scenario and reordering tokens based on their importance scores, both aimed at improving practical performance.  Our experimental results validate these refinements, offering insights into enhancing Mamba's effectiveness in real-world applications. The code is publicity available at \url{https://github.com/Fsoft-AIC/Mamba-token-dynamic}. \blfootnote{$^\ast$ Co-last authors. Please correspond to: thieuvo@nus.edu.sg and tanmn@nus.edu.sg}
\end{abstract}

\section{Introduction}

\vspace{-0.3cm}
State space models (SSMs)
have undergone significant advancements to mitigate the computational inefficiency associated with the sequence modeling \citep{gu2022efficiently, gu2021combining, gu2022parameterization, gupta2022diagonal, li2023makes, kosma2023time, orvieto2023resurrecting, smith2023simplified}, and they have been successfully applied in various domains involving continuous signal data such as audio and vision \citep{goel2022s, nguyen2022s4nd, saon2023diagonal}. 
SSMs can be viewed as a combination of recurrent neural networks (RNNs) and convolutional neural networks (CNNs), drawing on concepts from traditional state space frameworks \citep{kalman1960new}. 
Unlike transformers, which experience quadratic scaling with respect to input sequence length and exhibit significant computational demands \citep{vaswani2017attention}, SSMs are designed for efficient computation through recurrence or convolution, achieving linear or near-linear scaling as the sequence length increases. 
Furthermore, SSMs incorporate robust mechanisms for capturing long-range dependencies \citep{gu2020hippo} across various data types, and they have excelled in benchmark tasks such as the Long Range Arena \citep{tay2021long}.

Selective SSMs such as Mamba is a  particular SSM which involves the selective state-space layer ($\operatorname{S6}$) as their core component  \citep{gu2024mamba}. 
In an $\operatorname{S6}$ layer, parameters are functions of the input, endowing the SSM with the content awareness ability.
Mamba has showcased exceptional performance across a variety of applications, such as language modeling \citep{pioro2024moe, lenz2025jamba}, image processing \citep{liu2024generalization, zhu2024vision}, video analysis \citep{yang2024vivim, li2024videomamba}, medical imaging \citep{ma2024semi, wang2024mamba}, tabular data analysis \citep{ahamed2024mambatab}, reinforcement learning \citep{ota2024decision}, point-cloud analysis \citep{liang2025pointmamba}, graph processing \citep{wang2024graph}, and $N$-dimensional sequence modeling \citep{li2024mamba}. 
It has been argued that Mamba's significant success across various domains stems from its ability to compute complicated representations for different data types
\citep{jafari2025mambalrp, patro2024mamba, xu2024survey, zhu2024vision}.

Despite their remarkable empirical success, a thorough theoretical understanding of deep selective SSM is still lacking, which poses challenges for their advancement and application in high-fidelity tasks. 
As these models are scaled at a remarkable rate, understanding their internal mechanisms has become a major problem. In this paper, we take an initial step toward addressing this gap by analyzing the dynamical properties of tokens in a pretrained Mamba.

\vspace{-0.3cm}
\subsection{Background: Mamba}\label{subsec:background}

\vspace{-0.2cm}
The building block of a selective state-space model is a Mamba block, which is built on top of the $\operatorname{S6}$ layers, along with linear layers, convolutions, and other token-wise operators. The $\operatorname{S6}$ layer, which serves as the core component of the Mamba block, plays a pivotal role in the success of these models.

Formally, an $\operatorname{S6}$ layer is defined as a function that maps a sequence of tokens $\mathbf{x} = (x_1, \ldots, x_L) 
	 \in \mathbb{R}^{D \times L}$ to another sequence of tokens $\mathbf{y} = (y_1, \ldots, y_L) 
	 \in \mathbb{R}^{D \times L}$ (with the same number of channels).
In this context, each vector $x_l$ (or $y_l$) represents a token, while the whole sequence $\mathbf{x} = (x_1, \ldots, x_L)$ (or $\mathbf{y} = (y_1, \ldots, y_L)$) is called a prompt.
For each $d=1, \ldots, D$, the $d$-th channel output $\hat{y}_d = (y_{d1}, \ldots, y_{dL}) \in \mathbb{R}^{L}$ is determined recurrently from the corresponding $d$-th channel input $\hat{x}_d = (x_{d1}, \ldots, x_{dL}) \in \mathbb{R}^{L}$ via a sequence of hidden states $h_{d1}, \ldots, h_{dL} \in \mathbb{R}^{N}$ as follows:
	\begin{equation}\label{eq:recurrent_rule}
		\left\{
			\begin{aligned}
				&h_{dl} = \overline{A}_{dl} \cdot h_{d,l-1} + \overline{B}_{dl} \cdot x_{dl}, \quad h_{d0} = 0, \\
				&y_{dl} = C_l \cdot h_{dl},
			\end{aligned}
		\right.
	\end{equation}
for $l = 1, \ldots, L$.
Here, the matrices $\overline{A}_{dl}$, $\overline{B}_{dl}$, and $C_l$ are input-dependent and time-varying, determined by
\begin{equation}\label{eq:recurrent_ABC}
	\overline{A}_{dl} = e^{\Delta_d(x_l) A_d}, \quad \overline{B}_{dl} = \Delta_d(x_l) S_B \cdot x_l, \quad C_l = (S_C \cdot x_l)^{\top},
\end{equation}
where $\Delta_d \,:\, \mathbb{R}^D \to \mathbb{R}$ is the step size function at the $d$-th channel and is defined by 
\begin{align}
    \Delta_d(u) = \text{softplus}(S_{\Delta,d} u) = \ln \left( 1+e^{S_{\Delta, d} \cdot u} \right), \qquad u \in \mathbb{R}^D.
\end{align}
The hidden matrices $A_d \in \mathbb{R}^{N \times N}$ and the step size vectors $S_{\Delta,d} \in \mathbb{R}^{1 \times D}$ for $d = 1, \ldots, D$, as well as the input and output matrices $S_B, S_C \in \mathbb{R}^{N \times D}$, are learnable from input data. The matrices $A_d$ are diagonal matrices with negative eigenvalues. 
In recent developments, the matrices $A_d$ are chosen to be of the scalar form $A_d = -a_d I_N$ for some positive number $a_d$ (see \citep{dao2024transformers}), which does not compromise model performance. We will also use this scalar form for $A_d$ in our paper.
In our context, the matrix $S_C^{\top} S_B \in \mathbb{R}^{D \times D}$, which we will refer to as the \emph{input-output matrix}, plays an essential role in characterizing the dynamical properties of tokens.

\vspace{-0.2cm}
\subsection{Contribution}

\vspace{-0.2cm}
This paper aims to describe the dynamical properties of tokens in a pre-trained Mamba model. To achieve this, we consider the dynamical system governing the continuous-time limit of Mamba. The dynamical properties of tokens are described through the asymptotic behavior of the solutions of the dynamical system. Additionally, we 
empirically investigate the relationship between the dynamical properties of the tokens and model performance. 
In summary,
our main contributions are three-fold: 

\begin{enumerate}[leftmargin=25pt]
    \item In the one-dimensional case, we prove that only one of the following two scenarios occurs: either all tokens converge to zero, or all tokens diverge to infinity. We provide criteria based on model parameters to determine when each scenario occurs. Our experiments suggest that these observations generally hold in high-dimensional cases.
    
    \item For the convergent scenario, we empirically verify that this situation negatively impacts the model’s performance. In contrast, for the divergent scenario, we prove that different tokens will diverge to infinity at different rates, thereby contributing unequally to the updates during model training.
    
    \item Based on these investigations, we propose two refinements for the model: 
    (i) we exclude the convergent scenario before training as it negatively impacts performance, and (ii) we reorder the tokens according to their ascending importance scores, as different tokens are prioritized unequally during training.
\end{enumerate}

We empirically demonstrate the benefits of our token rendering method in improving the model's accuracy and convergence speed compared to the baseline Mamba on the large-scale ImageNet classification task \citep{deng2009imagenet}






\vspace{-0.2cm}
\paragraph{Organization of the paper} 
After surveying related works in Section~\ref{sec:relate}, we introduce the dynamical system that governs the continuous-time limit of Mamba. In Section~\ref{sec:dynamic_main}, we characterize the dynamical properties of tokens based on model parameters and determine the divergence rate in the divergence scenario. Based on the findings in Section~\ref{sec:dynamic_main}, we propose two refinements for the model in Section~\ref{sec:experiment}: excluding unfavorable scenarios and reordering tokens before training. We conclude the paper with a discussion of conclusions and limitations in Section~\ref{sec:conclusion}.

\vspace{-0.2cm}
\paragraph{Notations.} We use small bold letters (e.g., $\mathbf{x}, \mathbf{y}, \mathbf{u}$) to denote sequences of tokens. Tokens can be either vectors or scalars, depending on the context, and are denoted by small normal letters (e.g., $x_i, y_j, u_k$), where the indices emphasize their position in the sequence. We use capital letters (e.g., $X,Y,P,A,B,C$) to represent matrices.


\vspace{-0.2cm}
\section{Related Work}\label{sec:relate}

\vspace{-0.2cm}
Several studies have analyzed the expressivity and generalization of Mamba from a theoretical perspective. For instance, the authors in \citep{ali2024hidden} demonstrated that the $\operatorname{S6}$ layer can be interpreted as a variant of softmax-free attention with linear complexity, subsequently proving that Mamba is more expressive than transformers. Conversely, \citep{cirone2025theoretical} employed tools from Rough Path Theory to show that diagonal selective state space models (SSMs), such as Mamba, possess less expressive power than their non-diagonal counterparts. Furthermore, \citep{merrill2024illusion} utilized circuit complexity theory to establish that both SSM variants and transformers share the same expressive power, as they belong to the complexity class $TC^0$, which can be decided by polynomial-sized Boolean circuits. In contrast, the authors in \citep{jelassi2024repeat} provided both theoretical and empirical evidence that transformers surpass state space models in their copying capabilities.
All of these works aimed to provide a theoretical framework to understand the expressivity and generalization of Mamba.

In this paper, we examine the dynamical properties of tokens in a pretrained Mamba model. Following the methodology outlined in \citep{geshkovski2023mathematical, geshkovski2024emergence}, we define an idealized model of Mamba by interpreting the discrete layer indices as continuous time variables. This idealized perspective was 
used in ResNets 
\citep{chen2018neural, haber2017stable,abdullaev2025transformer} and neural ODEs \citep{lin2018resnet, zhang2020approximation,xia2021heavy, li2022deep, tabuada2022universal,nguyen2022improving, ruiz2023neural, cheng2025interpolation}. 
Our model focuses on the $\operatorname{S6}$ layer, which is central to Mamba and contributes to its success across various domains.

\vspace{-0.2cm}
\section{Selective State Space Dynamics}
\label{sec:dynamic_setting}

\vspace{-0.2cm}
In this section, we introduce the dynamical system governing the continuous-time counterpart of Mamba. 
This dynamical system will provide an effective tool to characterize the dynamical properties of tokens in Mamba as well as the impact of these properties on model performance in the next section.

\vspace{-0.2cm}
\subsection{Background: Continuous-time Limit of a Deep Neural Network}

\vspace{-0.2cm}
To ensure a clear presentation of our results, we draw upon the literature concerning the dynamical systems that govern the continuous-time limit of deep neural networks (DNNs).
Generally speaking, data in a DNN is processed sequentially, layer by layer, resulting in a discrete-time dynamical system \citep{lecun2015deep}. 
A notable example is residual neural networks (ResNets) and their continuous-time counterparts known as neural ODEs \citep{chen2018neural, haber2017stable,nguyen2020momentumrnn,wang2022does}.
Each layer of ResNet is a residual block, transforms an input vector $x \in \mathbb{R}^D$ to an output vector $z \in \mathbb{R}^D$ via
a two-layer feed-forward neural network
$x \mapsto y(x,\theta),$ 
parametrized by $\theta$,
and a skip-connection as:
$$z=x+y(x,\theta).$$
The continuous-time counterparts of ResNet is conceptualized as a flow map that inputs a vector $x(0) \in \mathbb{R}^D$ and outputs another vector $x(T) \in \mathbb{R}^D$, which is processed via the corresponding dynamical system
$$x'(t)=y(x(t),\theta(t)), \quad t \in (0,T).$$
There exists a body of work investigating the interpolation, approximation, and controllability properties of these DNN architectures \citep{lin2018resnet, zhang2020approximation, li2022deep, tabuada2022universal, ruiz2023neural, cheng2025interpolation, nguyen2024pidformer}.

\vspace{-0.2cm}
\subsection{Continuous-time Limit of Mamba}

\vspace{-0.2cm}
Unlike ResNets and neural ODEs, Mamba represents a function on a sequence of $D$-dimensional tokens rather than solely on an individual input, as discussed in Section~\ref{subsec:background}. 
In order to derive the dynamical system governing the continuous-time counterpart of Mamba, 
we eliminate the hidden states $h_{dl}$ in system~\eqref{eq:recurrent_rule} to obtain the following form 
(see \citep{ali2024hidden}):
\begin{equation}\label{eq:convolution_rule}
    y_{dl} = \sum\limits_{j=1}^l P_{dlj} x_{dj},
\end{equation}
or equivalently, $\hat{y}_d = P_d \hat{x}_d$,
where 
\begin{align}
    P_d = \begin{bmatrix}
        P_{d11}&0&\ldots&0\\
        P_{d21}&P_{d22}& \ldots&0\\
        \vdots&\vdots&\ddots&\vdots\\
        P_{dL1}&P_{dL2}&\ldots&P_{dLL}
    \end{bmatrix},
\end{align}
and 
\begin{equation}
    P_{dlj} = \left\{
    \begin{aligned}
        & x_l^{\top} \left(S_C^{\top} S_B \right) x_l \cdot \Delta_d(x_l), &\text{if } l = j,\\
        &x_l^{\top} \left(S_C^{\top} S_B \right) x_j \cdot \Delta_d(x_j) \cdot \exp \left( -a_d \sum\limits_{k=j+1}^l \Delta_d(x_k) \right), &\text{if } l > j.
    \end{aligned}
    \right.
\end{equation}
In the above formulation, the upper triangular matrices $P_d$ ($d=1, \ldots, D$) represent the hidden attention scores between the $d$-th channel input and output. Following the setting of \citep{ali2024hidden}, we call $P = [P_1, \ldots, P_D]$ the hidden attention tensor of the $\operatorname{S6}$ layer.


Following the common approaches used in studying the continuous-time counterparts of ResNets \citep{chen2018neural, haber2017stable} and Transformers \citep{geshkovski2023mathematical,geshkovski2024emergence}, we consider the layer index of a deep selective state space model as a time variable and interpret the selective state space model as the discrete-time version of a parametrized dynamical system of the form 
\begin{align}
    &\frac{d}{dt} x_{dl}(t) = \sum\limits_{j=1}^l P_{dlj}(t) x_{dj}(t), \quad t \in [0,+\infty),\label{eq:mamba_dynamic}\\
    &x_{dl}(0) = x_{dl0} \in \mathbb{R},\nonumber
\end{align}
for $l=1,\ldots,L$ and $d=1,\ldots,D$, where
\begin{equation}\label{eq:P_dlj}
    P_{dlj}(t) = \left\{
    \begin{aligned}
        & x_l(t)^{\top} \left(S_C^{\top}S_B\right)  x_l(t) \cdot \Delta_d(x_l(t)) , &\text{if } l=j,\\
        &x_l(t)^{\top} \left(S_C^{\top}S_B\right)  x_j(t) \cdot \Delta_d(x_j(t))  \cdot \exp \left( -a_d\sum\limits_{k=j+1}^l \Delta_d(x_{k}(t)) \right), &\text{if } l>j,
    \end{aligned}
    \right.
\end{equation}
and $\Delta_d(u) = \text{softplus}(S_{\Delta,d}u)$. {\color{black} It is important to mention that the time $t$ in this dynamical system represents the depth direction of Mamba model.}


Similar to \citep{geshkovski2023mathematical,geshkovski2024emergence}, we have focused exclusively on $\operatorname{S6}$ layer, which is the key component of Mamba, and the skip-connection in the above dynamical system.
We have ignored other token-wise operators such as layer normalization, convolution, and linear functions. 
In addition, we assume the parameters $A_d, S_{\Delta,d}, S_B, S_C$ are time-independent. 
These assumptions are primarily motivated by mathematical convenience. 
Nevertheless, such a weight-sharing mechanism is used in practice to reduce the number of trainable parameters, as seen, for example, in ALBERT \citep{lan2020albert}, a transformer-based large language model. 
The dynamical properties of tokens in deep selective state space models with these additional layers will be discussed in future works.

\vspace{-0.2cm}
\section{Dynamical properties of Tokens in Mamba} 
\label{sec:dynamic_main}

\vspace{-0.2cm}
In this section, we study the asymptotic behavior of the solution $\mathbf{x}(t)$ of the dynamical system~\eqref{eq:mamba_dynamic}, as well as its corresponding attention score $P(t) = (P_{dlj}(t))_{dlj}$ defined in equation~\eqref{eq:P_dlj}. 
This information encodes the dynamical properties of tokens and hidden attention scores in deep selective state space models.

\vspace{-0.2cm}
\subsection{Convergence and Divergence Scenarios of Mamba's Dynamics}
\label{subsec:distinguish}

\vspace{-0.2cm}
In the standard form of $\operatorname{S6}$, each $d$-th channel output is determined from the corresponding $d$-th channel input using time-variant input-dependent parameters.
Therefore, we will consider the single channel dimension case, i.e. $D=1$, in our setting to avoid overly complicated technicalities.
In this case, the input sequence $\mathbf{x}=(x_1,\ldots,x_L) \in \mathbb{R}^L$ is a list of $L$ real numbers.
We drop the index $d$ from Eq.~\eqref{eq:mamba_dynamic} and rewrite it as
\begin{align}
    &\frac{d}{dt} x_{l}(t) = \sum\limits_{j=1}^l P_{lj}(t) x_{j}(t), \quad t \in [0,+\infty),\label{eq:mamba_dynamic_d=1_mainbody}\\
    &x_{l}(0) = x_{l0} \in \mathbb{R},\nonumber
\end{align}
for $l=1,\ldots,L$, where 
\begin{align}
    \Delta(u) = \text{softplus}(S_{\Delta}u) = \ln \left(1+e^{S_{\Delta}u} \right), \qquad u \in \mathbb{R},
\end{align}
and 
\begin{equation}\label{eq:Plj_D=1_mainbody}
    P_{lj}(t) = \left\{
    \begin{aligned}
        & \left(S_C^{\top}S_B\right)  x_l(t)^2 \cdot \Delta(x_l(t)) , &\text{if } l=j,\\
        &\left(S_C^{\top}S_B\right) x_l(t) x_j(t) \cdot \Delta(x_j(t))  \cdot \exp \left( -a\sum\limits_{k=j+1}^l \Delta(x_{k}(t)) \right), &\text{if } l>j.
    \end{aligned}
    \right.
\end{equation}
In this case, $S_B, S_C \in \mathbb{R}^{N \times 1}$, thus $S_C^{\top} S_B \in \mathbb{R}$, and the scalar values $S_{\Delta}$ and $a \in \mathbb{R}$, with $a > 0$, are learnable from input data.

To avoid triviality, we will always assume that the initial datum $x_{l}(0)=x_{l0}$ are nonzero for all $l$.
The main goal of this section is to investigate the asymptotic behavior of the solution $\mathbf{x}(t)= (x_1(t),\ldots,x_L(t))$ of the dynamic~\eqref{eq:mamba_dynamic_d=1_mainbody} 
and the corresponding hidden attention matrix
\begin{align*}
    \mathbf{P}(t) = \begin{bmatrix}
        P_{11}(t) & 0 & \ldots & 0\\
        P_{21}(t) & P_{22}(t) & \ldots & 0\\
        \ldots & \ldots & \ldots & \ldots\\
        P_{L1}(t) & P_{L2}(t) & \ldots & P_{LL}(t)
    \end{bmatrix},
\end{align*}
when $t$ increases.
The following three scenarios will be considered separately:
\begin{enumerate}
    \item Convergence scenario: $S_C^{\top}S_B<0$;
    \item Slow-divergence scenario: $S_C^{\top}S_B>0$ and $S_{\Delta}x_{l0}<0$ for all $l$; 
    \item Fast-divergence scenario: $S_C^{\top}S_B>0$ and $S_{\Delta}x_{l0}>0$ for some $l$.
\end{enumerate}


We will see that, in the first scenario, all tokens and hidden attention scores quickly converge to zero as $t \to \infty$, at rates of $O(1/\sqrt{t})$ and $O(1/t)$, respectively (Theorem~\ref{thm:mamba2_mu<0_mainbody}). 
In the second scenario, all tokens diverge to infinity slowly at a logarithmic rate, while the hidden attention scores still approach zero as $t \to \infty$. We refer to this as the slow-divergence scenario (Theorem~\ref{thm:mu>0_delta<0_mainbody}).
In the final scenario, one token diverges to infinity very quickly in finite time, causing a blow-up in the corresponding hidden attention scores. We refer to this as the fast-divergence scenario (Theorem~\ref{thm:mamba2_mu>0_case3_mainbody}). Table~\ref{tab:table} summarizes the dynamical properties of tokens in this work. 

\begin{table}[ht]
    \centering
    \caption{Summary of the dynamical properties of this work.}
    \vspace{-0.2cm}
    \begin{tabular}{c|c|c|c|c}
        \hline
        \multicolumn{2}{c|}{Parameters} & Scenario & Impact on model & Reference\\
        \hline
        \multicolumn{2}{c|}{$S_C^{\top}S_B<0$} &Convergence&Negative&Theorem~\ref{thm:mamba2_mu<0_mainbody} \\
        \hline
        \multirow{2}{*}{$S_C^{\top}S_B>0$} & $\forall l,\,S_{\Delta}x_{l0}<0$ & Slow-divergence & \multirow{2}{*}{Positive} & Theorem~\ref{thm:mu>0_delta<0_mainbody}\\
        \cline{2-3}
        & $\exists l,\,S_{\Delta}x_{l0}>0$ & Fast-divergence &  & Theorem~\ref{thm:mamba2_mu>0_case3_mainbody}\\
        \hline
    \end{tabular}
    \label{tab:table}
\end{table}

\vspace{-0.2cm}

\begin{remark}[{Convergence vs. divergence scenarios and their impact}]
We collectively refer to the slow-divergence and fast-divergence scenarios as the divergence scenarios. By utilizing model parameters to test the signature of $S_C^{\top} S_B$, we can distinguish between the convergence scenario and the divergence scenarios. However, to distinguish between the slow-divergence and fast-divergence scenarios, additional information from the input data, specifically \( x_{l,0} \), is required. In addition, our experiments suggest that the convergence scenario negatively impacts model performance in practice, while the divergence scenario positively impacts model performance. Therefore, the convergence scenario should be excluded before training (see Subsection~\ref{subsec:convergence_experiment}).
\end{remark}

\begin{remark}[{Unequal contribution of tokens during training}]
    In the slow-divergence scenario, when tokens diverge to infinity at infinity, we further prove that tokens with different initial positions will have different rates of divergence. As a consequence, the contribution of tokens to the updates during model training is unequal. Intuitively speaking, the tokens should be rearranged in such a way that those carrying more important information diverge faster than those carrying less important information. In Subsection~\ref{subsec:reorder}, we suggest a method for reordering the tokens before training by projecting them into a learnable line. Our experimental results show that this ordering method leads to better model performance.
\end{remark}

{\color{black}
\begin{remark}[{Higher dimension}]
    When the dimension of the input tokens is $D \geq 1$, the input-output projection matrix $S_C^\top S_B$ is a square matrix of size $D \times D$, which may have complex eigenvalues. We conjecture that the dynamic behavior of the tokens in system~\eqref{eq:mamba_dynamic} can be determined by analyzing the matrix 
    \[
    \mu = \frac{1}{2} \left( S_C^\top S_B + (S_C^\top S_B)^\top \right),
    \]
    which represents the symmetric part of the input-output projection matrix $S_C^\top S_B$.
    It is worth noting that in case $D=1$, $\mu = S_C^\top S_B$ as expected.
    The matrix $\mu$ has only real eigenvalues. We particularly conjecture that all tokens will converge to zero if $\mu$ has only negative eigenvalues, whereas the tokens will diverge to infinity if $\mu$ has at least one positive eigenvalue (see Appendix~\ref{appx:higher_dim} for further detail). We leave the study of the higher-dimensional case for future work.
\end{remark}
}

{\color{black}
\begin{remark}[Compare with tokens' dynamic in Transformer]
    As outlined in \citep{geshkovski2024emergence} (page 6, paragraph 2), tokens in a purely self-attention setting typically exhibit exponential divergence to infinity. However, our theoretical analysis demonstrates that the token dynamics in the purely Mamba setting are much more complicated. Specifically, tokens in the Mamba framework may either converge to zero or diverge to infinity. In cases of divergence, the behavior can vary: tokens may diverge rapidly to infinity within a finite time or diverge more gradually to infinity over an infinite time horizon, with the divergence following a logarithmic rate.   
\end{remark}
}

In the subsequent subsections, we provide the statements of the main theorems.
The proofs of these theorems can be found in Appendix~\ref{appx:dynamic}.

\vspace{-0.2cm}
\subsection{Convergence scenario: $\mu=S_C^{\top}S_B<0$}\label{subsec:case_1}

\vspace{-0.2cm}
In the following theorem, we prove that all tokens converge to zero at a rate of \( O(1/\sqrt{t}) \) as \( t \) approaches infinity, regardless of the input data. Consequently, the hidden attention scores will also tend toward zero.

\begin{theorem}[{Convergence scenario}]\label{thm:mamba2_mu<0_mainbody}
    Assume that $\mu=S_C^{\top}S_B<0$.
    Let $\mathbf{x}(t)=(x_1(t),\ldots,x_L(t))$ be the unique solution of the dynamic~\eqref{eq:mamba_dynamic_d=1_mainbody}.
    \begin{enumerate}
        \item For each $l=1,\ldots,L$, if $x_{l0}>0$ (respectively, $x_{l0}<0$), then $x_l(t)$ is positive and monotonically decreasing (respectively, negative and monotonically increasing) on $[0,+\infty)$.
        In addition, $x_l(t)=O(\frac{1}{\sqrt{t}})$.

        \item $P_{lj}(t) = O(\frac{1}{t})$ for all $l,j=1,\ldots,L$.
    \end{enumerate}
\end{theorem}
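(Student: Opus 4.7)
I would establish the theorem in three steps: sign preservation and monotonicity via a rewriting of the ODE, the $O(1/\sqrt{t})$ decay rate via a Bernoulli-type substitution $1/x_l^2$, and the $O(1/t)$ bound on $P_{lj}$ by direct estimation from the $x_l$ bounds.

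\textbf{Step 1 (Rewrite and sign/monotonicity).} I would first factor $x_l(t)$ out of equation~\eqref{eq:mamba_dynamic_d=1_mainbody} to obtain
\begin{equation*}
\dot{x}_l(t) = \mu\, x_l(t)\, Q_l(t), \qquad Q_l(t) := \sum_{j=1}^l x_j(t)^2\, \Delta(x_j(t))\, \exp\!\Bigl(-a\sum_{k=j+1}^l \Delta(x_k(t))\Bigr).
\end{equation*}
Since $\Delta > 0$ and $a > 0$, every summand is non-negative, so $Q_l(t) \geq 0$. Reading this as a scalar linear ODE in $x_l$ with coefficient $\mu Q_l(\cdot)$, the integrating-factor formula
\begin{equation*}
x_l(t) = x_{l0}\, \exp\!\Bigl(\mu \int_0^t Q_l(s)\,ds\Bigr)
\end{equation*}
immediately gives sign preservation (the exponential is positive) and the a priori bound $|x_l(t)| \leq |x_{l0}|$, which in turn certifies global existence. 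Because $\mu < 0$, $Q_l \geq 0$, and $Q_l(t) > 0$ strictly (the $j=l$ term is $x_l^2 \Delta(x_l) > 0$), $\dot{x}_l$ has the opposite sign to $x_l$, delivering the claimed strict monotonicity.

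\textbf{Step 2 (Rate $O(1/\sqrt{t})$).} I would isolate the diagonal contribution by writing
\begin{equation*}
\dot{x}_l = \mu\, \Delta(x_l)\, x_l^3 + \mu\, x_l\, R_l(t),
\end{equation*}
where $R_l$ collects the non-negative off-diagonal contributions to $Q_l$. A direct computation yields
\begin{equation*}
\frac{d}{dt}\frac{1}{x_l(t)^2} = -\frac{2\dot{x}_l}{x_l^3} = -2\mu\, \Delta(x_l) - \frac{2\mu\, R_l}{x_l^2} \;\geq\; -2\mu\, \Delta(x_l),
\end{equation*}
since $\mu R_l / x_l^2 \leq 0$. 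By Step~1, $x_l(t)$ stays in the compact interval bounded by $|x_{l0}|$, so by continuity and positivity of $\Delta$ there is a constant $\delta_l > 0$ with $\Delta(x_l(t)) \geq \delta_l$ for all $t$. Integrating gives $1/x_l(t)^2 \geq 1/x_{l0}^2 + 2|\mu|\delta_l\, t$, hence $|x_l(t)| = O(1/\sqrt{t})$.

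\textbf{Step 3 ($P_{lj} = O(1/t)$).} The diagonal entry $P_{ll} = \mu\, x_l^2\, \Delta(x_l)$ is $O(1/t)$ because $x_l^2 = O(1/t)$ and $\Delta(x_l)$ is bounded on the compact range of $x_l$. For $l > j$, the factor $x_l(t) x_j(t)$ is $O(1/\sqrt{t}) \cdot O(1/\sqrt{t}) = O(1/t)$, $\Delta(x_j)$ is bounded, and the exponential factor lies in $(0,1]$; multiplying these yields $P_{lj}(t) = O(1/t)$.

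\textbf{Main obstacle.} The delicate point is securing a uniform positive lower bound on $\Delta(x_l(t))$, since $\Delta(u) = \ln(1+e^{S_\Delta u})$ decays to $0$ as $S_\Delta u \to -\infty$; without this, the estimate $\frac{d}{dt}(1/x_l^2) \geq 2|\mu|\delta_l$ would collapse. This is exactly what the a priori bound from Step~1 buys us: once $x_l(t)$ is confined to the compact interval $[-|x_{l0}|, |x_{l0}|]$, continuity of $\Delta$ yields the needed $\delta_l > 0$ and the rest is routine.
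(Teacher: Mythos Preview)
Your proof is correct and follows essentially the same line as the paper's: factor the ODE as $\dot x_l = \mu\, x_l \cdot (\text{nonnegative})$ to obtain sign preservation, monotonicity, and boundedness; use boundedness to secure a uniform lower bound on $\Delta(x_l(t))$; then exploit the cubic diagonal term via $\frac{d}{dt}(1/x_l^2) \geq 2|\mu|\delta_l$ (equivalently, the paper integrates the differential inequality $\dot x_l \leq \mu c\, x_l^3$) to conclude $|x_l| = O(1/\sqrt{t})$; and finally multiply rates for $P_{lj}$. Your tactical choices---the integrating-factor representation in place of the paper's uniqueness-of-solutions lemma for sign preservation, and the explicit Bernoulli substitution in place of the paper's comparison inequality---are equivalent packagings of the same argument.
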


The proof of this theorem can be found in Appendix~\ref{appx:scenario_1}.
To intuitively see why all tokens converge in this scenario, we can rewrite the dynamic~\eqref{eq:mamba_dynamic_d=1_mainbody} as
    \begin{equation}\label{eq:thm1-mamba2_mainbody}
        \frac{d}{dt}x_l(t) = \mu x_l(t)^3 \Delta(x_l(t)) + g_l(t) x_l(t),
    \end{equation}
    where 
    \begin{equation*}
        g_l(t) = \left\{ 
        \begin{aligned}
            &0, &\text{if } l=1,\\
            &\sum\limits_{j=1}^{l-1} \mu x_j(t)^2 \Delta(x_j(t)) \exp\left(-a \sum\limits_{k=j+1}^l \Delta(x_k(t)) \right), &\text{if } l>1.
        \end{aligned}
        \right.
    \end{equation*}
    Since $\mu<0$, $g_l(t)$ is a nonpositive function.
    Let us consider the case when \( x_{l0} > 0 \). Then, \( x_l(t) \) must remain positive for every \( t > 0 \); otherwise, \( x_l(t) \) would reach zero at some point, and consequently \( x_l(t) \) would be identically zero for all \( t \), according to the uniqueness of the given initial value problem, which is impossible. As a result, the right-hand side of equation~\eqref{eq:thm1-mamba2_mainbody} is always negative. This implies that \( x_l(t) \) is decreasing. 
    Figure~\ref{fig:dim_one_scenario_1} illustrates the graph of tokens and the heat map of the corresponding hidden attention matrices in this scenario.

    \begin{remark}[{Negative impact on model's performance}]\label{rem:scenario_1}
         In this scenario, both the tokens and hidden attention scores collapse to zero, leaving no room for diversity or randomness, which may be harmful to the model's performance. Our experiment in Section~\ref{subsec:convergence_experiment} verifies these effects in high dimensions.
    \end{remark}

\begin{figure}[ht]
    \centering
    \includegraphics[width=0.56\linewidth]{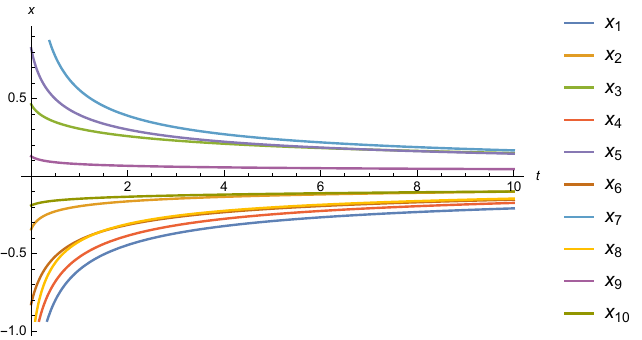}
    \includegraphics[width=0.38\linewidth]{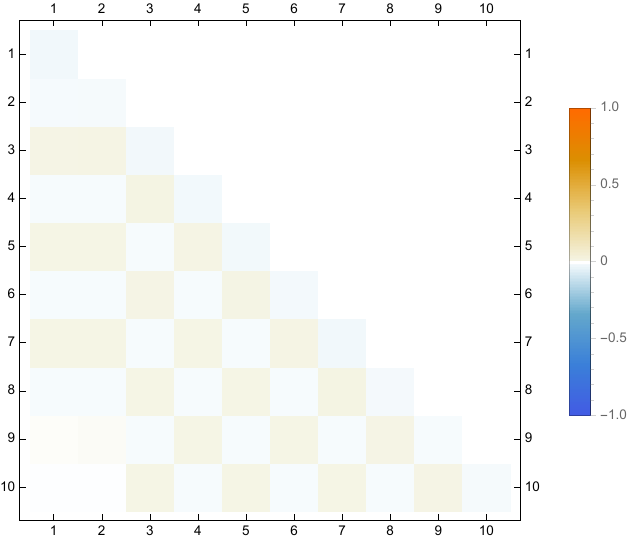}\\
    \caption{The graphs of tokens (left) and the heat map of the hidden attention matrices (right) for the convergence scenario with $L=10$ tokens and parameters $\mu=-1.58$, $S_{\Delta}=-0.17$, $a=-1.08$, as well as initial data $\mathbf{x}(0)=(-1.79, -0.34, 0.46, -1.25, 0.83, -0.83, 1.81, -1.16, 0.13, -0.19)$. In this case, all tokens and hidden attention scores tend to zero as $t$ approaches infinity.}
    \label{fig:dim_one_scenario_1}
\end{figure}

\vspace{-0.2cm}
\subsection{Slow-divergence Scenario: $\mu=S_C^{\top}S_B>0$ and $S_{\Delta}x_{l0}<0$ for all $l$}\label{subsec:case_2}

\vspace{-0.2cm}
In this case, we first observe that all of the tokens will tend to infinity as we will see in the following lemma.

\begin{lemma}[{Slow divergence scenario}]\label{prop:mamba2_mu>0_mainbody}
    Assume that $\mu=S_C^{\top}S_B>0$ and $S_{\Delta}x_{l0}<0$ for all $l$.
    Let $x(t)=(x_1(t),\ldots,x_L(t))$ be the unique solution of the dynamic~\eqref{eq:mamba_dynamic_d=1}.
    For each $l$, if $x_{l0}>0$ (respectively, $x_{l0}<0$), then $x_l(t)$ is positive and monotonically increasing (respectively, negative and monotonically decreasing) on $[0,+\infty)$ with 
        $$\lim_{t \to +\infty}x_l(t) = + \infty \quad (\text{respectively, } \lim_{t \to +\infty}x_l(t) = - \infty).$$ 
\end{lemma}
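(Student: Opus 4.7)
The plan is to combine three ingredients in sequence: sign preservation via ODE uniqueness, monotonicity from the nonnegativity of $\mu$, $\Delta$, and $g_l$, and a compactness/contradiction argument to rule out finite limits.

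First I would appeal to Picard--Lindel\"of: the right-hand side of~\eqref{eq:mamba_dynamic_d=1_mainbody} is smooth in $(x_1,\ldots,x_L)$ (softplus and exponentials are smooth), so there is a unique maximal solution on some interval $[0,T^*)$. To show that each $x_l(t)$ keeps the sign of $x_{l0}$, I would exploit the rewriting~\eqref{eq:thm1-mamba2_mainbody}: treating the other tokens as given, the $l$-th equation has the form $x_l' = x_l\bigl[\mu x_l^2\Delta(x_l)+g_l(t)\bigr]$, so the trivial function $x_l\equiv 0$ is the unique solution of the resulting scalar IVP with zero initial data. Equivalently, one can write
\[
    x_l(t)=x_{l0}\exp\!\Bigl(\int_0^t\bigl[\mu x_l(s)^2\Delta(x_l(s))+g_l(s)\bigr]\,ds\Bigr),
\]
which makes sign preservation manifest.

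With signs fixed, monotonicity is immediate: $\Delta(u)=\ln(1+e^{S_\Delta u})>0$ for every $u\in\mathbb{R}$, the hypothesis gives $\mu>0$, and $g_l(t)$ is a sum of manifestly nonnegative terms (each factor $x_j^2$, $\Delta(x_j)$, $\exp(\cdot)$ is nonnegative and $\mu>0$), hence $g_l(t)\geq 0$. The bracket $\mu x_l^2\Delta(x_l)+g_l(t)$ is therefore strictly positive, so $\frac{d}{dt}x_l$ carries the sign of $x_l$; in particular $x_l$ is strictly increasing when $x_{l0}>0$ and strictly decreasing when $x_{l0}<0$. For the divergence to $\pm\infty$, WLOG take $x_{l0}>0$ so $x_l$ is nondecreasing. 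If $x_l(t)\to x_l^*<+\infty$, then $x_l(t)\in[x_{l0},x_l^*]$ for all $t$, and continuity and positivity of $\Delta$ on this compact interval give $\Delta(x_l(t))\geq c:=\min_{u\in[x_{l0},x_l^*]}\Delta(u)>0$, whence
\[
    \frac{d}{dt}x_l(t)\geq \mu x_l(t)^3\Delta(x_l(t))\geq \mu x_{l0}^3 c>0
\]
and $x_l(t)\geq x_{l0}+\mu x_{l0}^3 c\cdot t\to\infty$, contradicting boundedness.

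The main obstacle is ensuring that $T^*=+\infty$, which the argument above quietly requires. Here the hypothesis $S_\Delta x_{l0}<0$ becomes crucial: by sign preservation, $S_\Delta x_l(t)<0$ for all $t\in[0,T^*)$, so the elementary bound $\ln(1+y)\leq y$ gives $\Delta(x_l(t))\leq e^{S_\Delta x_l(t)}$, and hence the cubic $|x_l(t)|^3\Delta(x_l(t))$ is bounded uniformly by the global maximum of $u^3 e^{-|S_\Delta|u}$ on $u\geq 0$. The remaining contribution $g_l(t)\,x_l$ is at most linear in $x_l$ with a bounded coefficient (each $x_j^2\Delta(x_j)$ factor obeys the same exponential-beats-polynomial bound, and $\exp(-a\sum\Delta)\leq 1$). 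A Gronwall estimate then rules out finite-time blowup, yielding $T^*=+\infty$ and completing the proof.
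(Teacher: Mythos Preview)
Your argument is correct. You and the paper agree on sign preservation and monotonicity, both obtained from the factored form $x_l'=\bigl[\mu x_l^2\Delta(x_l)+g_l(t)\bigr]x_l$ with $g_l\geq 0$. For the divergence step you use the constant lower bound $x_l'\geq \mu c\, x_{l0}^3$, whereas the paper integrates the sharper $x_l'\geq \mu c\, x_l^3$; either forces unboundedness. The genuine difference is in the global-existence step $T^*=+\infty$. The paper proceeds by induction on $l$: for $l=1$ it uses $x_1^3\Delta(x_1)\to 0$ to get $x_1'\leq 1$ eventually, and for $l>1$ it bounds $g_l(t)\leq \sum_{j<l}\mu x_j^2\Delta(x_j)\leq \sum_{j<l} x_j'/x_j = \tfrac{d}{dt}\sum_{j<l}\ln x_j$ and feeds this time-dependent coefficient into a Gronwall-type estimate. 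Your route is more direct: from $S_\Delta x_l(t)<0$ and $\ln(1+y)\leq y$ you extract the uniform bound $|x_j|^k\Delta(x_j)\leq \max_{u\geq 0} u^k e^{-|S_\Delta|u}<\infty$ for $k=2,3$, so both the cubic term and the coefficient $g_l$ are bounded by absolute constants, whence $|x_l'|\leq C_1+C_2|x_l|$ and a linear Gronwall rules out blowup. This is cleaner for the present lemma; the paper's inequality $g_l\leq \tfrac{d}{dt}\sum_{j<l}\ln x_j$ is more work here but is exactly the estimate reused afterwards to obtain the $O((\ln t)^l)$ divergence rate.
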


    Lemma~\ref{prop:mamba2_mu>0_mainbody} shows that all tokens in the considered case approach infinity as $t$ approaches infinity. However, it does not help to estimate the rate of divergence or the asymptotic behavior of the hidden attention score $P_{lj}(t)$. In Theorem~\ref{thm:mu>0_delta<0_mainbody} below, we provide the rate of divergence for the tokens and the asymptotic behavior of the hidden attention scores, but under a certain additional assumption.

\begin{theorem}[{Slow divergence scenario and divergence rate}]\label{thm:mu>0_delta<0_mainbody}
    Assume that $\mu = S_C^{\top}S_B>0$ and
    \begin{equation}
        S_{\Delta}x_{L0} \leq \ldots \leq S_{\Delta}x_{10} \leq -2.12,
    \end{equation}
    (see Theorem~\ref{thm:mu>0_delta<0} for a detailed upper bound).
    Let $x(t)=(x_1(t),\ldots,x_L(t))$ be the unique solution of the dynamic~\eqref{eq:mamba_dynamic_d=1_mainbody}.
    \begin{enumerate}
        \item For each $l$, if $x_{l0}>0$ (respectively, $x_{l0}<0$), then $x_l(t)$ is a positive increasing (respectively, negative decreasing) function on $[0,+\infty)$. In addition, $x_l(t) = O((\ln t)^l)$.

        \item $\lim_{t \to +\infty} P_{lj}(t)=0$ for all $l \geq j$.
    \end{enumerate}
\end{theorem}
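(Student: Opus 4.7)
The plan is to prove part~(1) by strong induction on $l$ and then deduce part~(2) as a corollary of the asymptotic bounds. The monotonicity and sign statements are already contained in Lemma~\ref{prop:mamba2_mu>0_mainbody}, so the new content of the theorem is the logarithmic divergence rate and the vanishing of the hidden attention scores. For the base case $l=1$, the ODE decouples to $\dot{x}_1 = \mu x_1^3 \Delta(x_1)$, and the substitution $v = |S_\Delta||x_1|$ converts it into $\dot{v} = (\mu/S_\Delta^2)\, v^3 \ln(1+e^{-v})$. Using $\ln(1+e^{-v}) \leq e^{-v}$ and separating variables gives $\int_{v_0}^{v(t)} e^v/v^3\, dv \leq \mu t/S_\Delta^2$; combined with the asymptotic $\int e^v/v^3\, dv \sim e^v/v^3$, this yields the implicit bound $v(t) \leq \ln t + 3\ln v(t) + O(1)$, which bootstraps to $v(t) = O(\ln t)$ and hence $|x_1(t)| = O(\ln t)$. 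Running the same argument with the lower bound $\ln(1+e^{-v}) \geq e^{-v}/2$ produces a matching lower bound $v(t) \geq c \ln t$ that will be needed to control $\Delta(x_1(t))$ later.

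For the inductive step, we would assume paired bounds $c_j \ln t \leq |S_\Delta||x_j(t)| \leq C_j (\ln t)^j$ for every $j<l$, and work from the identity $\frac{d}{dt}\ln|x_l(t)| = \mu x_l(t)^2 \Delta(x_l(t)) + \mu \sum_{j<l} x_j(t)^2 \Delta(x_j(t))\, e^{-a\sum_{k=j+1}^l \Delta(x_k(t))}$. The threshold $-2.12$ in the hypothesis is not arbitrary: it is essentially the location of the unique critical point of $g(u) = u^2 \Delta(u)$, found by solving $2\ln(1+e^{-v})(1+e^v) = v$, whose positive root lies near $v^* \approx 2.12$. The hypothesis forces every trajectory to begin past this maximum, so that $g(x_l(t))$ is monotonically decreasing along the flow. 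Combined with the induction lower bound, which forces $\Delta(x_j(t)) \lesssim t^{-c_j}$, this monotonicity lets us bound each cross term so that $\frac{d}{dt}\ln|x_l(t)| \lesssim 1/(t \ln t)$ up to a factor growing at most linearly in $l$; integrating then yields $\ln|x_l(t)| = O(l \ln \ln t)$, i.e., $|x_l(t)| = O((\ln t)^l)$.

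For part~(2), we observe that $\Delta(x_l(t)) \leq e^{S_\Delta x_l(t)}$ with $|S_\Delta x_l(t)| \to \infty$, so $\Delta(x_l(t))$ decays faster than any negative power of $t$; this immediately kills the diagonal term $P_{ll}(t) = \mu x_l(t)^2 \Delta(x_l(t))$, and for $l>j$ the factor $\Delta(x_j(t)) \to 0$ dominates the polylogarithmic growth of $|x_l(t) x_j(t)|$ while the multiplicative $e^{-a\sum \Delta(x_k(t))} \leq 1$ is harmless. The main obstacle will be the inductive step: a naive lower bound $v_j(t) \geq c \ln t$ with $c<1$ only produces $\Delta(x_j(t)) \leq t^{-c}$, which is too weak for the cross-term integrals to converge at the right logarithmic rate. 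Closing the induction requires extracting the sharper asymptotic $v_j(t) = \ln t + 3\ln\ln t + O(1)$ from the base-case bootstrap and propagating this $\ln\ln t$ refinement through the inductive step, carefully tracking its interaction with the monotonicity of $g(x_l(t))$ guaranteed by the $-2.12$ threshold.
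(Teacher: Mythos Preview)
Your plan is workable but follows a harder path than the paper. You correctly identify that the threshold $-2.12$ marks where $g(u)=u^2\Delta(u)$ begins to decrease, and you set up the right logarithmic-derivative identity for $x_l$. To bound the cross terms $\mu\,x_j^2\Delta(x_j)$, however, you propose feeding in explicit two-term asymptotics $v_j(t)=\ln t+3\ln\ln t+O(1)$ and flag their inductive propagation as the main obstacle. The paper sidesteps this entirely: it first proves, by a separate induction exploiting the decrease of $g$ past the threshold, the \emph{pointwise ordering} $x_L(t)\ge\dots\ge x_1(t)$ for all $t$; then the decrease of $g$ gives $x_j^2\Delta(x_j)\le x_1^2\Delta(x_1)=\mu^{-1}(\ln x_1)'$ for every $j$, so summing over $j\le l$ and integrating yields $\ln x_l\le l\ln x_1+C$, i.e.\ $x_l=O((\ln t)^l)$, with no asymptotic expansion anywhere. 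Your route can be salvaged by the same trick: from $\dot{x}_j\ge\mu x_j^3\Delta(x_j)$ one already has $x_j(t)\ge u_j(t)$ for the autonomous solution $u_j$ with the same initial datum, whence $g(x_j)\le g(u_j)=\mu^{-1}(\ln u_j)'$ integrates cleanly to $x_l\le C\prod_{j\le l} u_j=O((\ln t)^l)$, and the sharp $\ln\ln t$ term never needs to be isolated. One minor overstatement in your part~(2): the claim that $\Delta(x_l)$ decays faster than any negative power of $t$ is not supported by your lower bound $v_l\gtrsim\ln t$, which only gives $\Delta(x_l)\lesssim t^{-1}(\ln t)^{O(1)}$; this still suffices to dominate the polylogarithmic growth of $x_lx_j$, so the conclusion survives.
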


\begin{remark}[{Unequally tokens' contribution}]
    Theorem~\ref{thm:mu>0_delta<0_mainbody} shows that, in the one-dimensional case within this divergence scenario, tokens with higher absolute values will diverge to infinity faster than those with smaller absolute values (while all hidden attention scores still converge to zero). Intuitively speaking, this means that different tokens will contribute differently to the updates during training. Based on this observation, we conjecture that reordering tokens based on their importance scores before training might be beneficial for model performance. Our experiment in Subsection~\ref{subsec:reorder} empirically confirms this conjecture.
\end{remark}

The proof of this theorem can be found in Appendix~\ref{appx:scenario_2}. We can intuitively see why the tokens tend to infinity slowly by looking at the values of the right-hand side of equation~\eqref{eq:thm1-mamba2_mainbody}. Indeed, let us assume that $S_{\Delta}<0$ and thus $x_{l0}>0$. Then $x_l(t)$ must always be positive as $t$ increases; otherwise, \( x_l(t) \) would reach zero at some point, and consequently \( x_l(t) \) would be identically zero for all \( t \), according to the uniqueness of the given initial value problem, which is impossible. As a result, the right-hand side of equation~\eqref{eq:thm1-mamba2_mainbody} is always not too large. This implies that \( x_l(t) \) is decreasing. However, since $S_{\Delta}<0$, the term $\mu x_l(t)^3 \Delta(x_l(t))$ converges to zero very quickly, which keeps the right-hand side always not too large. Therefore, $x_l(t)$ goes to infinity slowly. Figure~\ref{fig:dim_one_scenario_2} illustrates the graph of tokens and the heat map of the corresponding hidden attention matrices in this scenario.

\begin{figure}[ht]
    \centering
    \includegraphics[width=0.55\linewidth]{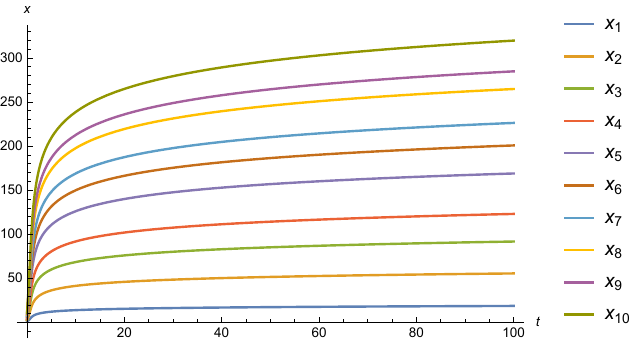}
    \includegraphics[width=0.38\linewidth]{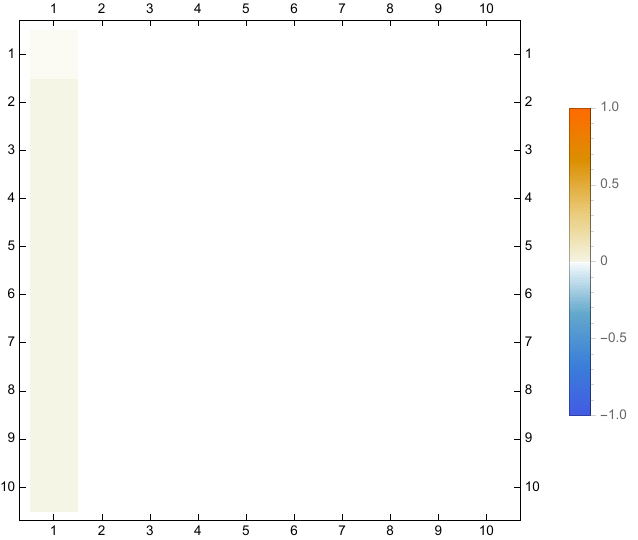}\\
    \caption{The graphs of tokens (left) and the heat map of the hidden attention matrices (right) for the convergence scenario with $L=10$ tokens and the model parameters $\mu=1.79$, $S_{\Delta}=-0.71$, $a=-1.80$, as well as the initial data $\mathbf{x}(0)=(1.55, 2.84, 3.81, 4.57, 5.99, 6.94, 7.71, 8.96, 9.59, 10.75)$. In this case, tokens tend to infinity at the log-rate, and the one which larger initial value diverges faster, while the hidden attention scores tend to zero as $t$ approaches infinity. In addition, the hidden attention scores from the second columns tend to zero must faster than those in the first column.} 
    \label{fig:dim_one_scenario_2}
\end{figure}

\vspace{-0.2cm}
\subsection{Fast-Divergence Scenario: $\mu=S_C^{\top}S_B>0$ and $S_{\Delta}x_{l0}>0$ for some $l$}

\vspace{-0.2cm}
In the fast divergence scenario, we have the following theorem.

\begin{theorem}[{Fast divergence scenario}]\label{thm:mamba2_mu>0_case3_mainbody}
    Assume that $\mu=S_C^{\top}S_B>0$.
    Let $x(t)=(x_1(t),\ldots,x_L(t))$ be the unique solution of the dynamic~\eqref{eq:mamba_dynamic_d=1}.
    If there exists $l=1,\ldots,L$ such that $S_{\Delta}x_{l0}>0$, then $x_l(t)$ goes to infinity at finite time.

    In addition, assume that $x_{l_0}(t)$ tends to infinity first as $t \to T^-$ for some $T>0$ while $x_l(t)$ is defined on $[0,T]$ for all $l \neq l_0$. Then for every $L \geq l\geq j \geq 1$,  $\lim_{t \to T^-}P_{lj}(t)=+\infty$ if and only if $l=l_0$ or $j=l_0$.
\end{theorem}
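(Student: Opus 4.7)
The plan splits into the two assertions. For the finite-time blow-up of $x_{l_0}$, I would first reduce, without loss of generality, to the subcase $S_{\Delta}>0$ and $x_{l_0,0}>0$; the opposite-sign subcase follows by the substitution $x\mapsto -x$. Sign preservation for $x_{l_0}$ comes from the standard uniqueness argument applied to the linear-in-$x_{l_0}$ form $\dot x_{l_0}=f(t)\,x_{l_0}$ with $f(t)=\mu\,x_{l_0}(t)^{2}\Delta(x_{l_0}(t))+g_{l_0}(t)$: since the zero function solves this linear ODE, a trajectory starting from a nonzero value cannot reach zero. Because $\mu>0$ and the softplus is strictly positive, every summand defining $g_{l_0}(t)$ is nonnegative, so $g_{l_0}\ge 0$. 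Consequently $x_{l_0}$ is positive and monotonically increasing, hence $\Delta(x_{l_0}(t))\ge\Delta(x_{l_0,0})=:c>0$, and
\[
\dot x_{l_0}(t)\;\ge\;\mu\,x_{l_0}(t)^{3}\Delta(x_{l_0}(t))\;\ge\;\mu c\,x_{l_0}(t)^{3}.
\]
A comparison with the scalar ODE $\dot y=\mu c\,y^{3}$, which blows up at $T^{\ast}=1/(2\mu c\,x_{l_0,0}^{2})$, forces $x_{l_0}$ to leave every compact set at some finite time $T\le T^{\ast}$.

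For the attention-score assertion, fix this blow-up time $T$ and analyze $P_{lj}(t)$ as $t\to T^-$. Since $S_{\Delta}x_{l_0}(t)\to+\infty$, the softplus asymptotic gives $\Delta(x_{l_0}(t))\sim S_{\Delta}x_{l_0}(t)\to+\infty$. The diagonal case $l=j=l_{0}$ is immediate from $P_{l_0l_0}(t)=\mu\,x_{l_0}(t)^{2}\Delta(x_{l_0}(t))\to+\infty$. For $l>l_{0}=j$, the exponential factor $\exp(-a\sum_{k=l_{0}+1}^{l}\Delta(x_{k}(t)))$ does not involve $\Delta(x_{l_0})$; since by hypothesis each $x_{k}$ with $k\ne l_{0}$ stays finite on $[0,T]$, this exponential is bounded below by a positive constant, while $|x_{l}(t)|$ is bounded away from $0$ by sign preservation, so the divergence of $x_{l_0}\Delta(x_{l_0})$ forces $P_{l,l_{0}}(t)\to+\infty$. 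For pairs with $l,j\ne l_{0}$ (and $l\ge j$), either the exponential stays bounded and the prefactors remain finite, or the sum $\sum_{k=j+1}^{l}$ includes $k=l_{0}$ and the exponential collapses to $0$; in neither situation does $P_{lj}$ tend to $+\infty$, yielding the ``only if'' direction.

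The main obstacle is the off-diagonal case $l=l_{0}>j$. Here the exponential in $P_{l_{0},j}(t)=\mu\,x_{l_{0}}(t)x_{j}(t)\Delta(x_{j}(t))\exp\!\bigl(-a\sum_{k=j+1}^{l_{0}}\Delta(x_{k}(t))\bigr)$ contains the blowing-up term $-a\Delta(x_{l_{0}})$, while the prefactor $x_{l_{0}}$ itself diverges, so the limit hinges on a delicate race between polynomial blow-up and super-exponential decay. To settle it, I would extract a sharp blow-up profile $x_{l_{0}}(t)\sim c_{0}(T-t)^{-1/3}$ from the dominant behavior $\dot x_{l_{0}}\sim \mu S_{\Delta}\,x_{l_{0}}^{4}$, combine this with the bounded behavior of the other $x_{k}$ supplied by the hypothesis, and carefully match the resulting asymptotic to the $+\infty$ claim (refining the statement to absolute values if warranted by the rate analysis). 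The other subcases reduce to the sign-preservation and monotonicity techniques already developed in Sections~\ref{subsec:case_1}--\ref{subsec:case_2}.
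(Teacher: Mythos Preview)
Your treatment of the finite-time blow-up matches the paper's proof essentially step for step: reduce to $S_\Delta>0$, use sign preservation and the nonnegativity of $g_l$ to obtain monotonicity, bound $\Delta(x_l(t))\ge\Delta(x_{l0})=:c$, and compare with $\dot y=\mu c\,y^{3}$.

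For the attention-score assertion your analysis is already more careful than the paper's, which dispatches the entire second claim in a single unjustified sentence and never separates the off-diagonal cases. You are right to isolate $l=l_0>j$ as the crux---but your own proposed asymptotics actually resolve it \emph{against} the stated conclusion, not in its favor. With $S_\Delta>0$ and $x_{l_0}(t)\to+\infty$ one has $\Delta(x_{l_0}(t))\sim S_\Delta\,x_{l_0}(t)$, so the factor in $P_{l_0,j}$ that carries the blow-up satisfies
\[
x_{l_0}(t)\,\exp\!\bigl(-a\,\Delta(x_{l_0}(t))\bigr)\;\sim\;x_{l_0}(t)\,e^{-aS_\Delta\,x_{l_0}(t)}\;\longrightarrow\;0,
\]
since $a>0$; the remaining factors $x_j(t)\Delta(x_j(t))$ and $\exp\bigl(-a\sum_{k=j+1}^{l_0-1}\Delta(x_k(t))\bigr)$ are bounded on $[0,T]$ by hypothesis. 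Hence $P_{l_0,j}(t)\to 0$, not $+\infty$, and passing to absolute values cannot save this. Your blow-up profile $x_{l_0}\sim c_0(T-t)^{-1/3}$ only sharpens the picture: exponential decay in $x_{l_0}$ beats any power. So the ``if'' direction of the biconditional fails for $l=l_0>j$; the paper's one-line argument simply does not confront this case, and your instinct that the statement might need ``refining'' was correct---what needs refining is the theorem itself, not the proof.
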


\begin{remark}
    The proof of this theorem can be found in Appendix~\ref{appx:scenario_3}. Intuitively speaking, in this scenario, if $1 \leq l \leq L$ is the index of the first token such that $S_{\Delta} x_{l0} > 0$, then all tokens with indices $j \geq l$ will diverge to infinity in finite time. 
    Figure~\ref{fig:dim_one_scenario_3} illustrates the dynamical properties of tokens and the hidden attention scores in this scenario.
\end{remark}

\begin{figure}[ht]
    \centering
    \includegraphics[width=0.55\linewidth]{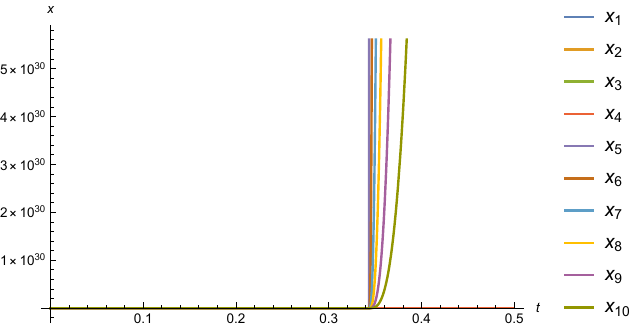}
    \includegraphics[width=0.38\linewidth]{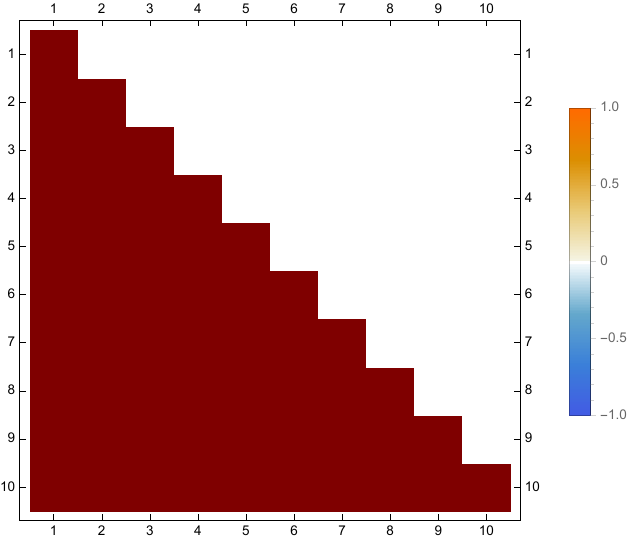}
    \caption{The graphs of tokens (left) and the heat map of the hidden attention matrices (right) for the convergence scenario with $L=10$ tokens and the model parameters $\mu=0.76$, $S_{\Delta}=0.59$, $a=1.66$, as well as the initial data $\mathbf{x}(0)=(0.83, 0.91, 0.64, 0.78, 0.66, 0.99, 0.68, 0.72, 0.61, 0.90)$. In this case, tokens and the hidden attention scores tend to infinity very quickly at finite time.}
    \label{fig:dim_one_scenario_3}
\end{figure}

\vspace{-0.2cm}
\section{Experiments}\label{sec:experiment}

\vspace{-0.2cm}
In this section, we empirically validate two key observations derived from the theoretical analysis presented in Section \ref{sec:dynamic_main}. We aim to (i) demonstrate the detrimental impact of negative eigenvalues in the input-output matrix $S_C^\top S_B$ on the performance of auto-regressive language modeling tasks and (ii) propose a reordering strategy and illustrate its effectiveness in vision tasks. Details on datasets, models, and training procedures are provided in Appendix~\ref{appendix:ex_detail}.

\vspace{-0.2cm}
\subsection{Effects of negative eigenvalues of input-output matrix} \label{subsec:convergence_experiment}

\vspace{-0.2cm}
We use Mamba \citep{gu2024mamba} as the baseline and conduct evaluations on the {\sc WikiText103} language modeling task \citep{merity2017pointer}. We consider three distinct scenarios, including cases where the input-output matrix contains (i) only positive eigenvalues, (ii) only negative eigenvalues, as discussed in Section \ref{sec:dynamic_main} and Appendix \ref{appx:higher_dim}, and (iii) both positive and negative eigenvalues as an intermediate (mixed) case. The scenario where $S_C^\top S_B$ encompasses complex eigenvalues is deferred to future work. We report the perplexity (PPL) for each scenario. Lower PPL values indicate better model performance.



\begin{minipage}{0.39\textwidth}
    \centering
    \captionof{table}{Test perplexity on {\sc WikiText103}. A model where $S_C^\top S_B$ has a smaller proportion of negative eigenvalues achieves lower perplexity.}
    \begin{tabular}{lc}
    \toprule
    Scenario & Perplexity  ($\downarrow$) \\ \midrule
    Negative & 17.28      \\ 
    Mixed      & 16.82     \\ 
    Positive & 16.66      \\ \bottomrule
    \end{tabular}
    \label{tab:ex:ppl_wt103}
\end{minipage}
\hfill
\begin{minipage}{0.59\textwidth}
    \centering
    \includegraphics[width=0.7\linewidth]{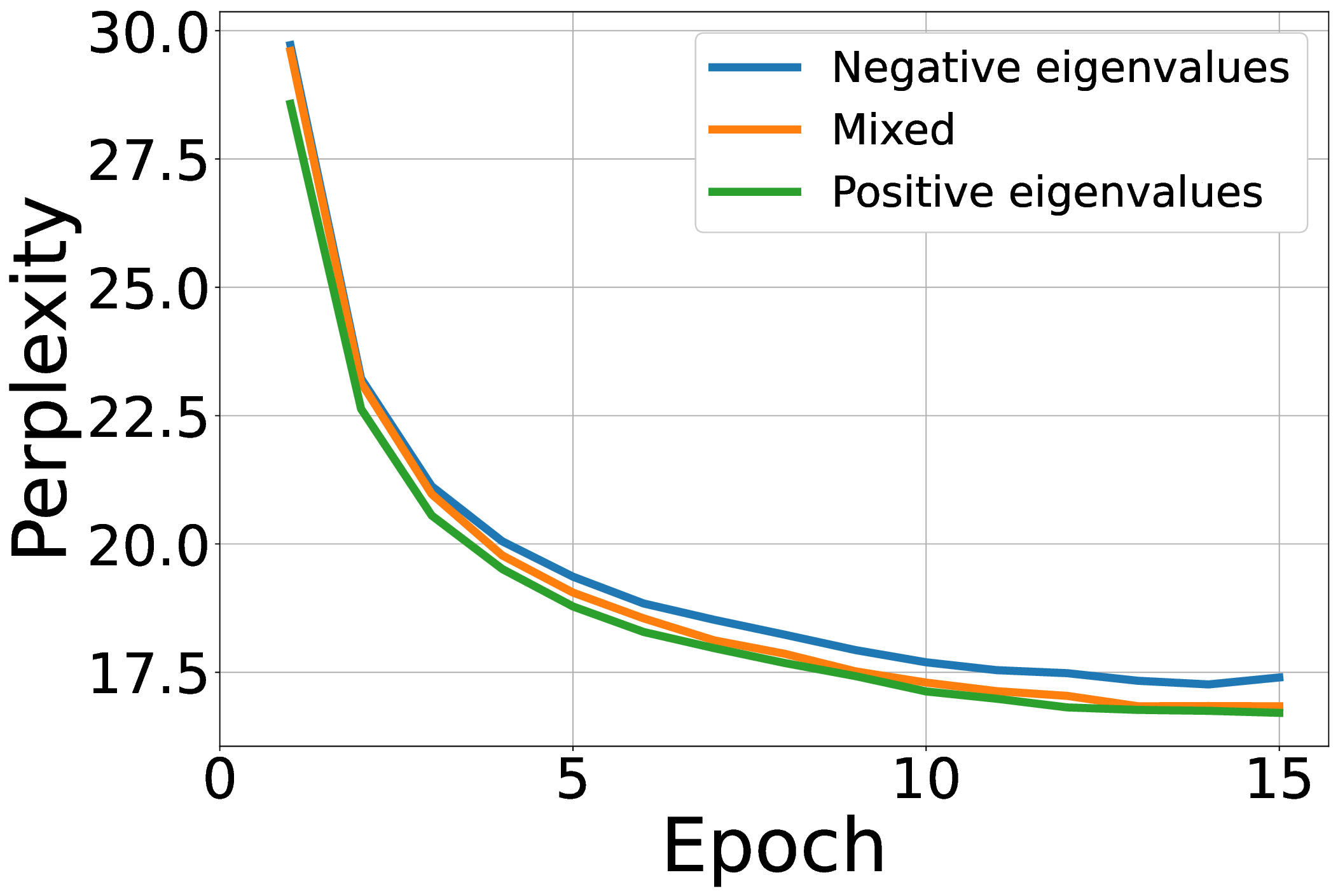}
    \captionof{figure}{Test perplexity on {\sc WikiText103} during training procedure. The positive case consistently demonstrates superior performance compared to the other two scenarios.}
    \label{fig:WT_103_PPL}
\end{minipage}

Table \ref{tab:ex:ppl_wt103} and Figure \ref{fig:WT_103_PPL} illustrate the impact of different eigenvalue configurations on the model's performance. Specifically, our findings demonstrate a correlation between the prevalence of positive eigenvalues in the input-output matrix and model performance. Conversely, the presence of negative eigenvalues is linked to a decrease in performance, suggesting that removing negative eigenvalues could further optimize the model's efficiency.
  

\vspace{-0.2cm}
\subsection{Re-ordering input tokens} \label{subsec:reorder}

\vspace{-0.2cm}
We now emphasize the significance of token ordering in vision tasks, where tokens naturally lack an inherent structure due to each token representing a patch from a 2D or 3D image. As indicated in Theorem~\ref{thm:mu>0_delta<0_mainbody}, the contribution of each token to model updates is primarily influenced by the term $S_\Delta x_{l0}$, which is a vector of dimension $D$ in a scenario with $D$ channels. To determine the optimal order of tokens $x_{l0}$, we define an importance score as follows:
\begin{align}
   s_l = \langle K, S_\Delta x_{l0} \rangle
   \label{eq:importance_score}
\end{align}
where $K \in \mathbb{R}^D$ is a learnable parameter. Prior to passing the tokens through the $\operatorname{S6}$ layer, they are reordered based on their importance scores in ascending order. We use $\operatorname{SoftSort}$ \citep{prillo2020softsort} to enable gradient propagation through the sorting operation. Specifically, $\operatorname{SoftSort}$ is employed to generate a permutation matrix, which is then applied to sort the $s_l$ array and rearrange the tokens $x_{l0}$ accordingly. \black{Details of the procedure and an analysis of the computational cost are provided in Appendix~\ref{appendix:alg_cost}.}

To assess the effectiveness of our proposed reordering technique, we benchmark on the image classification task using the ImageNet-1K dataset \citep{deng2009imagenet}. We employ MambaVision \citep{hatamizadeh2024mambavision} as the baseline and compare the top-1 and top-5 accuracy metrics.

\begin{table}[!ht]
\centering
\caption{Top-1 and Top-5 accuracy ($\uparrow$) on the large-scale ImageNet-1K image classification task. Our token reordering method leads to a slight improvement in MambaVision's performance.}
\begin{tabular}{lcc}
\toprule
Model                      & Top-1 (\%) & Top-5 (\%) \\ \midrule
MambaVision-T (baseline)             & 81.90      & 95.86      \\
MambaVision-T + Token reordering & \textbf{82.02}      & \textbf{95.87}  \\ \bottomrule
\end{tabular}
\label{tab:ex:imagenet}
\end{table}

\begin{figure}[!ht]
    \centering
    \begin{minipage}{0.4\textwidth}
        \centering
        \includegraphics[width=\textwidth]{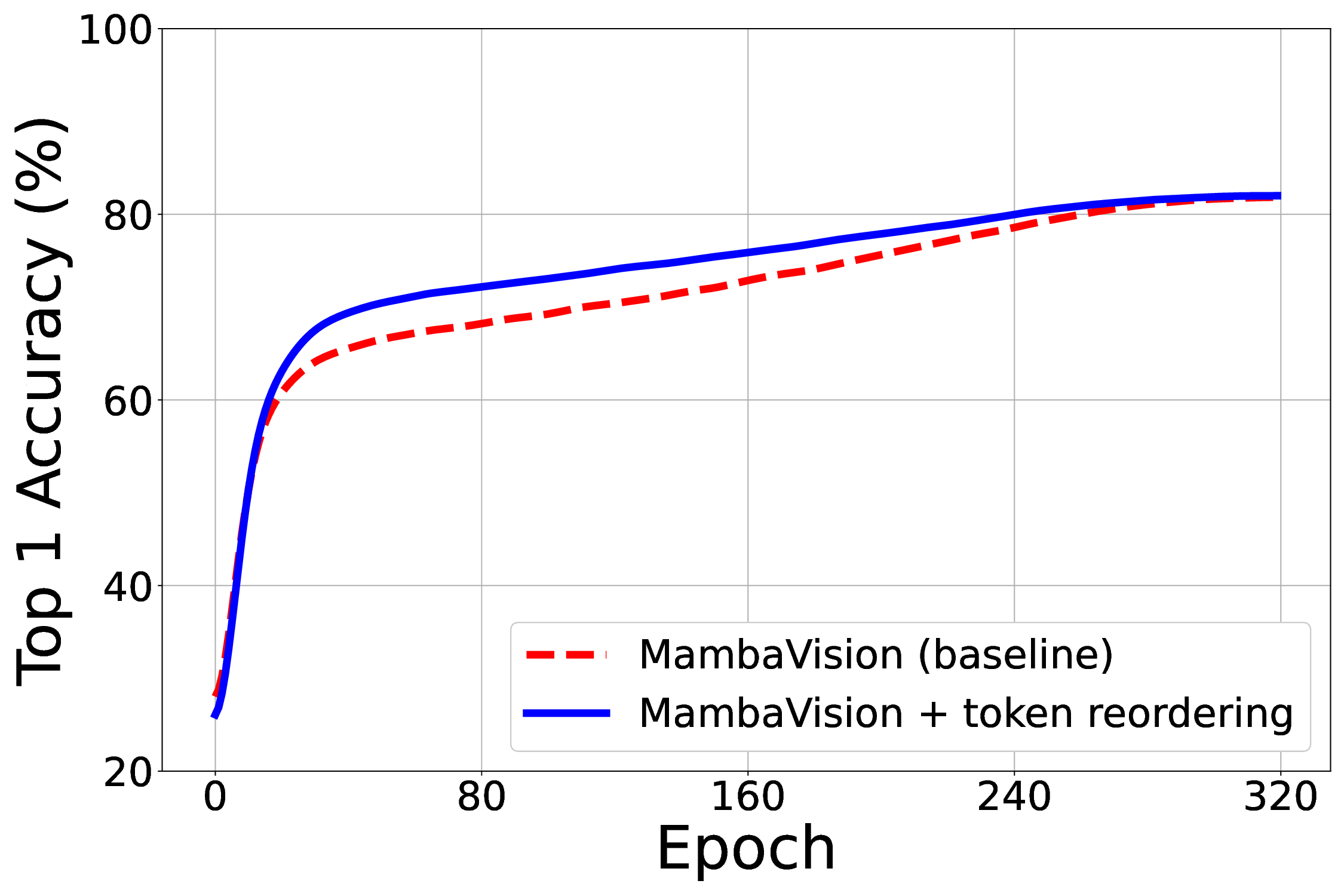} 
        \label{fig:imagenet_top1}
    \end{minipage}\hfill
    \begin{minipage}{0.4\textwidth}
        \centering
        \includegraphics[width=\textwidth]{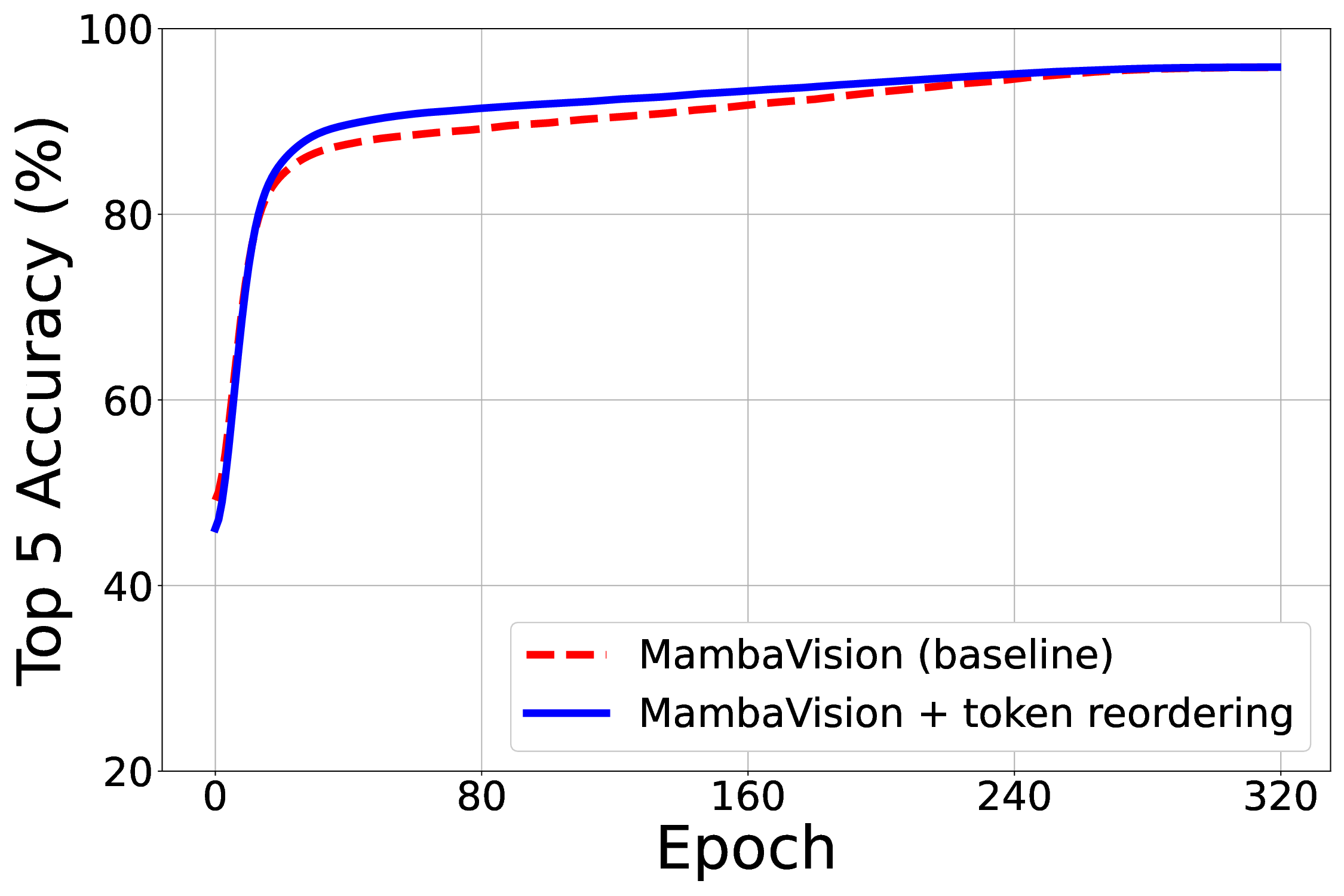} 
        \label{fig:imagenet_top5}
    \end{minipage}
    \caption{Top-1 (left) and Top-5 (right) accuracy ($\uparrow$) on ImageNet-1K during the training process. Our token reordering method boosts the accuracy of the MambaVision baseline and achieves faster convergence compared to the baseline.}
    \label{fig:imagenet}
\end{figure}

The results presented in Figure \ref{fig:imagenet} and Table \ref{tab:ex:imagenet} demonstrate that our reordering method 
speeds up the convergence of the training loss and improves the performance of the MambaVision baseline. These findings confirm the effectiveness of our technique in vision tasks, highlighting its potential for broader tasks. Experiments on language modeling and text classification are in Appendix~\ref{sec:app:ex:reorder}.

\vspace{-0.2cm}
\section{Concluding Remarks}
\label{sec:conclusion}

\vspace{-0.2cm}
We characterize the dynamical properties of tokens in a pre-trained Mamba model. In the one-dimensional case, we show that either all tokens converge to zero or all tokens diverge to infinity. In addition, for the convergent scenario, we empirically verify that this scenario negatively impacts the model's performance; therefore, it should be excluded from model training. For the divergent scenario, we prove that tokens at different locations will diverge to infinity at different rates, thereby contributing unequally to the updates during model training. Based on these investigations, we propose two refinements for the model: excluding the convergent scenario and reordering tokens based on their importance scores, both aimed at improving practical performance. Our experimental results suggest that these refinements do help improve model performance. 

Regarding limitations, our theoretical analysis of the dynamical properties of tokens is restricted to tokens with one feature channel. 
{\color{black} In addition, our token reordering refinement, based on importance scores, introduces additional computational overhead during training, although this increase is relatively small. Furthermore, we are uncertain whether the method of reordering tokens based on their orthogonal projection onto a learnable affine line, as used in practical implementation, is optimal. We leave a systematic study of token reordering refinements as well as generalization of our theoretical analysis of higher-dimensional cases for future work.}




\newpage
\subsubsection*{Acknowledgments}
This research / project is supported by the National Research Foundation Singapore under the AI
Singapore Programme (AISG Award No: AISG2-TC-2023-012-SGIL). This research / project is
supported by the Ministry of Education, Singapore, under the Academic Research Fund Tier 1
(FY2023) (A-8002040-00-00, A-8002039-00-00). This research / project is supported by the
NUS Presidential Young Professorship Award (A-0009807-01-00) and the NUS Artificial Intelligence Institute--Seed Funding (A-8003062-00-00).
This research / project is also supported by Singapore National Academy of Science under the SASEA Fellowship Programme (Award No: NRF-MP-2025-0001).

\textbf{Reproducibility Statement.} Our implementation details are provided in Section ~\ref{sec:experiment} and Appendix ~\ref{appendix:ex_detail}. Source code is provided at \url{https://github.com/Fsoft-AIC/Mamba-token-dynamic}. 

\textbf{Ethics Statement.} Considering the scope of our research, we do not anticipate any negative societal or ethical consequences emerging from this work.

\bibliography{iclr2025_conference}

\begin{thebibliography}{68}
\providecommand{\natexlab}[1]{#1}
\providecommand{\url}[1]{\texttt{#1}}
\expandafter\ifx\csname urlstyle\endcsname\relax
  \providecommand{\doi}[1]{doi: #1}\else
  \providecommand{\doi}{doi: \begingroup \urlstyle{rm}\Url}\fi

\bibitem[Abdullaev \& Nguyen(2025)Abdullaev and
  Nguyen]{abdullaev2025transformer}
Laziz Abdullaev and Tan~Minh Nguyen.
\newblock Transformer meets twicing: Harnessing unattended residual
  information.
\newblock In \emph{The Thirteenth International Conference on Learning
  Representations}, 2025.
\newblock URL \url{https://openreview.net/forum?id=16kG5aNleS}.

\bibitem[Ahamed \& Cheng(2024)Ahamed and Cheng]{ahamed2024mambatab}
Md~Atik Ahamed and Qiang Cheng.
\newblock Mambatab: A plug-and-play model for learning tabular data.
\newblock In \emph{2024 IEEE 7th International Conference on Multimedia
  Information Processing and Retrieval (MIPR)}, pp.\  369--375, 2024.
\newblock \doi{10.1109/MIPR62202.2024.00065}.

\bibitem[Ali et~al.(2024)Ali, Zimerman, and Wolf]{ali2024hidden}
Ameen Ali, Itamar Zimerman, and Lior Wolf.
\newblock The hidden attention of mamba models.
\newblock \emph{arXiv preprint arXiv:2403.01590}, 2024.

\bibitem[Chen et~al.(2018)Chen, Rubanova, Bettencourt, and
  Duvenaud]{chen2018neural}
Ricky~TQ Chen, Yulia Rubanova, Jesse Bettencourt, and David~K Duvenaud.
\newblock Neural ordinary differential equations.
\newblock \emph{Advances in neural information processing systems}, 31, 2018.

\bibitem[Cheng et~al.(2025)Cheng, Li, Lin, and Shen]{cheng2025interpolation}
Jingpu Cheng, Qianxiao Li, Ting Lin, and Zuowei Shen.
\newblock Interpolation, approximation, and controllability of deep neural
  networks.
\newblock \emph{SIAM Journal on Control and Optimization}, 63\penalty0
  (1):\penalty0 625--649, 2025.

\bibitem[Dao \& Gu(2024)Dao and Gu]{dao2024transformers}
Tri Dao and Albert Gu.
\newblock Transformers are ssms: Generalized models and efficient algorithms
  through structured state space duality.
\newblock In \emph{International Conference on Machine Learning}, pp.\
  10041--10071. PMLR, 2024.

\bibitem[Deng et~al.(2009)Deng, Dong, Socher, Li, Li, and
  Fei-Fei]{deng2009imagenet}
Jia Deng, Wei Dong, Richard Socher, Li-Jia Li, Kai Li, and Li~Fei-Fei.
\newblock Imagenet: A large-scale hierarchical image database.
\newblock In \emph{2009 IEEE Conference on Computer Vision and Pattern
  Recognition}, pp.\  248--255, 2009.
\newblock \doi{10.1109/CVPR.2009.5206848}.

\bibitem[Fu et~al.(2023)Fu, Dao, Saab, Thomas, Rudra, and R{\'e}]{fu2023hungry}
Daniel~Y. Fu, Tri Dao, Khaled~K. Saab, Armin~W. Thomas, Atri Rudra, and
  Christopher R{\'e}.
\newblock Hungry {H}ungry {H}ippos: Towards language modeling with state space
  models.
\newblock In \emph{International Conference on Learning Representations}, 2023.

\bibitem[Geshkovski et~al.(2023)Geshkovski, Letrouit, Polyanskiy, and
  Rigollet]{geshkovski2023mathematical}
Borjan Geshkovski, Cyril Letrouit, Yury Polyanskiy, and Philippe Rigollet.
\newblock A mathematical perspective on transformers.
\newblock \emph{arXiv preprint arXiv:2312.10794}, 2023.

\bibitem[Geshkovski et~al.(2024)Geshkovski, Letrouit, Polyanskiy, and
  Rigollet]{geshkovski2024emergence}
Borjan Geshkovski, Cyril Letrouit, Yury Polyanskiy, and Philippe Rigollet.
\newblock The emergence of clusters in self-attention dynamics.
\newblock \emph{Advances in Neural Information Processing Systems}, 36, 2024.

\bibitem[Goel et~al.(2022)Goel, Gu, Donahue, and R{\'e}]{goel2022s}
Karan Goel, Albert Gu, Chris Donahue, and Christopher R{\'e}.
\newblock It’s raw! audio generation with state-space models.
\newblock In \emph{International Conference on Machine Learning}, pp.\
  7616--7633. PMLR, 2022.

\bibitem[Golub \& Loan(2013)Golub and
  Loan]{golub_van_loan_2013_matrix_computations}
Gene~H. Golub and Charles F.~Van Loan.
\newblock \emph{Matrix Computations}.
\newblock The Johns Hopkins University Press, Baltimore, Maryland, 4th edition,
  2013.
\newblock ISBN 978-1-4214-0794-4.

\bibitem[Gu \& Dao(2024)Gu and Dao]{gu2024mamba}
Albert Gu and Tri Dao.
\newblock Mamba: Linear-time sequence modeling with selective state spaces.
\newblock In \emph{First Conference on Language Modeling}, 2024.
\newblock URL \url{https://openreview.net/forum?id=tEYskw1VY2}.

\bibitem[Gu et~al.(2020)Gu, Dao, Ermon, Rudra, and R{\'e}]{gu2020hippo}
Albert Gu, Tri Dao, Stefano Ermon, Atri Rudra, and Christopher R{\'e}.
\newblock Hippo: Recurrent memory with optimal polynomial projections.
\newblock \emph{Advances in neural information processing systems},
  33:\penalty0 1474--1487, 2020.

\bibitem[Gu et~al.(2021)Gu, Johnson, Goel, Saab, Dao, Rudra, and
  R{\'e}]{gu2021combining}
Albert Gu, Isys Johnson, Karan Goel, Khaled Saab, Tri Dao, Atri Rudra, and
  Christopher R{\'e}.
\newblock Combining recurrent, convolutional, and continuous-time models with
  linear state space layers.
\newblock \emph{Advances in neural information processing systems},
  34:\penalty0 572--585, 2021.

\bibitem[Gu et~al.(2022{\natexlab{a}})Gu, Goel, Gupta, and
  R{\'e}]{gu2022parameterization}
Albert Gu, Karan Goel, Ankit Gupta, and Christopher R{\'e}.
\newblock On the parameterization and initialization of diagonal state space
  models.
\newblock \emph{Advances in Neural Information Processing Systems},
  35:\penalty0 35971--35983, 2022{\natexlab{a}}.

\bibitem[Gu et~al.(2022{\natexlab{b}})Gu, Goel, and Re]{gu2022efficiently}
Albert Gu, Karan Goel, and Christopher Re.
\newblock Efficiently modeling long sequences with structured state spaces.
\newblock In \emph{International Conference on Learning Representations},
  2022{\natexlab{b}}.
\newblock URL \url{https://openreview.net/forum?id=uYLFoz1vlAC}.

\bibitem[Gupta et~al.(2022)Gupta, Gu, and Berant]{gupta2022diagonal}
Ankit Gupta, Albert Gu, and Jonathan Berant.
\newblock Diagonal state spaces are as effective as structured state spaces.
\newblock \emph{Advances in Neural Information Processing Systems},
  35:\penalty0 22982--22994, 2022.

\bibitem[Haber \& Ruthotto(2017)Haber and Ruthotto]{haber2017stable}
Eldad Haber and Lars Ruthotto.
\newblock Stable architectures for deep neural networks.
\newblock \emph{Inverse problems}, 34\penalty0 (1):\penalty0 014004, 2017.

\bibitem[Hatamizadeh \& Kautz(2024)Hatamizadeh and
  Kautz]{hatamizadeh2024mambavision}
Ali Hatamizadeh and Jan Kautz.
\newblock Mambavision: A hybrid mamba-transformer vision backbone.
\newblock \emph{arXiv preprint arXiv:2407.08083}, 2024.

\bibitem[Hatamizadeh et~al.(2023)Hatamizadeh, Yin, Heinrich, Kautz, and
  Molchanov]{hatamizadeh2023global}
Ali Hatamizadeh, Hongxu Yin, Greg Heinrich, Jan Kautz, and Pavlo Molchanov.
\newblock Global context vision transformers.
\newblock In \emph{International Conference on Machine Learning}, pp.\
  12633--12646. PMLR, 2023.

\bibitem[Jelassi et~al.(2024)Jelassi, Brandfonbrener, Kakade, and
  Malach]{jelassi2024repeat}
Samy Jelassi, David Brandfonbrener, Sham~M Kakade, and Eran Malach.
\newblock Repeat after me: Transformers are better than state space models at
  copying.
\newblock In \emph{International Conference on Machine Learning}, pp.\
  21502--21521. PMLR, 2024.

\bibitem[Kalman(1960)]{kalman1960new}
RE~Kalman.
\newblock A new approach to linear filtering and prediction problems.
\newblock \emph{Journal of Basic Engineering}, 82\penalty0 (1):\penalty0
  35--45, 1960.

\bibitem[Kosma et~al.(2023)Kosma, Nikolentzos, and Vazirgiannis]{kosma2023time}
Chrysoula Kosma, Giannis Nikolentzos, and Michalis Vazirgiannis.
\newblock Time-parameterized convolutional neural networks for irregularly
  sampled time series.
\newblock \emph{arXiv preprint arXiv:2308.03210}, 2023.

\bibitem[Lan et~al.(2020)Lan, Chen, Goodman, Gimpel, Sharma, and
  Soricut]{lan2020albert}
Zhenzhong Lan, Mingda Chen, Sebastian Goodman, Kevin Gimpel, Piyush Sharma, and
  Radu Soricut.
\newblock Albert: A lite bert for self-supervised learning of language
  representations.
\newblock In \emph{International Conference on Machine Learning}. PMLR, 2020.

\bibitem[LeCun et~al.(2015)LeCun, Bengio, and Hinton]{lecun2015deep}
Yann LeCun, Yoshua Bengio, and Geoffrey Hinton.
\newblock Deep learning.
\newblock \emph{Nature}, 521\penalty0 (7553):\penalty0 436--444, 2015.

\bibitem[Lenz et~al.(2025)Lenz, Lieber, Arazi, Bergman, Manevich, Peleg,
  Aviram, Almagor, Fridman, Padnos, Gissin, Jannai, Muhlgay, Zimberg, Gerber,
  Dolev, Krakovsky, Safahi, Schwartz, Cohen, Shachaf, Rozenblum, Bata, Blass,
  Magar, Dalmedigos, Osin, Fadlon, Rozman, Danos, Gokhman, Zusman, Gidron,
  Ratner, Gat, Rozen, Fried, Leshno, Antverg, Abend, Dagan, Cohavi, Alon,
  Belson, Cohen, Gilad, Glozman, Lev, Shalev-Shwartz, Meirom, Delbari, Ness,
  Asida, Gal, Braude, Pumerantz, Cohen, Belinkov, Globerson, Levy, and
  Shoham]{lenz2025jamba}
Barak Lenz, Opher Lieber, Alan Arazi, Amir Bergman, Avshalom Manevich, Barak
  Peleg, Ben Aviram, Chen Almagor, Clara Fridman, Dan Padnos, Daniel Gissin,
  Daniel Jannai, Dor Muhlgay, Dor Zimberg, Edden~M. Gerber, Elad Dolev, Eran
  Krakovsky, Erez Safahi, Erez Schwartz, Gal Cohen, Gal Shachaf, Haim
  Rozenblum, Hofit Bata, Ido Blass, Inbal Magar, Itay Dalmedigos, Jhonathan
  Osin, Julie Fadlon, Maria Rozman, Matan Danos, Michael Gokhman, Mor Zusman,
  Naama Gidron, Nir Ratner, Noam Gat, Noam Rozen, Oded Fried, Ohad Leshno, Omer
  Antverg, Omri Abend, Or~Dagan, Orit Cohavi, Raz Alon, Ro'i Belson, Roi Cohen,
  Rom Gilad, Roman Glozman, Shahar Lev, Shai Shalev-Shwartz, Shaked~Haim
  Meirom, Tal Delbari, Tal Ness, Tomer Asida, Tom~Ben Gal, Tom Braude, Uriya
  Pumerantz, Josh Cohen, Yonatan Belinkov, Yuval Globerson, Yuval~Peleg Levy,
  and Yoav Shoham.
\newblock Jamba: Hybrid transformer-mamba language models.
\newblock In \emph{The Thirteenth International Conference on Learning
  Representations}, 2025.
\newblock URL \url{https://openreview.net/forum?id=JFPaD7lpBD}.

\bibitem[Li et~al.(2024{\natexlab{a}})Li, Li, Wang, He, Wang, Wang, and
  Qiao]{li2024videomamba}
Kunchang Li, Xinhao Li, Yi~Wang, Yinan He, Yali Wang, Limin Wang, and Yu~Qiao.
\newblock Videomamba: State space model for efficient video understanding.
\newblock In \emph{European Conference on Computer Vision}, pp.\  237--255.
  Springer, 2024{\natexlab{a}}.

\bibitem[Li et~al.(2022)Li, Lin, and Shen]{li2022deep}
Qianxiao Li, Ting Lin, and Zuowei Shen.
\newblock Deep learning via dynamical systems: An approximation perspective.
\newblock \emph{Journal of the European Mathematical Society}, 25\penalty0
  (5):\penalty0 1671--1709, 2022.

\bibitem[Li et~al.(2024{\natexlab{b}})Li, Singh, and Grover]{li2024mamba}
Shufan Li, Harkanwar Singh, and Aditya Grover.
\newblock Mamba-nd: Selective state space modeling for multi-dimensional data.
\newblock In \emph{European Conference on Computer Vision}, pp.\  75--92.
  Springer, 2024{\natexlab{b}}.

\bibitem[Li et~al.(2023)Li, Cai, Zhang, Chen, and Dey]{li2023makes}
Yuhong Li, Tianle Cai, Yi~Zhang, Deming Chen, and Debadeepta Dey.
\newblock What makes convolutional models great on long sequence modeling?
\newblock In \emph{The Eleventh International Conference on Learning
  Representations}, 2023.
\newblock URL \url{https://openreview.net/forum?id=TGJSPbRpJX-}.

\bibitem[Liang et~al.(2025)Liang, Zhou, Xu, Zhu, Zou, Ye, Tan, and
  Bai]{liang2025pointmamba}
Dingkang Liang, Xin Zhou, Wei Xu, Xingkui Zhu, Zhikang Zou, Xiaoqing Ye, Xiao
  Tan, and Xiang Bai.
\newblock Pointmamba: A simple state space model for point cloud analysis.
\newblock \emph{Advances in neural information processing systems},
  37:\penalty0 32653--32677, 2025.

\bibitem[Lin \& Jegelka(2018)Lin and Jegelka]{lin2018resnet}
Hongzhou Lin and Stefanie Jegelka.
\newblock Resnet with one-neuron hidden layers is a universal approximator.
\newblock \emph{Advances in neural information processing systems}, 31, 2018.

\bibitem[Liu \& Li(2024)Liu and Li]{liu2024generalization}
Fusheng Liu and Qianxiao Li.
\newblock From generalization analysis to optimization designs for state space
  models.
\newblock \emph{arXiv preprint arXiv:2405.02670}, 2024.

\bibitem[Loshchilov \& Hutter(2017)Loshchilov and
  Hutter]{loshchilov2017decoupledweightdecayregularizationadamw}
Ilya Loshchilov and Frank Hutter.
\newblock Decoupled weight decay regularization.
\newblock \emph{arXiv preprint arXiv:1711.05101}, 2017.

\bibitem[Ma \& Wang(2024)Ma and Wang]{ma2024semi}
Chao Ma and Ziyang Wang.
\newblock Semi-mamba-unet: Pixel-level contrastive and cross-supervised visual
  mamba-based unet for semi-supervised medical image segmentation.
\newblock \emph{Knowledge-Based Systems}, 300:\penalty0 112203, 2024.

\bibitem[Maas et~al.(2011)Maas, Daly, Pham, Huang, Ng, and
  Potts]{maas-etal-2011-imdb}
Andrew Maas, Raymond~E Daly, Peter~T Pham, Dan Huang, Andrew~Y Ng, and
  Christopher Potts.
\newblock Learning word vectors for sentiment analysis.
\newblock In \emph{Proceedings of the 49th annual meeting of the association
  for computational linguistics: Human language technologies}, pp.\  142--150,
  2011.

\bibitem[Merity et~al.(2017)Merity, Xiong, Bradbury, and
  Socher]{merity2017pointer}
Stephen Merity, Caiming Xiong, James Bradbury, and Richard Socher.
\newblock Pointer sentinel mixture models.
\newblock In \emph{International Conference on Learning Representations}, 2017.
\newblock URL \url{https://openreview.net/forum?id=Byj72udxe}.

\bibitem[Merrill et~al.(2024)Merrill, Petty, and
  Sabharwal]{merrill2024illusion}
William Merrill, Jackson Petty, and Ashish Sabharwal.
\newblock The illusion of state in state-space models.
\newblock In \emph{International Conference on Machine Learning}, pp.\
  35492--35506. PMLR, 2024.

\bibitem[Muca~Cirone et~al.(2025)Muca~Cirone, Orvieto, Walker, Salvi, and
  Lyons]{cirone2025theoretical}
Nicola Muca~Cirone, Antonio Orvieto, Benjamin Walker, Cristopher Salvi, and
  Terry Lyons.
\newblock Theoretical foundations of deep selective state-space models.
\newblock \emph{Advances in Neural Information Processing Systems},
  37:\penalty0 127226--127272, 2025.

\bibitem[Nguyen et~al.(2022{\natexlab{a}})Nguyen, Goel, Gu, Downs, Shah, Dao,
  Baccus, and R{\'e}]{nguyen2022s4nd}
Eric Nguyen, Karan Goel, Albert Gu, Gordon Downs, Preey Shah, Tri Dao, Stephen
  Baccus, and Christopher R{\'e}.
\newblock S4nd: Modeling images and videos as multidimensional signals with
  state spaces.
\newblock \emph{Advances in neural information processing systems},
  35:\penalty0 2846--2861, 2022{\natexlab{a}}.

\bibitem[Nguyen et~al.(2022{\natexlab{b}})Nguyen, Nguyen, Vo, Osher, and
  Vo]{nguyen2022improving}
Ho~Huu~Nghia Nguyen, Tan Nguyen, Huyen Vo, Stanley Osher, and Thieu Vo.
\newblock Improving neural ordinary differential equations with nesterov's
  accelerated gradient method.
\newblock \emph{Advances in Neural Information Processing Systems},
  35:\penalty0 7712--7726, 2022{\natexlab{b}}.

\bibitem[Nguyen et~al.(2024)Nguyen, Uribe, Nguyen, and
  Baraniuk]{nguyen2024pidformer}
Tam~Minh Nguyen, C{\'e}sar~A Uribe, Tan~Minh Nguyen, and Richard Baraniuk.
\newblock Pidformer: Transformer meets control theory.
\newblock In \emph{Forty-first International Conference on Machine Learning},
  2024.

\bibitem[Nguyen et~al.(2020)Nguyen, Baraniuk, Bertozzi, Osher, and
  Wang]{nguyen2020momentumrnn}
Tan Nguyen, Richard Baraniuk, Andrea Bertozzi, Stanley Osher, and Bao Wang.
\newblock Momentumrnn: Integrating momentum into recurrent neural networks.
\newblock \emph{Advances in neural information processing systems},
  33:\penalty0 1924--1936, 2020.

\bibitem[Orvieto et~al.(2023)Orvieto, Smith, Gu, Fernando, Gulcehre, Pascanu,
  and De]{orvieto2023resurrecting}
Antonio Orvieto, Samuel~L Smith, Albert Gu, Anushan Fernando, Caglar Gulcehre,
  Razvan Pascanu, and Soham De.
\newblock Resurrecting recurrent neural networks for long sequences.
\newblock In \emph{International Conference on Machine Learning}, pp.\
  26670--26698. PMLR, 2023.

\bibitem[Ota(2024)]{ota2024decision}
Toshihiro Ota.
\newblock Decision mamba: Reinforcement learning via sequence modeling with
  selective state spaces.
\newblock \emph{arXiv preprint arXiv:2403.19925}, 2024.

\bibitem[Patro \& Agneeswaran(2024)Patro and Agneeswaran]{patro2024mamba}
Badri~Narayana Patro and Vijay~Srinivas Agneeswaran.
\newblock Mamba-360: Survey of state space models as transformer alternative
  for long sequence modelling: Methods, applications, and challenges.
\newblock \emph{arXiv preprint arXiv:2404.16112}, 2024.

\bibitem[Pi{\'o}ro et~al.(2024)Pi{\'o}ro, Ciebiera, Kr{\'o}l, Ludziejewski, and
  Jaszczur]{pioro2024moe}
Maciej Pi{\'o}ro, Kamil Ciebiera, Krystian Kr{\'o}l, Jan Ludziejewski, and
  Sebastian Jaszczur.
\newblock Moe-mamba: Efficient selective state space models with mixture of
  experts.
\newblock \emph{arXiv preprint arXiv:2401.04081}, 2024.

\bibitem[Prillo \& Eisenschlos(2020)Prillo and Eisenschlos]{prillo2020softsort}
Sebastian Prillo and Julian Eisenschlos.
\newblock Softsort: A continuous relaxation for the argsort operator.
\newblock In \emph{International Conference on Machine Learning}, pp.\
  7793--7802. PMLR, 2020.

\bibitem[Radford et~al.(2019)Radford, Wu, Child, Luan, Amodei, Sutskever,
  et~al.]{radford2019languagegpt2}
Alec Radford, Jeffrey Wu, Rewon Child, David Luan, Dario Amodei, Ilya
  Sutskever, et~al.
\newblock Language models are unsupervised multitask learners.
\newblock \emph{OpenAI blog}, 1\penalty0 (8):\penalty0 9, 2019.

\bibitem[Rezaei~Jafari et~al.(2025)Rezaei~Jafari, Montavon, M{\"u}ller, and
  Eberle]{jafari2025mambalrp}
Farnoush Rezaei~Jafari, Gr{\'e}goire Montavon, Klaus-Robert M{\"u}ller, and
  Oliver Eberle.
\newblock Mambalrp: Explaining selective state space sequence models.
\newblock \emph{Advances in Neural Information Processing Systems},
  37:\penalty0 118540--118570, 2025.

\bibitem[Ruiz-Balet \& Zuazua(2023)Ruiz-Balet and Zuazua]{ruiz2023neural}
Domenec Ruiz-Balet and Enrique Zuazua.
\newblock Neural ode control for classification, approximation, and transport.
\newblock \emph{SIAM Review}, 65\penalty0 (3):\penalty0 735--773, 2023.

\bibitem[Saon et~al.(2023)Saon, Gupta, and Cui]{saon2023diagonal}
George Saon, Ankit Gupta, and Xiaodong Cui.
\newblock Diagonal state space augmented transformers for speech recognition.
\newblock In \emph{ICASSP 2023-2023 IEEE International Conference on Acoustics,
  Speech and Signal Processing (ICASSP)}, pp.\  1--5. IEEE, 2023.

\bibitem[Smith et~al.(2023)Smith, Warrington, and
  Linderman]{smith2023simplified}
Jimmy~T.H. Smith, Andrew Warrington, and Scott Linderman.
\newblock Simplified state space layers for sequence modeling.
\newblock In \emph{The Eleventh International Conference on Learning
  Representations}, 2023.
\newblock URL \url{https://openreview.net/forum?id=Ai8Hw3AXqks}.

\bibitem[Tabuada \& Gharesifard(2022)Tabuada and
  Gharesifard]{tabuada2022universal}
Paulo Tabuada and Bahman Gharesifard.
\newblock Universal approximation power of deep residual neural networks
  through the lens of control.
\newblock \emph{IEEE Transactions on Automatic Control}, 68\penalty0
  (5):\penalty0 2715--2728, 2022.

\bibitem[Tay et~al.(2021)Tay, Dehghani, Abnar, Shen, Bahri, Pham, Rao, Yang,
  Ruder, and Metzler]{tay2021long}
Yi~Tay, Mostafa Dehghani, Samira Abnar, Yikang Shen, Dara Bahri, Philip Pham,
  Jinfeng Rao, Liu Yang, Sebastian Ruder, and Donald Metzler.
\newblock Long range arena : A benchmark for efficient transformers.
\newblock In \emph{International Conference on Learning Representations}, 2021.
\newblock URL \url{https://openreview.net/forum?id=qVyeW-grC2k}.

\bibitem[Vaswani et~al.(2017)Vaswani, Shazeer, Parmar, Uszkoreit, Jones, Gomez,
  Kaiser, and Polosukhin]{vaswani2017attention}
Ashish Vaswani, Noam Shazeer, Niki Parmar, Jakob Uszkoreit, Llion Jones,
  Aidan~N Gomez, {\L}ukasz Kaiser, and Illia Polosukhin.
\newblock Attention is all you need.
\newblock \emph{Advances in neural information processing systems}, 30, 2017.

\bibitem[Wang et~al.(2022)Wang, Xia, Nguyen, and Osher]{wang2022does}
Bao Wang, Hedi Xia, Tan Nguyen, and Stanley Osher.
\newblock How does momentum benefit deep neural networks architecture design? a
  few case studies.
\newblock \emph{Research in the Mathematical Sciences}, 9\penalty0
  (3):\penalty0 57, 2022.

\bibitem[Wang et~al.(2024{\natexlab{a}})Wang, Tsepa, Ma, and
  Wang]{wang2024graph}
Chloe Wang, Oleksii Tsepa, Jun Ma, and Bo~Wang.
\newblock Graph-mamba: Towards long-range graph sequence modeling with
  selective state spaces.
\newblock \emph{arXiv preprint arXiv:2402.00789}, 2024{\natexlab{a}}.

\bibitem[Wang et~al.(2025)Wang, Paliotta, May, Rush, and
  Dao]{junxiongdaniele2025mambainllama}
Junxiong Wang, Daniele Paliotta, Avner May, Alexander Rush, and Tri Dao.
\newblock The mamba in the llama: Distilling and accelerating hybrid models.
\newblock \emph{Advances in Neural Information Processing Systems},
  37:\penalty0 62432--62457, 2025.

\bibitem[Wang et~al.(2024{\natexlab{b}})Wang, Zheng, Zhang, Cui, and
  Li]{wang2024mamba}
Ziyang Wang, Jian-Qing Zheng, Yichi Zhang, Ge~Cui, and Lei Li.
\newblock Mamba-unet: Unet-like pure visual mamba for medical image
  segmentation.
\newblock \emph{arXiv preprint arXiv:2402.05079}, 2024{\natexlab{b}}.

\bibitem[Xia et~al.(2021)Xia, Suliafu, Ji, Nguyen, Bertozzi, Osher, and
  Wang]{xia2021heavy}
Hedi Xia, Vai Suliafu, Hangjie Ji, Tan Nguyen, Andrea Bertozzi, Stanley Osher,
  and Bao Wang.
\newblock Heavy ball neural ordinary differential equations.
\newblock \emph{Advances in Neural Information Processing Systems},
  34:\penalty0 18646--18659, 2021.

\bibitem[Xu et~al.(2024)Xu, Yang, Wang, Du, and Chen]{xu2024survey}
Rui Xu, Shu Yang, Yihui Wang, Bo~Du, and Hao Chen.
\newblock A survey on vision mamba: Models, applications and challenges.
\newblock \emph{arXiv preprint arXiv:2404.18861}, 2024.

\bibitem[Yang et~al.(2021)Yang, Li, Zhang, Dai, Xiao, Yuan, and
  Gao]{yang2021focal}
Jianwei Yang, Chunyuan Li, Pengchuan Zhang, Xiyang Dai, Bin Xiao, Lu~Yuan, and
  Jianfeng Gao.
\newblock Focal attention for long-range interactions in vision transformers.
\newblock \emph{Advances in Neural Information Processing Systems},
  34:\penalty0 30008--30022, 2021.

\bibitem[Yang et~al.(2024)Yang, Xing, and Zhu]{yang2024vivim}
Yijun Yang, Zhaohu Xing, and Lei Zhu.
\newblock Vivim: a video vision mamba for medical video object segmentation.
\newblock \emph{arXiv preprint arXiv:2401.14168}, 2024.

\bibitem[Zhang et~al.(2020)Zhang, Gao, Unterman, and
  Arodz]{zhang2020approximation}
Han Zhang, Xi~Gao, Jacob Unterman, and Tom Arodz.
\newblock Approximation capabilities of neural odes and invertible residual
  networks.
\newblock In \emph{International Conference on Machine Learning}, pp.\
  11086--11095. PMLR, 2020.

\bibitem[Zhang et~al.(2015)Zhang, Zhao, and LeCun]{2015yahooagnews}
Xiang Zhang, Junbo Zhao, and Yann LeCun.
\newblock Character-level convolutional networks for text classification.
\newblock \emph{Advances in neural information processing systems}, 28, 2015.

\bibitem[Zhu et~al.(2024)Zhu, Liao, Zhang, Wang, Liu, and Wang]{zhu2024vision}
Lianghui Zhu, Bencheng Liao, Qian Zhang, Xinlong Wang, Wenyu Liu, and Xinggang
  Wang.
\newblock Vision mamba: Efficient visual representation learning with
  bidirectional state space model.
\newblock In \emph{International Conference on Machine Learning}, pp.\
  62429--62442. PMLR, 2024.

\end{thebibliography}
\bibliographystyle{iclr2025_conference}
\appendix

\newpage

\begin{center}
{\bf \Large{Supplement to ``Demystifying the Token Dynamics of Deep Selective State Space Models''}}
\end{center}

\section{Dynamical Properties Tokens in Mamba}\label{appx:dynamic}


We will consider $\operatorname{S6}$ dynamic in the single channel dimension case, i.e. $D=1$, in our setting to avoid overly complicated technicalities.
In this case, the input sequence $\mathbf{x}=(x_1,\ldots,x_L) \in \mathbb{R}^L$ is a list of $L$ real numbers.
The dynamic of $\mathbf{x}$ is then governed by the dynamical system:
\begin{align}
    &\frac{d}{dt} x_{l}(t) = \sum\limits_{j=1}^l P_{lj}(t) x_{j}(t), \quad t \in [0,+\infty),\label{eq:mamba_dynamic_d=1}\\
    &x_{l}(0) = x_{l0} \in \mathbb{R},\nonumber
\end{align}
for $l=1,\ldots,L$, where 
\begin{align}
    \Delta(u) = \text{softplus}(S_{\Delta}u) = \ln \left(1+e^{S_{\Delta}u} \right), \qquad u \in \mathbb{R},
\end{align}
and 
\begin{equation}\label{eq:Plj_D=1}
    P_{lj}(t) = \left\{
    \begin{aligned}
        & \left(S_C^{\top}S_B\right)  x_l(t)^2 \cdot \Delta(x_l(t)) , &\text{if } l=j,\\
        &\left(S_C^{\top}S_B\right) x_l(t) x_j(t) \cdot \Delta(x_j(t))  \cdot \exp \left( -a\sum\limits_{k=j+1}^l \Delta(x_{k}(t)) \right), &\text{if } l>j.
    \end{aligned}
    \right.
\end{equation}
In this case, $S_B, S_C \in \mathbb{R}^{N \times 1}$, thus $S_C^{\top} S_B \in \mathbb{R}$, and the scalar values $S_{\Delta}$ and $a \in \mathbb{R}$, with $a > 0$, are learnable from input data.

To avoid triviality, we will always assume that the initial datum $x_{l}(0)=x_{l0}$ are nonzero for all $l$.
The main goal of this section is to investigate the asymptotic behavior of the solution $\mathbf{x}(t)= (x_1(t),\ldots,x_L(t))$ of the dynamic~\eqref{eq:mamba_dynamic_d=1} 
and the corresponding hidden attention matrix
\begin{align*}
    \mathbf{P}(t) = \begin{bmatrix}
        P_{11}(t) & 0 & \ldots & 0\\
        P_{21}(t) & P_{22}(t) & \ldots & 0\\
        \ldots & \ldots & \ldots & \ldots\\
        P_{L1}(t) & P_{L2}(t) & \ldots & P_{LL}(t)
    \end{bmatrix},
\end{align*}
when $t$ increases.
The following three scenarios will be considered separately in the subsequent subsections:
\begin{enumerate}
    \item Convergence scenario: $\mu=S_C^{\top}S_B<0$;
    \item Slow-divergence scenario: $\mu=S_C^{\top}S_B>0$ and $S_{\Delta}x_{l0}<0$ for all $l$; and
    \item Fast-divergence scenario: $\mu=S_C^{\top}S_B>0$ and $S_{\Delta}x_{l0}>0$ for some $l$.
\end{enumerate}



\subsection{Convergence scenario: $\mu=S_C^{\top}S_B<0$}\label{appx:scenario_1}

We will start with the following auxiliary lemmas.

\begin{lemma}\label{lem:positive_increasing}
    Let $t_0 \in \mathbb{R}$ and let $f(u,t)$ be a differentiable continuous function on $\mathbb{R} \times [t_0,+\infty)$.
    Assume that $u(t)$ is the unique solution of the initial value problem
    \begin{align*}
        &\frac{d}{dt}u(t) = f(u(t),t) \, u(t), \quad t \in [t_0,+\infty),\\
        &u(t_0) = u_0 \in \mathbb{R}.
    \end{align*}
    If $u_0>0$ (respectively, $u_0<0$), then $u(t)$ is a positive (respectively, negative) function on its  maximal interval of the existence. 
    In addition,
    \begin{enumerate}
        \item if $f$ is a positive function, then $u(t)$ is monotonically increasing (respectively, monotonically decreasing),
        \item if $f$ is a negative function, then $u(t)$ is monotonically decreasing (respectively, monotonically increasing).
    \end{enumerate}
\end{lemma}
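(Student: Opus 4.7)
The argument rests entirely on the uniqueness of solutions to the initial value problem, which is guaranteed by the continuous differentiability of $f$ (hence local Lipschitz continuity of the right-hand side $(u,t) \mapsto f(u,t)\,u$ in $u$) via the Picard--Lindel\"of theorem. The crucial observation is that the constant function $u \equiv 0$ is itself a solution of the ODE, corresponding to the initial value $u_0 = 0$. Everything else follows by soft arguments from this.

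\textbf{Step 1: sign preservation.} I will argue by contradiction. Suppose $u_0 > 0$ and let $I$ be the maximal interval of existence containing $t_0$. If there exists $t^* \in I$ with $u(t^*) = 0$, then $u$ and the constant zero function are both solutions of the ODE passing through the point $(t^*, 0)$; uniqueness therefore forces $u \equiv 0$ on $I$, contradicting $u(t_0) = u_0 > 0$. Since $u$ is continuous on $I$ and does not vanish, the intermediate value theorem yields $u(t) > 0$ for all $t \in I$. The case $u_0 < 0$ is identical after a sign flip, or equivalently by applying the positive case to $-u$, which satisfies an ODE of the same structure with $f$ replaced by $f(-\cdot, t)$.

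\textbf{Step 2: monotonicity.} Once the sign of $u(t)$ is known on all of $I$, the monotonicity statements follow immediately by inspecting $u'(t) = f(u(t),t)\,u(t)$: if $f > 0$ and $u > 0$, then $u' > 0$ pointwise on $I$, so $u$ is strictly increasing; if $f > 0$ and $u < 0$, then $u' < 0$, so $u$ is strictly decreasing; and the two cases for $f < 0$ are handled symmetrically. Integrating this sign information over any subinterval of $I$ gives the claimed monotonicity.

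\textbf{Main obstacle.} There is essentially no hard analytic step; the only subtlety is making sure that the regularity hypothesis ``differentiable continuous'' is strong enough to invoke uniqueness. I would read this as $C^1$, which guarantees that $(u,t) \mapsto f(u,t)\,u$ is locally Lipschitz in $u$ on bounded sets and hence that Picard--Lindel\"of applies on the maximal interval of existence. No finer quantitative analysis of $f$ is needed, and the lemma will then be quoted repeatedly in later proofs (e.g., in Theorem~\ref{thm:mamba2_mu<0_mainbody} and Lemma~\ref{prop:mamba2_mu>0_mainbody}) with various choices of $f$ extracted from the right-hand side of the one-dimensional Mamba dynamic.
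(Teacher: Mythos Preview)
Your proposal is correct and mirrors the paper's own proof almost exactly: the paper also argues that $u(t)$ cannot vanish because $u \equiv 0$ is the unique solution through any zero, hence $u$ keeps the sign of $u_0$, and then reads off the monotonicity from the sign of $u'(t)=f(u(t),t)\,u(t)$. Your version is simply a bit more explicit about invoking Picard--Lindel\"of and the intermediate value theorem, but the approach is the same.
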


\begin{proof}
    We assume that $u_0>0$. The case when $u_0<0$ is obtained by replacing $u(t)$ by $-u(t)$.
    The function $u(t)$ cannot be equal to zero on its maximal interval, since otherwise, $u(t)=0$ is the unique solution of the given differential equaion, which contradicts to the fact that $u(0)=u_{0}>0$.
    Therefore, $u(t)$ is positive on its maximal interval of the existence.

    Next, assume that $f$ is positive on $\mathbb{R} \times [t_0,+\infty)$.
    Then we always have $u'(t) = f(u(t),t)u(t)>0$ for all $t \in [t_0,+\infty)$.
    This shows that $u(t)$ is increasing on its maximal interval of the existence.
    The case when $f$ is negative is similar.
\end{proof}

\begin{lemma}\label{lem:bound_Delta}
    Let $t_0 \in \mathbb{R}$. If a function $u(t)$ is bounded on $[t_0,+\infty)$, then there exists positive numbers $c_1,c_2$ such that
    $$c_1 \leq \Delta(u(t)) \leq c_2,$$
    for all $t \in [t_0,+\infty)$.
\end{lemma}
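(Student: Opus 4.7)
The plan is to reduce the statement to a compactness argument for a continuous positive function. The first observation is that $\Delta$ is strictly positive on all of $\mathbb{R}$: by definition $\Delta(u) = \ln(1+e^{S_\Delta u})$, and since $e^{S_\Delta u} > 0$ we have $1 + e^{S_\Delta u} > 1$, hence $\Delta(u) > 0$ for every $u \in \mathbb{R}$. Moreover $\Delta$ is continuous on $\mathbb{R}$ as the composition of a linear map, the exponential, a translation, and the logarithm on $(0,+\infty)$.

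Next I would use boundedness of $u(t)$: by hypothesis there exists $M > 0$ such that $|u(t)| \leq M$ for every $t \in [t_0,+\infty)$, so the orbit $\{u(t) : t \geq t_0\}$ is contained in the compact interval $[-M, M]$. By the extreme value theorem applied to the continuous function $\Delta$ on the compact set $[-M, M]$, the restriction attains its minimum $c_1 := \min_{|u| \leq M} \Delta(u)$ and its maximum $c_2 := \max_{|u| \leq M} \Delta(u)$. Both quantities are finite, and $c_1 > 0$ because $\Delta$ is strictly positive at every point of $[-M, M]$. It then follows directly that $c_1 \leq \Delta(u(t)) \leq c_2$ for all $t \geq t_0$, which is the conclusion.

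If one wanted to avoid invoking the extreme value theorem, one can give explicit constants using the fact that $\Delta$ is monotone in its argument (since softplus is strictly increasing and $S_\Delta$ is a fixed real number): for instance, $c_1 = \ln(1 + e^{-|S_\Delta| M})$ and $c_2 = \ln(1 + e^{|S_\Delta| M})$ work, with both clearly positive. There is no genuine obstacle here; the main content is recognizing that softplus is both continuous and bounded away from zero on every compact set, a property that the downstream proofs in Appendix~\ref{appx:scenario_1} and Appendix~\ref{appx:scenario_2} will repeatedly exploit to control the coefficients $P_{lj}(t)$ in the dynamics.
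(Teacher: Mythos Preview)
Your proof is correct and essentially matches the paper's argument: the paper bounds $S_\Delta u(t)$ between constants $a_1 \le a_2$ and takes $c_i = \ln(1+e^{a_i})$, which is exactly your explicit monotonicity alternative (and your compactness version is an equivalent repackaging). No gaps.
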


\begin{proof}
    Recall that $\Delta(u(t))=\ln(1+e^{S_{\Delta}u(t)})$.
    Since $u(t)$ is bounded, there are $a_1,a_2 \in \mathbb{R}$ such that $a_1 \leq S_{\Delta} u(t) \leq a_2$ for all $t \in [t_0,+\infty)$.
    Then we can choose $c_i = \ln(1+e^{a_i})$ for $i=1,2$.
\end{proof}

\begin{theorem}[{Convergence scenario}]\label{thm:mamba2_mu<0}
    Assume that $\mu=S_C^{\top}S_B<0$.
    Let $\mathbf{x}(t)=(x_1(t),\ldots,x_L(t))$ be the unique solution of the dynamic~\eqref{eq:mamba_dynamic_d=1}.
    \begin{enumerate}
        \item For each $l=1,\ldots,L$, if $x_{l0}>0$ (respectively, $x_{l0}<0$), then $x_l(t)$ is positive and monotonically decreasing (respectively, negative and monotonically increasing) on $[0,+\infty)$.
        In addition, $x_l(t)=O(\frac{1}{\sqrt{t}})$.

        \item $P_{lj}(t) = O(\frac{1}{t})$ for all $l,j=1,\ldots,L$.
    \end{enumerate}
\end{theorem}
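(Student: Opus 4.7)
}
The plan is to exploit the factorization $\frac{d}{dt}x_l(t) = \mu x_l(t)^3\Delta(x_l(t)) + g_l(t)\,x_l(t)$ given in equation~\eqref{eq:thm1-mamba2_mainbody} and treat the indices $l=1,\ldots,L$ inductively, passing the asymptotic rate from smaller to larger $l$ through $g_l$. Without loss of generality I would assume $x_{l0}>0$, since the case $x_{l0}<0$ follows by symmetry under $x_l\mapsto -x_l$ (the defining equations are odd in $x_l$).

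First I would settle sign preservation, monotonicity, and boundedness together. By Lemma~\ref{lem:positive_increasing} applied to $\tfrac{d}{dt}x_l(t) = f_l(t)\,x_l(t)$ with $f_l(t)=\mu x_l(t)^2\Delta(x_l(t))+g_l(t)$, the solution $x_l(t)$ stays strictly positive on its maximal interval of existence $[0,T_l)$. Proceeding by induction on $l$: for $l=1$ we have $g_1\equiv 0$ and $\mu<0$, so $x_1'(t)=\mu x_1(t)^3\Delta(x_1(t))<0$, forcing $x_1(t)$ to be decreasing and bounded, $0<x_1(t)\le x_{10}$. For the inductive step, assuming $x_1,\ldots,x_{l-1}$ are all positive and bounded, the expression defining $g_l$ in the equation right after \eqref{eq:thm1-mamba2_mainbody} shows $g_l(t)\le 0$ (every summand has the sign of $\mu<0$). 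Hence $x_l'(t)\le \mu x_l(t)^3\Delta(x_l(t))<0$, giving monotone decrease and the a priori bound $0<x_l(t)\le x_{l0}$. Boundedness of $x_l$ on its maximal interval together with the standard continuation theorem for ODEs forces $T_l=+\infty$, so $x_l(t)$ exists globally.

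Next I would extract the $O(1/\sqrt{t})$ rate by a single scalar comparison. Since $0<x_l(t)\le x_{l0}$, Lemma~\ref{lem:bound_Delta} yields a constant $c_l>0$ with $\Delta(x_l(t))\ge c_l$ for all $t\ge 0$. Combined with $g_l(t)x_l(t)\le 0$, this gives the differential inequality
\begin{equation*}
  x_l'(t)\;\le\;\mu\,c_l\,x_l(t)^3,\qquad t\ge 0.
\end{equation*}
Dividing by $x_l(t)^3>0$ and integrating from $0$ to $t$ produces
\begin{equation*}
  x_l(t)^{-2}-x_{l0}^{-2}\;\ge\;-2\mu c_l\,t,
\end{equation*}
which, using $\mu<0$, rearranges to $x_l(t)\le\bigl(x_{l0}^{-2}-2\mu c_l\,t\bigr)^{-1/2}=O(1/\sqrt{t})$. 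This is the desired bound and, in particular, $x_l(t)\to 0$.

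Finally, the $O(1/t)$ bound on $P_{lj}$ is a straightforward corollary. Boundedness of every $x_k$ on $[0,\infty)$ guarantees, by Lemma~\ref{lem:bound_Delta}, that each $\Delta(x_k(t))$ is sandwiched between two positive constants; consequently the exponential factor $\exp\!\bigl(-a\sum_{k=j+1}^{l}\Delta(x_k(t))\bigr)$ is bounded above by $1$ (since $a>0$). For the diagonal case $l=j$, $P_{ll}(t)=\mu\,x_l(t)^2\,\Delta(x_l(t))$ is $O(1/t)$ directly from part~(1). For $l>j$, the factor $|x_l(t)x_j(t)|\le \tfrac{1}{2}(x_l(t)^2+x_j(t)^2)=O(1/t)$, and the remaining terms are bounded, so $P_{lj}(t)=O(1/t)$. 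The main (and really only) obstacle is making the sign/monotonicity/boundedness bootstrap watertight so that the comparison ODE argument applies; once boundedness is in hand, everything else is routine.
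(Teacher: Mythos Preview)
Your proposal is correct and follows essentially the same route as the paper: rewrite the dynamics as $x_l'=\mu x_l^3\Delta(x_l)+g_l x_l$, use $\mu<0$ to get $g_l\le 0$, invoke Lemma~\ref{lem:positive_increasing} for sign preservation and monotonicity, then Lemma~\ref{lem:bound_Delta} to reduce to the scalar comparison $x_l'\le \mu c\,x_l^3$ and integrate. The only cosmetic difference is that you set up an induction on $l$ to obtain $g_l\le 0$, whereas the paper observes directly that every summand in $g_l$ is $\mu$ times a nonnegative quantity, so no inductive hypothesis on the previous components is actually needed for the sign; your extra care does no harm and makes global existence of the triangular system explicit.
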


\begin{proof}
    Assume without loss of generality that $x_{l0}>0$.
    The case when $x_{l0}<0$ is proved similarly by replacing $x_l(t)$ by $-x_l(t)$.
    We rewrite the dynamic~\eqref{eq:mamba_dynamic_d=1} as
    \begin{equation}\label{eq:thm1-mamba2}
        \frac{d}{dt}x_l(t) = \mu x_l(t)^3 \Delta(x_l(t)) + g_l(t) x_l(t),
    \end{equation}
    where 
    \begin{equation*}
        g_l(t) = \left\{ 
        \begin{aligned}
            &0, &\text{if } l=1,\\
            &\sum\limits_{j=1}^{l-1} \mu x_j(t)^2 \Delta(x_j(t)) \exp\left(-a \sum\limits_{k=j+1}^l \Delta(x_k(t)) \right), &\text{if } l>1.
        \end{aligned}
        \right.
    \end{equation*}
    Since $\mu<0$, $g_l(t)$ is a nonpositive function.
    According to Lemma~\ref{lem:positive_increasing}, $x_l(t)$ is positive and monotonically decreasing on its right maximal interval of the existence $[0,T)$ with $T \in (0,+\infty]$.
    Thus, $x_l(t)$ is bounded and $T=+\infty$.
    As a consequence, it follows from Lemma~\ref{lem:bound_Delta} that there is a positive constant $c>0$ such that $\Delta(x_l(t)) \geq c$ for all $t \in [0,+\infty)$.
    Therefore, from Eq.~\eqref{eq:thm1-mamba2}, we have
    $$\frac{d}{dt}x_l(t) \leq \mu c x_l(t)^3,$$
    which imply that $x_l(t) \leq (-2 \mu c t + x_{l0}^{-2})^{-\frac{1}{2}}$ for all $t \in [0,+\infty)$.
    Hence, $x_l(t) = O(\frac{1}{\sqrt{t}})$ as expected.

    For the hidden attention score $P_{lj}(t)$ with $L\geq l \geq j \geq 1$, we have
    \begin{align*}
        |P_{lj}(t)|=\mu |x_l(t)x_j(t)| \Delta(x_j(t)) \exp\left(-a\sum\limits_{k=j+1}^l \Delta(x_k(t))\right)
        \leq \mu c |x_l(t) x_j(t)| = O\left(\frac{1}{t}\right),
    \end{align*}
    as expected.
\end{proof}

\subsection{Slow-divergence scenario: $\mu=S_C^{\top}S_B>0$ and $S_{\Delta}x_{l0}<0$ for all $l$}\label{appx:scenario_2}

In this case, we first observe that all of the tokens will tend to infinity as we will see in the following lemma.

\begin{lemma}[{Slow-divergence scenario}]\label{prop:mamba2_mu>0}
    Assume that $\mu=S_C^{\top}S_B>0$ and $S_{\Delta}x_{l0}<0$ for all $l$.
    Let $x(t)=(x_1(t),\ldots,x_L(t))$ be the unique solution of the dynamic~\eqref{eq:mamba_dynamic_d=1}.
    For each $l$, if $x_{l0}>0$ (respectively, $x_{l0}<0$), then $x_l(t)$ is positive and monotonically increasing (respectively, negative and monotonically decreasing) on $[0,+\infty)$ with 
        $$\lim_{t \to +\infty}x_l(t) = + \infty \quad (\text{respectively, } \lim_{t \to +\infty}x_l(t) = - \infty).$$ 
\end{lemma}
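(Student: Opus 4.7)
The plan is to reduce to the case $S_\Delta < 0$ and all $x_{l0}>0$, then proceed by induction on $l$. The reduction uses an involution on the system~\eqref{eq:mamba_dynamic_d=1}: direct inspection of~\eqref{eq:Plj_D=1} shows that each $P_{lj}$ is invariant under the substitution $(x_l, S_\Delta) \mapsto (-x_l, -S_\Delta)$, since the powers of $x$ combine to cancel signs and $\ln(1+e^{S_\Delta x})$ is preserved. Consequently, if $\mathbf{x}(t)$ solves the dynamic with parameter $S_\Delta$ and initial data $x_{l0}$, then $-\mathbf{x}(t)$ solves it with parameter $-S_\Delta$ and initial data $-x_{l0}$. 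Since $S_\Delta x_{l0}<0$ for every $l$ forces all $x_{l0}$ to share the same nonzero sign, it suffices to treat the case $S_\Delta < 0$ with every $x_{l0} > 0$.

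In this setting, rewrite the $l$-th equation in factored form
\begin{equation*}
    x_l'(t) \;=\; \bigl[\mu\, x_l(t)^2\, \Delta(x_l(t)) + g_l(t)\bigr]\, x_l(t),
\end{equation*}
where $g_l(t) = \sum_{j<l} \mu\, x_j(t)^2\, \Delta(x_j(t))\, \exp\bigl(-a \sum_{k=j+1}^{l} \Delta(x_k(t))\bigr) \geq 0$. The bracketed coefficient is strictly positive on the positive quadrant, so Lemma~\ref{lem:positive_increasing} immediately yields that $x_l$ is positive and strictly increasing on its right maximal interval of existence $[0, T_l)$. It remains to show $T_l = +\infty$ and $\lim_{t\to+\infty} x_l(t) = +\infty$; this I will establish by induction on $l$.

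The induction hinges on the elementary decay estimate $\Delta(y) = \ln(1+e^{S_\Delta y}) \leq e^{S_\Delta y}$ for $y \geq 0$, which, since $S_\Delta < 0$, implies that $y^k\Delta(y)$ is uniformly bounded on $[0,+\infty)$ for each fixed $k$. In the base case $l=1$, this bounds $x_1'(t) = \mu\, x_1(t)^3 \Delta(x_1(t))$ by a constant, forcing at-most-linear growth and hence $T_1 = +\infty$; and if $x_1$ converged to some finite limit $L$, then $x_1(t)$ would remain in the compact interval $[x_{10}, L]$ on which $\Delta$ is bounded below by a positive constant via Lemma~\ref{lem:bound_Delta}, giving $x_1'(t) \geq c > 0$, contradicting boundedness. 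In the inductive step, the hypothesis that each $x_j$ with $j<l$ is defined and positive on all of $[0,+\infty)$ allows me to bound each summand of $g_l(t)$ via the same decay estimate (using also $\exp(-a\sum \Delta) \leq 1$), producing a uniform bound $g_l(t) \leq C_l$. Together with the uniform bound on $\mu x_l^3 \Delta(x_l)$, this yields $x_l'(t) \leq M_1 + M_2\, x_l(t)$, so Grönwall rules out finite-time blow-up, and the divergence $x_l(t) \to +\infty$ follows by the same contradiction argument as for $l=1$.

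The main obstacle is simultaneously controlling the diagonal nonlinearity $\mu x_l^3 \Delta(x_l)$ and the cross-term $g_l(t)$. A naive bound $\mu x_l^3 \Delta(x_l) \leq \mu x_l^3$ would leave open the possibility of finite-time blow-up, and a naive bound on $g_l$ in terms of the sizes of $x_j$ would grow unboundedly as $x_j \to +\infty$. The decisive observation that unlocks both is that $y^k e^{S_\Delta y}$ is uniformly bounded on $[0,+\infty)$ whenever $S_\Delta < 0$; this is precisely the regime enforced by the hypothesis $S_\Delta x_{l0} < 0$ (together with the induced monotonicity of each $x_j$), and it is what distinguishes the slow-divergence scenario from the fast-divergence one.
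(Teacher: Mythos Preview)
Your proof is correct and follows the same overall architecture as the paper's (sign reduction, factorization to invoke Lemma~\ref{lem:positive_increasing}, then induction on $l$ for global existence and divergence), but your choice of estimates is cleaner. The paper bounds $g_l(t) \leq \sum_{j<l}\mu x_j^2\Delta(x_j) \leq \sum_{j<l} x_j'/x_j = \tfrac{d}{dt}\sum_{j<l}\ln x_j(t)$ and integrates a linear ODE with this time-varying coefficient, whereas you use the uniform bound $y^2\Delta(y)\le y^2 e^{S_\Delta y}\le C$ on $[0,\infty)$ to get $g_l(t)\le C_l$ and apply Gr\"onwall with constant coefficient. This also lets you prove $T_l=+\infty$ \emph{before} divergence, while the paper first shows unboundedness (so that its ``for $t$ large enough, $x_l'\le 1$'' step is justified) and only then concludes $T=+\infty$. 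Finally, your contradiction for divergence ($x_l'\ge c>0$ on any bounded range) is a mild simplification of the paper's ($x_l'\ge \mu c\,x_l^3$, forcing blow-up from below). The paper's differential bound on $g_l$ is sharper and closer in spirit to the rate estimates of Theorem~\ref{thm:mu>0_delta<0}, but for the present lemma your route is more direct.
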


\begin{proof}
Without loss of generality, we assume that $S_{\Delta}<0$. The case when $S_{\Delta}>0$ is then obtained by replacing $x(t)$ by $-x(t)$.
Similar to the proof of Theorem~\ref{thm:mamba2_mu<0}, we rewrite dynamic~\eqref{eq:mamba_dynamic_d=1} as
    \begin{equation}\label{eq:thm2-mamba2}
        \frac{d}{dt}x_l(t) = \mu x_l(t)^3 \Delta(x_l(t)) + g_l(t) x_l(t),
    \end{equation}
    where 
    \begin{equation}\label{eq:thm2-g_l}
        g_l(t) = \left\{ 
        \begin{aligned}
            &0, &\text{if } l=1,\\
            &\sum\limits_{j=1}^{l-1} \mu x_j(t)^2 \Delta(x_j(t)) \exp\left(-a \sum\limits_{k=j+1}^l \Delta(x_k(t)) \right), &\text{if } l>1.
        \end{aligned}
        \right.
    \end{equation}
    Since $\mu>0$, $g_l(t)$ is a nonnegative function on $[0,+\infty)$.

    According to Lemma~\ref{lem:positive_increasing}, $x_l(t)$ must be a positive increasing function on its maximal interval of the existence $[0,T)$ for some $T \in (0,+\infty]$.

    First, we claim that $x_l(t)$ is unbounded on $[0,T)$.
    Indeed, if this is not the case, then $x_l(t)$ is bounded and $T=+\infty$.
    According to Lemma~\ref{lem:bound_Delta}, there exists a positive constant $c$ such that $\Delta(x_l(t)) \geq c$ for all $t \in [0,+\infty)$.
    Therefore, Eq.~\eqref{eq:thm2-mamba2} yields
    $$\frac{d}{dt} x_l(t) \geq \mu c x_l(t)^3, \quad t \in [0,+\infty),$$
    which implies that $x_l(t) \geq (-2\mu c t+x_{l0}^{-2})^{-\frac{1}{2}}$, which is unbounded at $t=\frac{1}{c\mu x_{l0}^2}>0$, a contradiction. The claim is then proved, and $\lim_{t \to T^-} x_l(t) = +\infty$.

    It remains to prove that $T=+\infty$.
    Observe that, since $S_{\Delta}<0$, we have 
    $$\lim_{r \to +\infty} r^3\Delta(r) = \lim_{r \to +\infty} \frac{r^3}{e^{-S_{\Delta}r}} \frac{\ln(1+e^{S_{\Delta}r})}{e^{S_{\Delta}r}} = 0.$$
    Furthermore, since $x_l(t)$ is positive and monotonically increasing to infinity at infinity, we also have 
    $$\lim_{t \to +\infty} x_l(t)^3 \Delta(x_l(t)) = 0.$$
    In particular, in case $l=1$, we have $\frac{d}{dt}x_1(t) \leq 1$, which implies that $x_1(t) \leq t$, for all $t$ large enough. 
    Therefore, $T=+\infty$ in this case.

    Assume that $l>1$ and we already have $T=+\infty$ for all cases $j<l$.
    Then it folows from Eq~\eqref{eq:thm2-g_l} that
    \begin{align}\label{eq:thm2-bound-g_l}
        g_l(t) \leq \sum\limits_{j=1}^{l-1} \mu x_j(t)^2 \Delta(x_j(t)) \leq \sum\limits_{j=1}^{l-1} \frac{\frac{d}{dt}x_j(t)}{x_j(t)}
        = \frac{d}{dt} \sum\limits_{j=1}^{l-1} \ln(x_j(t)).
    \end{align}
    Therefore, we can bound $\frac{d}{dt}x_l(t)$ from Eq.~\eqref{eq:thm2-mamba2} as
    $$\frac{d}{dt} x_l(t) \leq  1 + \left( \frac{d}{dt} \sum\limits_{j=1}^{l-1} \ln(x_j(t)) \right) x_l(t),$$
    and this inequality hold for all $t \geq t_1$ for some $t_1$ large enough.
    As a consequence, we have
    \begin{align*}
        x_l(t) &\leq \exp\left( \sum\limits_{j=1}^{l-1} \ln(x_j(t)) - \ln(x_j(t_1))  \right)
        \left( \int_{t_1}^t \exp\left( -\sum\limits_{j=1}^{l-1} \ln(x_j(s)) - \ln(x_j(t_1))  \right) ds + x_l(t_1) \right).
    \end{align*}
    The right hand side is a function defined for all $t \geq t_1$.
    Thus $T=+\infty$.
\end{proof}

\begin{remark}
    Lemma~\ref{prop:mamba2_mu>0} shows that all tokens in the considered case will approach infinity as $t$ approaches infinity. However, it does not help to estimate the rate of divergence or the asymptotic behavior of the hidden attention score $P_{lj}(t)$. In Theorem~\ref{thm:mu>0_delta<0} below, we will provide both the rate of divergence for the tokens and the asymptotic behavior of the hidden attention scores, but under a certain additional assumption.
\end{remark}

We provide the divergence rate of the first token in the following lemma.

\begin{lemma}\label{lem:x_1_increasing}
    Assume that $\mu=S_C^{\top}S_B>0$ and $S_{\Delta}x_{10}<0$.
    Let $x_1(t)$ be the unique solution of the dynamic~\eqref{eq:mamba_dynamic_d=1}.
    Then $x_1(t)$ is defined on $[0,+\infty)$ and $x_1(t) = O(\ln(t))$.
\end{lemma}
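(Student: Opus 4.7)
The plan is to build on Lemma~\ref{prop:mamba2_mu>0}, which under these hypotheses already guarantees that $x_1(t)$ exists on all of $[0,+\infty)$, is positive and strictly increasing, and satisfies $\lim_{t\to+\infty} x_1(t) = +\infty$ (assuming without loss of generality $x_{10}>0$, hence $S_{\Delta}<0$; the opposite-sign case follows by the substitution $x_1 \mapsto -x_1$). For $l=1$ the cross-terms in equation~\eqref{eq:mamba_dynamic_d=1} vanish and the system collapses to the autonomous scalar ODE
$$\frac{d}{dt} x_1(t) \;=\; \mu\, x_1(t)^3\, \Delta(x_1(t)),$$
so the only remaining task is to pin down the divergence rate as $O(\ln t)$.

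The key observation is that when $S_{\Delta}<0$ and $x_1(t)\to +\infty$, the softplus factor $\Delta(x_1(t))$ is exponentially small in $x_1(t)$, strong enough to dominate the cubic $x_1^3$. First I would use the elementary bound $\ln(1+z)\leq z$ for $z\geq 0$ to obtain $\Delta(x_1(t))\leq e^{S_{\Delta} x_1(t)}$. Then for any small $\epsilon>0$ with $S_{\Delta}+\epsilon<0$, the map $x\mapsto \mu x^3 e^{-\epsilon x}$ is bounded on the whole half-line $[0,+\infty)$ by some constant $C>0$, so for every $t \geq 0$,
$$\frac{d}{dt} x_1(t) \;\leq\; \mu\, x_1(t)^3\, e^{S_{\Delta} x_1(t)} \;\leq\; C\, e^{(S_{\Delta}+\epsilon)x_1(t)}.$$

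Setting $\alpha := -(S_{\Delta}+\epsilon) > 0$ turns this into the separable differential inequality $\frac{d}{dt}x_1(t) \leq C e^{-\alpha x_1(t)}$, equivalently $\frac{d}{dt}\bigl(e^{\alpha x_1(t)}\bigr)\leq C\alpha$. Integrating from $0$ to $t$ gives
$$e^{\alpha x_1(t)} \;\leq\; e^{\alpha x_{10}} + C\alpha\, t,$$
so $x_1(t) \leq \frac{1}{\alpha}\ln\!\bigl(e^{\alpha x_{10}} + C\alpha t\bigr) = O(\ln t)$, which is the claim. As a byproduct, the bound $\frac{d}{dt}x_1(t)\leq C$ also furnishes an elementary alternative proof that the solution cannot blow up in finite time, though existence on $[0,+\infty)$ is already provided by Lemma~\ref{prop:mamba2_mu>0}.

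I do not anticipate a real obstacle beyond bookkeeping: the only mild subtlety is ensuring that a single pair $(\epsilon, C)$ works uniformly in $t$, which is legitimate precisely because $x^3 e^{-\epsilon x}$ is bounded on the entire half-line rather than only eventually. Conceptually, the result reflects the fact that the selective step-size $\Delta$, in the regime $S_{\Delta} x_{10}<0$, damps what would be a cubic vector field $\mu x_1^3$ into an exponentially decaying one, converting the finite-time blow-up of Case~$3$ into mere logarithmic growth.
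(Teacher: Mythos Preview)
Your proposal is correct and follows essentially the same route as the paper: reduce to $S_{\Delta}<0$, $x_{10}>0$, invoke Lemma~\ref{prop:mamba2_mu>0} for global existence and monotonicity, bound $\mu x_1^3\Delta(x_1)\leq C e^{(S_\Delta+\epsilon)x_1}$ by splitting off an $e^{-\epsilon x}$ to absorb the cubic, and then integrate the separable inequality to get logarithmic growth. Your version is marginally cleaner in two cosmetic respects---using $\ln(1+z)\leq z$ directly rather than the paper's ratio $\ln(1+e^{S_\Delta x})/e^{S_\Delta x}$, and noting that $x^3e^{-\epsilon x}$ is bounded on all of $[0,\infty)$ so the estimate holds from $t=0$ rather than only for $t\geq t_1$---but the argument is the same.
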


\begin{proof}
    Without loss of generality, let us assume that $S_{\Delta}<0$, thus $x_{10}>0$.
    The case when $S_{\Delta}>0$ can be obtained by replacing $x_1(t)$ by $-x_1(t)$.
    Then it follows from Lemma~\ref{prop:mamba2_mu>0} that $x_1(t)$ is a positive and monotonically increasing function on $[0,+\infty)$. 
    Let us fix a small enough $\epsilon>0$ such that $S_{\Delta}+\epsilon<0$.
    Then, there exist $c_1,t_1>0$ such that:
    \begin{align}\label{eq:thm2-bound-x_l}
        \mu x_1(t)^3 \Delta(x_1(t)) = e^{(S_{\Delta}+\epsilon)x_1(t)} \cdot \frac{\mu x_1(t)^3}{e^{\epsilon x_1(t)}}  \cdot \frac{\ln(1+e^{S_{\Delta}x_1(t)})}{e^{S_{\Delta}x_1(t)}} \leq c_1 e^{(S_{\Delta}+\epsilon)x_1(t)},
    \end{align}
    for all $t \in [t_1,+\infty)$.
    Therefore, we have
    $$\frac{d}{dt}x_1(t) \leq c_1 e^{(S_{\Delta}+\epsilon)x_1(t)}, \quad t \geq t_1,$$
    which yields 
    $$x_1(t) \leq -\frac{2}{S_{\Delta}} \ln \left( -\frac{1}{2} S_{\Delta}c_1 (t - t_1) + e^{-\frac{1}{2} S_{\Delta}x_1(t_1)} \right), \quad \text{ for all } t \geq t_1.$$
    Since $x_1(t)$ is positive, we finally obtain $x_1(t)=O(\ln(t))$ as expected.
\end{proof}

To determine the divergence rate of the other tokens, we will need the following lemma.

\begin{lemma}\label{lem:r_0}
    Let $r_0 (\approx 2.1160)$ be the unique positive root of the function $h(r)=2 \ln(1+e^{-r})-\frac{re^{-r}}{1+e^{-r}}$.
    Then for $\lambda>0$, the function $f(r)=r^2\ln \left( 1+e^{-\lambda r} \right)$ is positive and monotonically decreasing on $[\frac{r_0}{\lambda},+\infty)$.
\end{lemma}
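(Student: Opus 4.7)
\textbf{Proof plan for Lemma~\ref{lem:r_0}.}

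Positivity is immediate: for $r > 0$ and $\lambda > 0$ we have $\ln(1+e^{-\lambda r}) > 0$, hence $f(r) > 0$. The real content is monotonicity, which I will reduce to a sign analysis of $h$ on $[r_0,+\infty)$.

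The first step is a direct computation of $f'$. Using the product and chain rules,
\begin{equation*}
    f'(r) = 2r \ln(1+e^{-\lambda r}) - \frac{\lambda r^{2} e^{-\lambda r}}{1+e^{-\lambda r}} = r \cdot h(\lambda r).
\end{equation*}
Since $r > 0$, establishing $f'(r) \leq 0$ on $[r_0/\lambda,+\infty)$ is equivalent to establishing $h(u) \leq 0$ on $[r_0,+\infty)$, where $u = \lambda r$. So the whole problem collapses onto the one-variable question of where $h$ is nonpositive.

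Next I would study the shape of $h$ by computing $h'$. A short calculation gives
\begin{equation*}
    h'(u) = \frac{e^{-u}\bigl(u - 3 - 3e^{-u}\bigr)}{(1+e^{-u})^{2}}.
\end{equation*}
Let $p(u) = u - 3 - 3e^{-u}$. Since $p'(u) = 1 + 3e^{-u} > 0$, $p$ is strictly increasing on $[0,+\infty)$, with $p(0) = -6 < 0$ and $p(u) \to +\infty$. Thus $p$ has a unique positive root $u^{*}$, and the sign of $h'$ is negative on $[0,u^{*})$ and positive on $(u^{*},+\infty)$. Consequently $h$ is strictly decreasing on $[0,u^{*}]$ and strictly increasing on $[u^{*},+\infty)$.

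Finally I combine this with the boundary values $h(0) = 2\ln 2 > 0$ and $\lim_{u \to +\infty} h(u) = 0$. The strict decrease on $[0,u^{*}]$ gives at most one root there, and together with $h(0) > 0$ it forces the unique positive root $r_0$ to lie in $(0,u^{*})$, with $h(u^{*}) < 0$ (otherwise $h$ would stay nonnegative throughout, contradicting existence of $r_0$). On $[u^{*},+\infty)$ the function $h$ strictly increases from $h(u^{*}) < 0$ up to the limit $0$, so it stays strictly negative. Putting the two monotone pieces together yields $h(u) \leq 0$ for every $u \geq r_0$, which via $f'(r) = r\,h(\lambda r)$ concludes the proof.

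The only mildly delicate step is the sign of $h(u^{*})$, but this is automatic given that $r_0$ is already assumed to exist as a positive root of the strictly decreasing-then-increasing function $h$. No tricky estimate is needed beyond the elementary derivative calculation.
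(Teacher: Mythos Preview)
Your proof is correct and follows exactly the same route as the paper: both reduce monotonicity of $f$ to the identity $f'(r)=r\,h(\lambda r)$ and the nonpositivity of $h$ on $[r_0,+\infty)$. The paper simply asserts this nonpositivity without justification, whereas you supply it via the computation of $h'(u)=e^{-u}(u-3-3e^{-u})/(1+e^{-u})^{2}$ and the resulting decreasing-then-increasing shape of $h$, so your argument is strictly more complete.
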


\begin{proof}
    The lemma follows from the negativity of the function $h(r)$ on $[r_0,+\infty)$ and the equality $\frac{d}{dt}f(r) = r h(\lambda r)$.
\end{proof}

\begin{theorem}[{Slow divergence scenario and divergent rate}]\label{thm:mu>0_delta<0}
    Assume that $\mu = S_C^{\top}S_B>0$ and
    \begin{equation}\label{eq:thm:mu>0_condition}
        S_{\Delta}x_{L0} \leq \ldots \leq S_{\Delta}x_{10} \leq -r_0,
    \end{equation}
    where $r_0 (\approx 2.1160)$ is defined in Lemma~\ref{lem:r_0}.
    Let $x(t)=(x_1(t),\ldots,x_L(t))$ be the unique solution of the dynamic~\eqref{eq:mamba_dynamic_d=1}.
    \begin{enumerate}
        \item For each $l$, if $x_{l0}>0$ (respectively, $x_{l0}<0$), then $x_l(t)$ is a positive increasing (respectively, negative decreasing) function on $[0,+\infty)$. In addition, $x_l(t) = O((\ln t)^l)$.

        \item $\lim_{t \to +\infty} P_{lj}(t)=0$ for all $l \geq j$.
    \end{enumerate}
\end{theorem}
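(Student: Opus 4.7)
My plan is to prove this theorem by a strong induction on the index $l$, combining Lemma~\ref{prop:mamba2_mu>0} (which already gives monotonicity and divergence of each $x_l$) and Lemma~\ref{lem:x_1_increasing} (which supplies the base case $x_1=O(\ln t)$). The hypotheses here ($\mu>0$ and $S_\Delta x_{l0}\leq -r_0$ for all $l$) are strictly stronger than those of Lemma~\ref{prop:mamba2_mu>0}, so positivity/negativity, strict monotonicity, global existence on $[0,+\infty)$ and divergence $|x_l(t)|\to\infty$ are immediate. Moreover, because $|S_\Delta|\,|x_{l0}|\geq r_0$ and each $|x_l(t)|$ is nondecreasing in $t$, Lemma~\ref{lem:r_0} guarantees that $r\mapsto r^2\Delta(r)$ is decreasing on $[|x_{l0}|,+\infty)$; in particular $\phi_l(t):=\mu x_l(t)^2\Delta(x_l(t))$ is monotonically nonincreasing on $[0,+\infty)$ and tends to $0$ as $t\to\infty$.

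For the rate $x_l(t)=O((\ln t)^l)$, I would rewrite the dynamic as
\begin{equation*}
\frac{d}{dt}x_l(t)=\mu x_l(t)^3\Delta(x_l(t))+g_l(t)\,x_l(t)
\end{equation*}
and induct on $l$, following the template of Lemma~\ref{lem:x_1_increasing}. For the diagonal term the same bound $\mu x_l^3\Delta(x_l)\leq c_\epsilon e^{(S_\Delta+\epsilon)x_l}$ (with $\epsilon>0$ small enough that $S_\Delta+\epsilon<0$) still applies. For the off-diagonal drive, I would use the two-step estimate $g_l(t)\leq \sum_{j<l}\phi_j(t)$ together with $\phi_j(t)\leq x_j'(t)/x_j(t)$, yielding
\begin{equation*}
G_l(t):=\int_0^t g_l(s)\,ds\leq \sum_{j=1}^{l-1}\ln\!\Bigl(\frac{x_j(t)}{x_{j0}}\Bigr)=O(\ln\ln t)
\end{equation*}
by the inductive hypothesis. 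Passing to the integrating-factor variable $u_l(t)=e^{-G_l(t)}x_l(t)$ recasts the inequality as $u_l'(t)\leq c_\epsilon e^{-G_l(t)}\,e^{-\beta e^{G_l(t)}u_l(t)}$ with $\beta=-(S_\Delta+\epsilon)>0$; solving this scalar inequality in the spirit of the final step of Lemma~\ref{lem:x_1_increasing}, but now against the extra polylogarithmic factor $e^{G_l(t)}$ inherited from the inductive step, should produce $x_l(t)=O((\ln t)^l)$.

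For part 2, the diagonal entry $P_{ll}(t)=\phi_l(t)$ already tends to $0$ by the argument above. For $l>j$, I would drop the factor $\exp(-a\sum_{k=j+1}^l\Delta(x_k))\leq 1$ and estimate
\begin{equation*}
|P_{lj}(t)|\leq \mu|x_l(t)||x_j(t)|\Delta(x_j(t))\leq \mu|x_l(t)|\cdot |x_j(t)|\,e^{S_\Delta x_j(t)}.
\end{equation*}
Since $|x_l|$ grows at most polylogarithmically by part 1, while $|x_j|e^{S_\Delta x_j}$ decays super-polynomially in $x_j$ (which itself diverges to infinity), the product vanishes as $t\to\infty$.

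The main obstacle I anticipate is the sharp bookkeeping in the inductive step: a naive variation-of-constants bound already yields $x_l(t)=O(t\cdot e^{G_l(t)})$, which is only polynomial in $t$, so extracting exactly one extra logarithmic factor per level requires genuinely exploiting the super-exponential smallness of the diagonal term after substitution, so that the ODE for $u_l(t)$ becomes comparable to the one treated in Lemma~\ref{lem:x_1_increasing}. A secondary delicate point in part 2 is to verify quantitatively that the polylog growth of $|x_l|$ cannot overwhelm the exponential smallness of $\Delta(x_j)$, which ultimately rests on a matching quantitative lower bound for $x_j(t)$, obtainable from an analogue of the proof of Lemma~\ref{lem:x_1_increasing}.
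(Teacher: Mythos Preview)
Your plan recovers global existence, monotonicity and divergence from Lemma~\ref{prop:mamba2_mu>0}, and the base case $x_1=O(\ln t)$ from Lemma~\ref{lem:x_1_increasing}; these parts are fine. The gap is in the inductive step for the sharp exponent $l$. You bound $\phi_j(t)\leq x_j'(t)/x_j(t)$ and integrate to get $e^{G_l(t)}\leq \prod_{j<l}x_j(t)/x_{j0}$, which under the inductive hypothesis is $O\bigl((\ln t)^{1+2+\cdots+(l-1)}\bigr)=O\bigl((\ln t)^{l(l-1)/2}\bigr)$. Solving your reduced inequality for $u_l$ yields only $u_l=O(\ln t)$ (using $e^{-G_l}\leq 1$ and $e^{G_l}\geq 1$), so you obtain $x_l=O\bigl((\ln t)^{l(l-1)/2+1}\bigr)$, which already misses the target for $l=3$. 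Nothing in your integrating-factor scheme forces ``exactly one extra logarithmic factor per level'', and you yourself flag this as the main obstacle without resolving it.

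The missing idea is that the \emph{ordering} hypothesis $S_\Delta x_{L0}\leq\cdots\leq S_\Delta x_{10}$---which you never use---propagates in time: one proves by induction and a comparison argument (crucially using the monotonicity of $r\mapsto r^2\Delta(r)$ from Lemma~\ref{lem:r_0}) that $|x_1(t)|\leq|x_2(t)|\leq\cdots\leq|x_L(t)|$ for all $t\geq 0$. Once this is in hand, $\phi_j(t)\leq\phi_1(t)=x_1'(t)/x_1(t)$ for \emph{every} $j$, so $x_l'/x_l\leq l\,x_1'/x_1$, whence $|x_l(t)|\leq C|x_1(t)|^l=O((\ln t)^l)$ in one stroke. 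The same ordering also collapses part~2: since $|x_j|\leq|x_l|$ and $r\mapsto r\Delta(r)$ is eventually decreasing, $|P_{lj}(t)|\leq\mu x_l(t)^2\Delta(x_l(t))\to 0$, with no need for the quantitative lower bound on $x_j(t)$ that your approach would require.
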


\begin{proof}
    Without loss of generality, we will assume that $S_{\Delta}=-\lambda$ for some $\lambda>0$.
    The case when $S_{\Delta}>0$ is then obtained by changing all $x_l$ to $-x_l$.  
    Then the condition~\eqref{eq:thm:mu>0_condition} becomes
    \begin{equation}\label{eq:thm:mu>0_condition_proof}
        x_{L0} \geq \ldots \geq x_{10} \geq \frac{r_0}{\lambda}.
    \end{equation}
    According to Proposition~\ref{prop:mamba2_mu>0}, $x_l(t)$ is a positive and monotonically increasing function on $[0,+\infty)$ with $\lim_{t \to +\infty} x_l(t) = +\infty$.

    Before estimating the asymptotic behavior of $x_l(t)$ and $P_{lj}(t)$, we first claim that that $x_{l}(t) \geq x_{l-1}(t)$ for all $t \in [0,+\infty)$ and $l \geq 2$.
    We will prove this claim by induction on $l$.
    Indeed, in case $l=2$, we observe that
    \begin{align*}
        \frac{d}{dt} x_2(t) = \mu x_2(t)^3 \Delta(x_2(t)) + g_2(t)x_2(t) \geq \mu x_2(t)^3 \Delta(x_2(t)).
    \end{align*}
    Since $x_{20} \geq x_{10}$, it follows that $x_2(t) \geq x_1(t)$ for all $t \in [0,+\infty)$.
    Let us assume that $l>2$ and $x_{l-1}(t) \geq \ldots \geq x_1(t)$ for all $t \in [0,+\infty)$.
    It is noted that $x_1(t) \geq x_{10} \geq \frac{r}{\lambda}$.
    According to Lemma~\ref{lem:r_0}, we have
    \begin{align}\label{eq:thm2_mu>0_x_l}
        x_{j}(t)^2 \Delta(x_j(t)) \geq x_{j-1}(t)^2 \Delta(x_{j-1}(t)), \quad j=2,\ldots,l-1.
    \end{align}
    Recall that
    \begin{align}
        \frac{d}{dt} x_l(t) &= \mu x_l(t)^3 \Delta(x_l(t)) + g_l(t) x_l(t)\noindent\\
        &= \mu x_l(t)^3 \Delta(x_l(t)) + 
            \mu x_l(t) \sum\limits_{j=1}^{l-1} x_j(t)^2 \Delta(x_j(t)) \exp \left( -a\sum\limits_{k=j+1}^{l-1} \Delta(x_k(t)) \right). \label{eq:thm2_mu>0_x_l_origin}
    \end{align}
    By removing the term with the index $j=1$ inside the sum and using inequality~\eqref{eq:thm2_mu>0_x_l}, we obtain  
    \begin{align*}
        \frac{d}{dt} x_l(t) &\geq \mu x_l(t)^3 \Delta(x_l(t)) + 
            \mu x_l(t) \sum\limits_{j=2}^{l-1} x_{j-1}(t)^2 \Delta(x_{j-1}(t)) \exp \left( -a\sum\limits_{k=j+1}^{l-1} \Delta(x_k(t)) \right)\\
            &= \mu x_l(t)^3 \Delta(x_l(t)) + 
            \mu x_l(t) \sum\limits_{j=1}^{l-2} x_{j}(t)^2 \Delta(x_{j}(t)) \exp \left( -a\sum\limits_{k=j+2}^{l-1} \Delta(x_k(t)) \right).
    \end{align*}
    Since $\Delta(x_k(t)) \leq \Delta(x_{k-1}(t))$ for all $k=2,\ldots,l-1$, we can proceed the last expression as
    \begin{align*}
        \frac{d}{dt} x_l(t) &\geq \mu x_l(t)^3 \Delta(x_l(t)) + 
            \mu x_l(t) \sum\limits_{j=1}^{l-2} x_{j}(t)^2 \Delta(x_{j}(t)) \exp \left( -a\sum\limits_{k=j+1}^{l-2} \Delta(x_k(t)) -a\Delta(x_{l}(t)) \right).
    \end{align*}
    Therefore, if we set
    \begin{align*}
        f(u) = \mu u^3 \Delta(u) + 
            \mu u \sum\limits_{j=1}^{l-2} x_{j}(t)^2 \Delta(x_{j}(t)) \exp \left( -a\sum\limits_{k=j+1}^{l-2} \Delta(x_k(t)) -a\Delta(u) \right),
    \end{align*} 
    then $x_{l-1}(t)$ is a solution of the differential equation $\frac{d}{dt}u(t)=f(u(t))$, while $\frac{d}{dt}x_l(t) \geq f(x_l(t)$ for all $t \in [0,+\infty)$.
    It follows from the initial conditions $x_{l0} \geq x_{l-1,0}$ that $x_l(t) \geq x_{l-1}(t)$ for all $t \in [0,+\infty)$.
    The claim is then proved.

    Next, let us go back to the analysis of the asymptotic behaviour of tokens and hidden attention scores.

    \begin{enumerate}
        \item For $l=1$, it is already known from Lemma~\ref{lem:x_1_increasing} that $x_1(t) = O(\ln(t))$.
        In case $l \geq 2$, according to equation~\eqref{eq:thm2_mu>0_x_l_origin}, we have
        \begin{align}\label{eq:thm2_mu>0_x_l_divided}
            \frac{\frac{d}{dt}x_l(t)}{x_l(t)} = \mu x_l(t)^2 \Delta(x_l(t)) + \sum\limits_{j=1}^{l-1} \mu x_j(t)^2 \Delta(x_j(t)) \exp \left( -a \sum\limits_{k=j+1}^{l-1} \Delta(x_k(t)) \right)
        \end{align}
        For each $j=1,\ldots,l$, since $x_j(t) \geq x_1(t)$ and the function $r \mapsto r^2\Delta(r)$ is monotonically decreasing over $[\frac{r_0}{\lambda},+\infty)$, we have $x_j(t)^2 \Delta(x_j(t)) \leq x_1(t)^2 \Delta(x_1(t))$ for all $t>0$ large enough. 
        In addition, we also have  and $\exp \left( -a \sum\limits_{k=j+1}^{l-1} \Delta(x_k(t)) \right) \leq 1$ for all $t \in [0,+\infty)$.
        Therefore, we can upper bound the right hand side of equation~\eqref{eq:thm2_mu>0_x_l_divided} as
        \begin{align}
            \frac{\frac{d}{dt}x_l(t)}{x_l(t)} \leq l\mu x_1(t)^2 \Delta(x_1(t)) = l \frac{\frac{d}{dt}x_1(t)}{x_1(t)}.
        \end{align}
        Thus, $\ln x_l(t) \leq l \ln x_1(t)$, which implies that 
        $$x_l(t) \leq x_1(t)^l = O((\ln t)^l).$$

        \item For the hidden attention score, it follows from its definition in equation~\eqref{eq:Plj_D=1} that 
        $$P_{lj}(t) \leq \mu x_l(t) x_j(t) \Delta(x_j(t))$$
        for all $t \geq 0$.
        Since $S_{\Delta}<0$, the function $r \mapsto r \Delta(r) = r \ln(1+e^{S_{\Delta}r})$ is monotonically decreasing from a large enough $t$.
        Therefore, it follows from the claim above that
        $$P_{lj}(t) \leq \mu x_l(t)^2 \Delta(x_l(t))$$
        for all $t$ large enough.
        Hence, $\lim_{t \to +\infty} P_{lj}(t) = \lim_{r \to +\infty} \mu r^2 \Delta(r)=0$.
    \end{enumerate}
    The theorem is then proved.
\end{proof}

\subsection{Fast-divergence scenario: $\mu=S_C^{\top}S_B>0$ and $S_{\Delta}x_{l0}>0$ for some $l$}\label{appx:scenario_3}

\begin{theorem}[{Fast-divergence scenario}]\label{thm:mamba2_mu>0_case3}
    Assume that $\mu=S_C^{\top}S_B>0$.
    Let $x(t)=(x_1(t),\ldots,x_L(t))$ be the unique solution of the dynamic~\eqref{eq:mamba_dynamic_d=1}.
    If there exists $l=1,\ldots,L$ such that $S_{\Delta}x_{l0}>0$, then $x_l(t)$ goes to infinity at finite time.

    In addition, assume that $x_{l_0}(t)$ tends to infinity first as $t \to T^-$ for some $T>0$ while $x_l(t)$ is defined on $[0,T]$ for all $l \neq l_0$. Then for every $L \geq l\geq j \geq 1$,  $\lim_{t \to T^-}P_{lj}(t)=+\infty$ if and only if $l=l_0$ or $j=l_0$.
\end{theorem}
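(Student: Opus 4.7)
I would reduce without loss of generality to $S_\Delta > 0$ and $x_{l_0,0} > 0$ (otherwise flip signs of all tokens). The ODE factors as
\[
\dot x_{l_0}(t) = \mu\, x_{l_0}(t) \sum_{j=1}^{l_0} x_j(t)^2\, \Delta(x_j(t))\, \exp\!\Bigl( -a \sum_{k=j+1}^{l_0} \Delta(x_k(t)) \Bigr),
\]
whose bracket is nonnegative irrespective of the signs of the other $x_j$. Lemma~\ref{lem:positive_increasing} then guarantees that $x_{l_0}$ stays positive and monotonically increasing on its right maximal interval $[0,T)$. Retaining only the $j=l_0$ summand and using the monotonicity of $\Delta$ for $S_\Delta>0$, so that $\Delta(x_{l_0}(t)) \geq \Delta(x_{l_0,0}) = \ln(1 + e^{S_\Delta x_{l_0,0}}) > \ln 2$, gives $\dot x_{l_0}(t) \geq \mu \ln 2 \cdot x_{l_0}(t)^3$. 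Comparison with the explicit solution of $\dot y = c y^3$ forces blow-up by time $T \leq (2\mu \ln 2\cdot x_{l_0,0}^2)^{-1}$.

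\textbf{Plan for Part 2 (attention scores).} Fix this first blow-up time $T$ and the index $l_0$. By hypothesis each $x_l$ with $l\neq l_0$ is continuous on the compact interval $[0,T]$, and by Lemma~\ref{lem:positive_increasing} it keeps its initial sign; hence $x_l(t)$ and $\Delta(x_l(t))$ are bounded and bounded away from zero on $[0,T]$ for every $l\neq l_0$. In contrast $|x_{l_0}(t)| \to \infty$ with $S_\Delta x_{l_0}(t) \to +\infty$, so $\Delta(x_{l_0}(t)) \sim S_\Delta x_{l_0}(t) \to +\infty$. I would then resolve the limits of $P_{lj}(t)$ in four cases. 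Case (a) $l=j=l_0$: directly $P_{l_0 l_0}=\mu x_{l_0}^2 \Delta(x_{l_0}) \to +\infty$. Case (b) $j=l_0<l$: the exponent $-a\sum_{k=l_0+1}^l \Delta(x_k)$ does \emph{not} involve $\Delta(x_{l_0})$, so the exponential factor stays bounded and $|P_{l,l_0}| \sim \text{const}\cdot x_{l_0}\Delta(x_{l_0}) \to +\infty$. Case (d) $l, j \neq l_0$: every factor is bounded, so $P_{lj}$ is bounded on $[0,T]$ and thus cannot diverge, giving the ``only if'' direction away from the critical row/column.

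\textbf{Main obstacle: case $l=l_0>j$.} The delicate case is (c), in which the exponent $-a\sum_{k=j+1}^{l_0}\Delta(x_k)$ \emph{does} contain $\Delta(x_{l_0}) \to +\infty$, pitting the polynomial factor $x_{l_0}$ against the exponential decay $\exp(-a\Delta(x_{l_0}))$. To decide the limit I would extract the sharp blow-up rate: from $\dot x_{l_0} \sim \mu S_\Delta x_{l_0}^4$ for large $x_{l_0}$ one gets $x_{l_0}(t) \sim (3\mu S_\Delta (T-t))^{-1/3}$, polynomial in $(T-t)^{-1}$, whereas $\exp(-a\Delta(x_{l_0})) \sim \exp(-a S_\Delta x_{l_0})$ is double-exponential in $(T-t)^{-1/3}$. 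The exponential dominates the polynomial, so naively $P_{l_0,j}(t) \to 0$ rather than $+\infty$. Closing this subcase of the ``if'' direction is therefore the central technical point: one needs either a subtler asymptotic expansion that re-inflates the prefactor beyond the leading order, or a refinement of the characterization to ``$j = l_0$ or $l = j = l_0$''. I would finish the proof by carrying out this asymptotic analysis carefully on the blow-up ansatz and aligning the stated criterion with what the rate computation actually yields.
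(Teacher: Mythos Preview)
Your Part~1 is essentially the paper's argument: drop all but the diagonal summand, use monotonicity of $\Delta$ (for $S_\Delta>0$) to freeze it at its initial value, and compare with $\dot y = c y^3$. The paper keeps the sharper constant $\ln(1+e^{S_\Delta x_{l0}})$ rather than your $\ln 2$, but this is immaterial.

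For Part~2 you are considerably more careful than the paper. The paper's entire treatment of the attention scores is the assertion ``Then $\lim_{t\to T^-} x_{l_0}^2(t)\Delta(x_l(t))=+\infty$. Then for every $L\ge l\ge j\ge 1$, $\lim_{t\to T^-} P_{lj}(t)=\infty$ if and only if $l=l_0$ or $j=l_0$,'' with no case analysis and no mention of the exponential factor. Your cases (a), (b), (d) are handled correctly and go beyond what the paper actually writes down.

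The obstacle you isolate in case~(c), $l=l_0>j$, is genuine and the paper does not resolve it. Your asymptotic is right: $\Delta(x_{l_0}(t))\sim S_\Delta x_{l_0}(t)\to+\infty$, so
\[
x_{l_0}(t)\,\exp\!\bigl(-a\,\Delta(x_{l_0}(t))\bigr)\;\longrightarrow\;0,
\]
and since every other factor in $P_{l_0,j}$ stays bounded for $j<l_0$, one gets $P_{l_0,j}(t)\to 0$, not $+\infty$. No sharper blow-up ansatz will reverse this; the exponential damping from the $k=l_0$ term in the exponent always dominates the single polynomial factor $x_{l_0}$. So the ``if'' direction as stated fails on row $l_0$ strictly left of the diagonal, and your suggested refinement of the characterization is the correct fix, not a missing estimate. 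A smaller point: in case~(b) the sign of $P_{l,l_0}$ is that of $x_l$, so without a sign hypothesis the conclusion should be $|P_{l,l_0}|\to\infty$ rather than $P_{l,l_0}\to+\infty$.
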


\begin{proof}
Without loss of generality, we assume that $S_{\Delta}>0$. The case when $S_{\Delta}<0$ is then obtained by replacing $x(t)$ by $-x(t)$.
Then $x_{l0} > 0$.
Recall that
    \begin{equation}\label{eq:thm2-mamba2_blowup}
        \frac{d}{dt}x_l(t) = \mu x_l(t)^3 \Delta(x_l(t)) + g_l(t) x_l(t),
    \end{equation}
    where $g_l(t)$ defined in equation~\eqref{eq:thm2-g_l}.
    Since $\mu>0$, $g_l(t)$ is a nonnegative function on $[0,+\infty)$.
    It follows from Lemma~\ref{lem:positive_increasing} that $x_l(t)$ is a positive increasing function on its right maximal interval of the existence $[0,T)$ for some $T \in (0,+\infty]$.
    By equation~\eqref{eq:thm2-mamba2_blowup}, we have 
    $$\frac{d}{dt}x_l(t) \geq \mu x_l(t)^3 \Delta(x_l(t)).$$
    Furthermore, since $S_{\Delta}>0$, the function $\Delta(r) = \ln(1+e^{S_{\Delta}r})$ is monotonically increasing on $[0,+\infty)$.
    Therefore, $\Delta(x_l(t)) \geq \Delta(x_l(0)) = \ln(1+e^{S_{\Delta} x_{l0}})$ for all $t \in [0,+\infty)$.
    As a consequence, we can proceed the above inequality further as
    $$\frac{d}{dt}x_l(t) \geq \mu \ln(1+e^{S_{\Delta}x_{l0}}) x_l(t)^3.$$
    Thus,
    $$x_l(t) \geq \left( 2\mu \ln(1+e^{S_{\Delta}x_{l0}}) t - x_{l0}^{-2} \right)^{-\frac{1}{2}},$$
    which goes to $+\infty$ at finite time.

    Next, let us assume that $x_{l0}(t)$ goes to infinity at finite time faster than the other tokens, i.e, there exists $T>0$ such that $\lim_{t \to T^-}x_{l_0}(t)=+\infty$, while $x_l(t)$ defined over $[0,T]$ for all $l \neq l_0$.
    Then $\lim_{t \to T^-}x_{l_0}^2(t)\Delta(x_{l}(t)) = +\infty$.
    Then for every $L \geq l\geq j \geq 1$,  $\lim_{t \to T^-}P_{lj}(t)=\infty$ if and only if $l=l_0$ or $j=l_0$.
\end{proof}

\section{Higher Dimension with/without additional components}\label{appx:higher_dim}

{\color{black}

When the dimension of the input tokens is $D \geq 1$, the input-output projection matrix $S_C^\top S_B$ is a square matrix of size $D \times D$, which may have complex eigenvalues. We conjecture that the dynamic behavior of the tokens in system~\eqref{eq:mamba_dynamic} can be determined by analyzing the matrix 
\[
\mu = \frac{1}{2} \left( S_C^\top S_B + (S_C^\top S_B)^\top \right),
\]
which represents the symmetric part of the input-output projection matrix $S_C^\top S_B$.
This conjecture is motivated by the observation that the term $x_l^\top \cdot (S_C^\top S_B) \cdot x_l$ in the considered dynamical system~\eqref{eq:mamba_dynamic} is a scalar, and it can be expressed as:
\[
x_l^\top \cdot (S_C^\top S_B) \cdot x_l = \frac{1}{2} \left( x_l^\top \cdot (S_C^\top S_B) \cdot x_l + \left( x_l^\top \cdot (S_C^\top S_B) \cdot x_l \right)^\top \right) = x_l^\top \cdot \mu \cdot x_l.
\]
The matrix $\mu$ has only real eigenvalues. We particularly conjecture that all tokens will converge to zero if $\mu$ has only negative eigenvalues, whereas the tokens will diverge to infinity if $\mu$ has at least one positive eigenvalue.

Figures~\ref{fig:dim_two} illustrate this conjecture by visualizing the trajectories of four tokens in the two-dimensional case under different scenarios. In these figures, the parameters and initial values used on each case given as follows:
\begin{itemize}
    \item \textbf{Figure~\ref{fig:dim_two}A ($\mu$ has two negative eigenvalues)}: 
    \begin{itemize}
        \item $S_C^{\top}S_B = \begin{bmatrix}
            -0.552679&-0.843293
            \\0.869146&-0.967042
        \end{bmatrix}$, thus the eigenvalues of $\mu$ are $-0.967445$ and $-0.552276$,
        \item $S_{\Delta} = \begin{bmatrix}
            0.287585&0.99662\\
            -0.201208&-0.964587
        \end{bmatrix}$,
        \item $A=-0.370332 I_2$,
        \item initial values 
            $x_1 = (-1.47982,-0.228103)$, 
            $x_2=(-0.406453,1.24415)$,
            $x_3=(1.8491,-0.625385)$,
            $x_4=(1.26989,-1.91216)$.
    \end{itemize}
    
    \item \textbf{Figure~\ref{fig:dim_two}B ($\mu$ has one positive eigenvalue and one negative eigenvalue)}:  
    \begin{itemize}
        \item $S_C^{\top}S_B = \begin{bmatrix}
            -0.155283&0.542694\\
            0.989821&0.260748
        \end{bmatrix}$, thus the eigenvalues of $\mu$ are $0.846723$ and $-0.741258$,
        \item $S_{\Delta} = \begin{bmatrix}
            0.430263&0.555071\\
            -0.654555&0.422737
        \end{bmatrix}$,
        \item $A = -0.573698I_2$,
        \item initial values 
            $x_1=(1.39902,1.60628)$,
            $x_2=(-0.342366,-0.845203)$,
            $x_3=(0.616744,1.63846)$,
            $x_4=(-0.185335,-1.34566)$.
    \end{itemize}

    \item \textbf{Figure~\ref{fig:dim_two}C ($\mu$ has two positive eigenvalues)}: 
    \begin{itemize}
        \item $S_C^{\top}S_B = \begin{bmatrix}
            0.981721&-0.803219\\
            -0.342524&0.605171
        \end{bmatrix}$, thus the eigenvalues of $\mu$ are $1.39646$ and $0.190429$,
        \item $S_{\Delta} = \begin{bmatrix}
            0.653732&-0.228578\\
            -0.960714&-0.495344
        \end{bmatrix}$,
        \item $A = -0.997408I_2$,
        \item initial values 
            $x_1=(1.1932,-0.702409)$,
            $x_2=(-1.49159,-0.735305)$,
            $x_3=(1.21287,-0.816296)$,
            $x_4=(-0.462258,1.44549)$.
    \end{itemize}
    
\end{itemize}
We observe from Figures~\ref{fig:dim_two}C that all tokens converge to zero when $\mu$ has only negative eigenvalues. While all tokens diverge to infinity when $\mu$ has at least one positive eigenvalue.

}


\begin{figure}
    \centering

    \includegraphics[width=\textwidth]{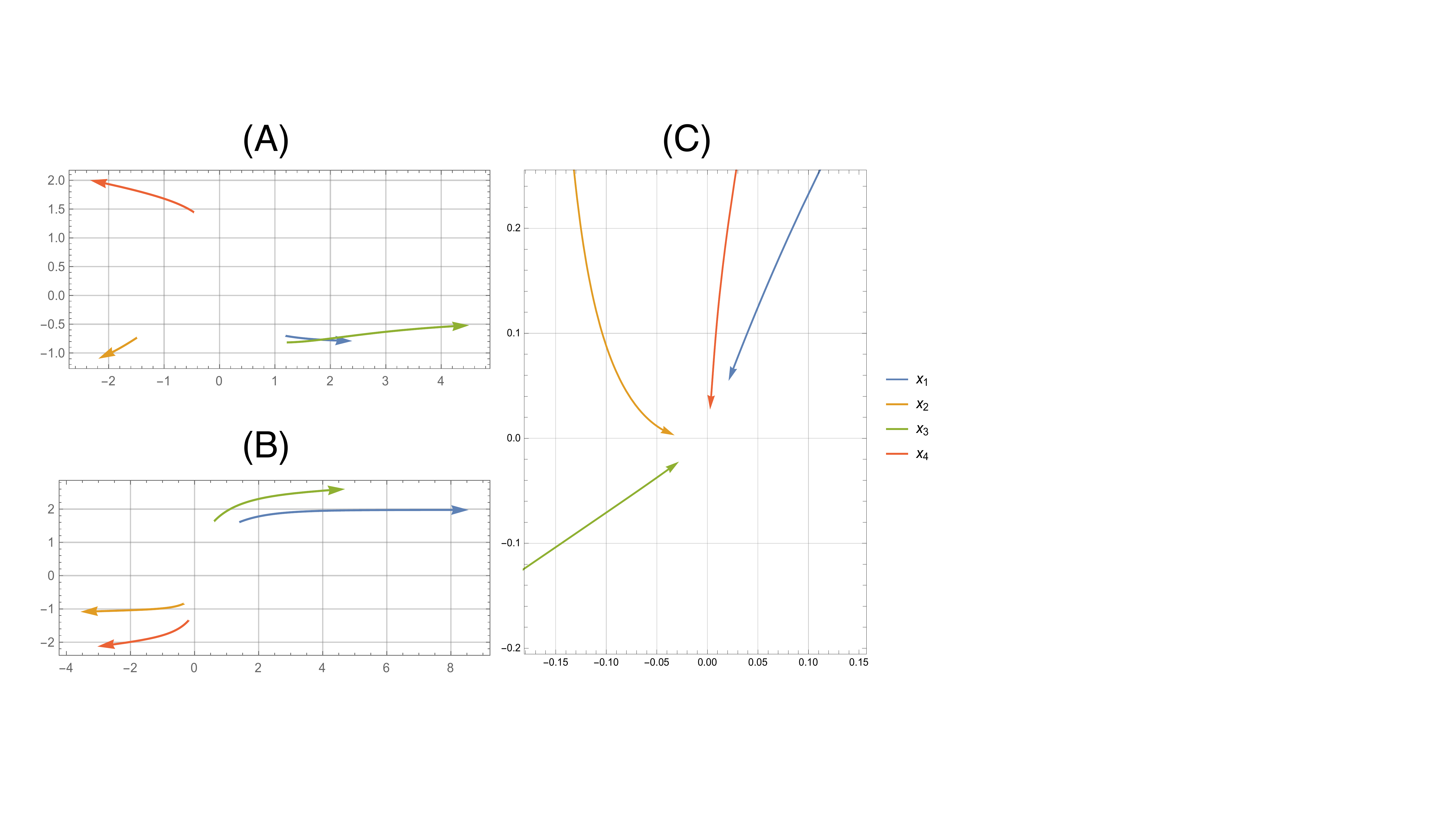}
\caption{{\color{black} Tokens dynamic behaviors in two-dimensional case where the input/output projection matrix $S_C^{\top}S_B$ has two positive eigenvalues (A), or one positive eigenvalue (B), or two negative eigenvalues (C). We observe that, in cases (A) and (B), all tokens will diverge to infinity, and in case (C), all tokens will converge to the origin.}}
    \label{fig:dim_two}
\end{figure}

\section{Experiment Details}
\label{appendix:ex_detail}

In this section, we outline the experimental setup in detail. All experiments are conducted using a server with four A100 GPUs.

\subsection{Auto-regressive language modeling}
\label{appendix:ex_detail:wt103}

\textbf{Dataset} We utilize the {\sc WikiText103} \citep{merity2017pointer}, created from Wikipedia articles. It includes a training set of approximately 28,000 articles, amounting to 103 million words in total. Each article is split into sections of about 3,600 words. The validation and test sets contain 60 articles each, with word counts of 218,000 and 246,000 respectively, combining for a total of roughly 268,000 words.

\textbf{Parameterization for three scenarios:} We outline the parameterization used to ensure the $\operatorname{S6}$ layer operates within the three scenarios described in Section \ref{subsec:convergence_experiment}. For the first two scenarios, we leverage the $LDL^T$ decomposition \citep{golub_van_loan_2013_matrix_computations} to ensure the positive or negative definite property of input-output matrix. Specifically, we parameterize $S_C = L^T$ and $S_B = DL^T$, where $D$ is a diagonal matrix and $L$ is a lower triangular matrix with diagonal values fixed at 1. In the first scenario, the diagonal values of $D$ are constrained to be positive and are parameterized as $D=\operatorname{diag}(\operatorname{softplus}(d'))$. In the second scenario, this constraint is reversed by parameterizing $D=-\operatorname{diag}(\operatorname{softplus}(d'))$, ensuring that the diagonal values are negative. In the third scenario, where both positive and negative eigenvalues are required, we modify the diagonal matrix $ D $ by changing the sign of half of its diagonal elements. Specifically, we use  $D = \operatorname{diag}(s \odot \operatorname{softplus}(d'))$, where $s = [1, \dots, 1, -1, \dots, -1]$ is a vector with half of its elements set to $1$ and the other half set to $-1$.

\textbf{Training details} We train a 130M parameters version of Mamba model \citep{gu2024mamba} on {\sc WikiText103} for 25 epochs with GPT-2 tokenizer \citep{radford2019languagegpt2} and AdamW optimizer \citep{loshchilov2017decoupledweightdecayregularizationadamw}. Other hyperparameters are set identically across three scenarios and listed in Table \ref{tab:appendix:experiment_detail_wt103}.

\begin{table}[]
    \centering
    \caption{Hyperparameters configuration for Language Modeling task on {\sc WikiText103}.}
    \begin{tabular}{cc}
        \hline
         Sequence Length & 1024 \\
         Peak Learning Rate & 0.0015 \\
         Momentum & $\beta_1=0.9; \beta_2=0.999$ \\
         Weight Decay & 0.25 \\
         Batch size & 128 \\
         Learning rate warmup & Linear \\
         Learning rate scheduler & Cosine decay \\
         Dropout & 0.25 \\
         \hline
    \end{tabular}
    \label{tab:appendix:experiment_detail_wt103}
\end{table}

\subsection{Image classification}
\label{appendix:ex_detail:imagenet}

\textbf{Dataset} The ImageNet-1K dataset \citep{deng2009imagenet} is a widely recognized benchmark in the field of computer vision, commonly used for training and evaluating models on the task of large-scale image classification. It contains a collection of 1.28 million labeled training images and 50,000 validation images. Each image is labeled with one of 1,000 distinct classes, representing a broad variety of objects, animals, scenes, and more, allowing the model to learn a rich and diverse range of visual concepts.

\textbf{Training details} We train the tiny version (31.8M parameters) of MambaVision model \citep{hatamizadeh2024mambavision} with and without our token reordering technique. We follow the training procedure in \citep{yang2021focal, hatamizadeh2023global, hatamizadeh2024mambavision} and set the hyperparameters for two cases identical, which is listed in Table \ref{tab:appendix:experiment_detail_imgnet}.

\begin{table}[]
    \centering
    \caption{Hyperparameters configuration for Image Classification task on ImageNet-1K.}
    \begin{tabular}{cc}
        \hline
         Optimizer & AdamW \\
         Peak Learning Rate & 0.0008 \\
         Momentum & $\beta_1=0.9; \beta_2=0.999$ \\
         Weight Decay & 0.05 \\
         Batch size & 512 \\
         Learning rate warmup & Linear \\
         Learning rate scheduler & Cosine decay \\  
         Dropout & 0.0 \\
         \hline
    \end{tabular}
    \label{tab:appendix:experiment_detail_imgnet}
\end{table}

{\color{black}
\section{Additional Experimental Results}
\label{sec:app:ex}

\subsection{Re-ordering input tokens}
\label{sec:app:ex:reorder}
In this section, we conduct an additional experiment to assess the performance of our re-ordering technique across tasks beyond the vision task reported in Section~\ref{subsec:reorder}.

\subsubsection{Language Modeling}

We first present experiment to evaluate the effectiveness of our reordering technique across a broad range of architectures. We use the language modeling task on the WikiText-103 benchmark to test three models: Mamba (\cite{gu2024mamba}), H3 (\cite{fu2023hungry}), and Transformer (\cite{vaswani2017attention}). For the latter two models, our importance score in Equation~\ref{eq:importance_score} cannot be directly calculated as these models do not have the step size matrix $S_\Delta$. To address this, we made the following adjustments:

\begin{itemize}
    \item H3 Model: We calculate the token importance score as $s = \langle x, K \rangle$, where $K$ is a learnable vector.
    \item Transformer Model: We converted a pretrained model (GPT-2 small (\cite{radford2019languagegpt2}) in our experiment) into the Mamba format and fine-tuned it, following the method outlined in (\cite{junxiongdaniele2025mambainllama}). On the converted model, we calculated the token importance score as described in Equation~\ref{eq:importance_score}.
\end{itemize}

The results, presented in Table~\ref{tab:perplexity_different_models}, demonstrate that the reordering method consistently improves perplexity across all tested models. These findings highlight the potential of the reordering approach to enhance the performance of a wide range of architectures.

\begin{table}[h!]
\centering
\caption{\black{Test perplexity on different models}}
\black{
\begin{tabular}{cccc}
\hline
Model         & With reordering & Without reordering & Improvement with reordering \\ \hline
Mamba                & 14.97                   & 16.46                      & 1.49                               \\ 
H3                   & 20.72                   & 21.61                      & 0.89                               \\
Transformer          & 13.44                   & 16.98                          & 3.54                                  \\ \hline
\end{tabular}
\label{tab:perplexity_different_models}
}
\end{table}

\subsubsection{Text classification}

We now evaluate our method on text classification tasks using the IMDB (\cite{maas-etal-2011-imdb}) and AGNews (\cite{2015yahooagnews}) datasets, using Mamba (\cite{gu2024mamba}) as baseline model. The results in Table~\ref{tab:ex:reorder_text_classification} indicate that our re-ordering technique further improves the baseline model's performance on these tasks.

\begin{table}[h]
    \centering
    \caption{\black{Test accuracy ($\uparrow$) on IMDB and AGNews datasets}}
\black{
    \begin{tabular}{ccc}
        \hline
        Model & IMDB & AGNews \\ \hline
        Mamba (baseline) & 88.46 & 92.08 \\ 
        Mamba + Token reordering & \textbf{88.66} & \textbf{92.37} \\\hline
    \end{tabular}
    \label{tab:ex:reorder_text_classification}
    }
\end{table}

\subsection{Implementation detail and Computational Cost}
\label{appendix:alg_cost}
To provide a clearer understanding of our re-ordering technique, we include the pseudo-code for calculating token importance scores and forwarding through a single SSM layer in Algorithm~\ref{alg:ssm_with_reorder}. The computational overhead of our technique lies in steps 1, 2, 3, and 4, which require $O(D^2 \times L + D\times L + L\times L + D\times L^2)$ operations per input sequence.

\begin{algorithm}
\caption{\black{Forwarding through a SSM layer with reordering}}

\begin{algorithmic}[1]

\REQUIRE 
\black{Input sequence $\mathbf{x} \in \mathbb{R}^{D \times L}$, hidden matrix $A \in \mathbb{R}^{N \times N}$, input matrix $S_B \in \mathbb{R}^{N \times D}$, output matrix $S_C \in \mathbb{R}^{N \times D}$, step size matrix $S_\Delta \in \mathbb{R}^{D\times D}$, learnable vector $K \in \mathbb{R}^{D \times 1}$, hyperparameters $\tau$ and $p$ of $\operatorname{Softsort}.$}

\ENSURE 
\black{Output sequence $\mathbf{y}\in \mathbb{R}^{D\times L}$}

\STATE \black{Calculate importance score for each token in sequencce, combined into a vector $\mathbf{s}= (S_\Delta \mathbf{x})^\top K \in \mathbb{R}^{L\times 1}$}

\STATE 
\black{Sort the score vector in decreasing order: $\mathbf{s}_{sorted}=\operatorname{Sort}(\mathbf{s}) \in \mathbb{R}^{L\times 1}$}

\STATE 
\black{Calculate the reordering matrix $P=\operatorname{Softmax}(-\frac{(\mathbf{s}_{sorted} \mathbf{1}_L^\top-\mathbf{1}_L\mathbf{s}^\top)^p}{\tau})$ \COMMENT{The power function is applied element-wise and $\operatorname{Softmax}$ is applied row-wise}}

\STATE 
\black{Reorder the input $\mathbf{x}_{\text{reordered}} = \mathbf{x}P^T \in \mathbb{R}^{D \times L}$}
\STATE 
\black{Calculate the output sequence with selective scan  \\ $\mathbf{y}=\operatorname{SSM}(A,  S_B \mathbf{x}_{reordered}, S_C \mathbf{x}_{reordered})(\mathbf{x}_{reordered})$}
\RETURN 
\black{$\mathbf{y}$}

\end{algorithmic}

\label{alg:ssm_with_reorder}
\end{algorithm}

Additionally, we present the empirical computational costs of our experiment in Section~\ref{subsec:reorder}, benchmarked on the ImageNet classification task using MambaVision-T (\cite{hatamizadeh2024mambavision}) as the baseline. Table~\ref{tab:ex:computational_cost} summarizes key metrics, including memory usage, training time per epoch, parameter count, and FLOPS. The results show that our reordering technique introduces minimal overhead, with only slight increases across all metrics.

\begin{table}[h]
    \centering
    \caption{\black{Comparision of computational cost on model training with MambaVision-T (\cite{hatamizadeh2024mambavision}) as Baseline.}}

\black{
    \begin{tabular}{ccccc}
        \hline
        Model & \#params & Training time / epoch & Memory / GPU & FLOPS \\ \hline
        Baseline & 31.79M & 308s & 7232MB & 8.93GFLOPs \\ 
        + reordering & 31.79M & 322s & 7334MB & 8.94GFLOPs \\ \hline
    \end{tabular}
    \label{tab:ex:computational_cost}
    }
\vspace{-0.2in}
\end{table}

\subsection{Robustness on {\sc WikiText103} Experiment}

To assess the robustness of our findings across three scenarios of input-output matrices on the {\sc WikiText103} (\cite{merity2017pointer}) language modeling task, we conduct the same experiment described in Section~\ref{subsec:convergence_experiment} with three different random seeds for each scenario and report the mean and standard deviation. The results in Table~\ref{tab:ex:wt103_seeds} suggest that our findings are robust to different random seeds.

\begin{table}[h]
    \centering
    \caption{\black{Test perplexity on {\sc WikiText103} ($\downarrow$). Mean and standard deviation are computed over three runs with different random seeds. Models with fewer negative eigenvalues in $S_C^\top S_B$ consistently achieve lower perplexity.}}
\black{
    \begin{tabular}{cc}
        \hline
        Scenario & Perplexity ($\downarrow$) \\ \hline
        Negative eigenvalues & $17.28 \pm 0.02$ \\
        Mixed eigenvalues & $16.82 \pm 0.02$ \\ 
        Positive eigenvalues & $16.66 \pm 0.05$ \\ \hline
    \end{tabular}
    \label{tab:ex:wt103_seeds}
    }
\vspace{-0.2in}
\end{table}

\subsection{Dynamic of standard Mamba architecture}

\begin{figure}[h!]
    \centering
    \begin{subfigure}[h]{0.3\textwidth}
        \centering
        \includegraphics[width=\textwidth]{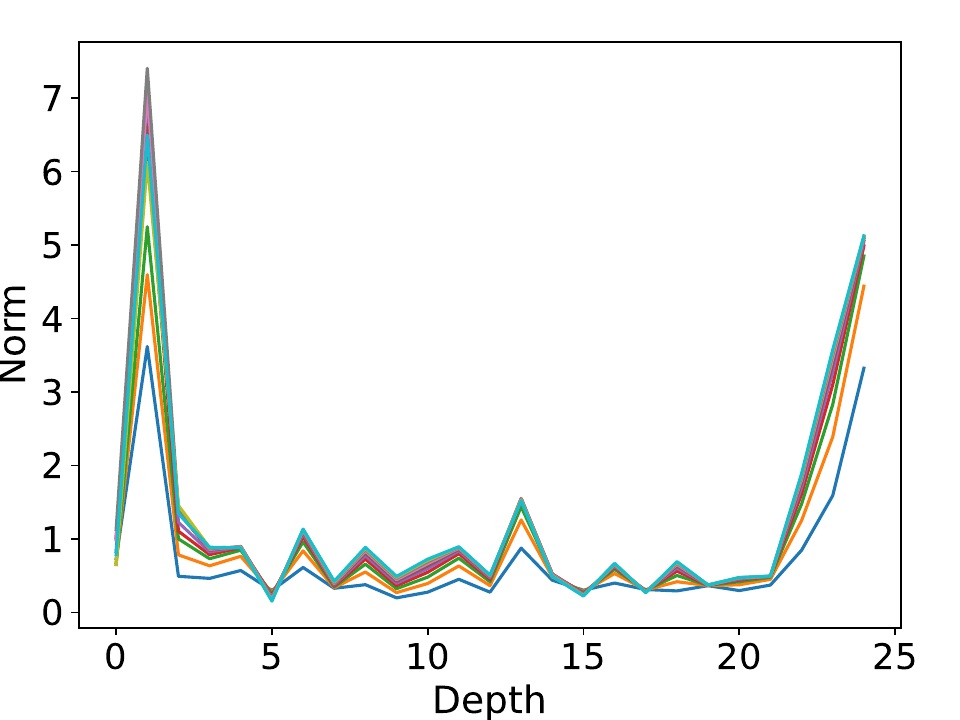}
        \caption{\black{Unconstrained eigenvalues}}
    \end{subfigure}%
    \hfill 
    \begin{subfigure}[h]{0.3\textwidth}
        \centering
        \includegraphics[width=\textwidth]{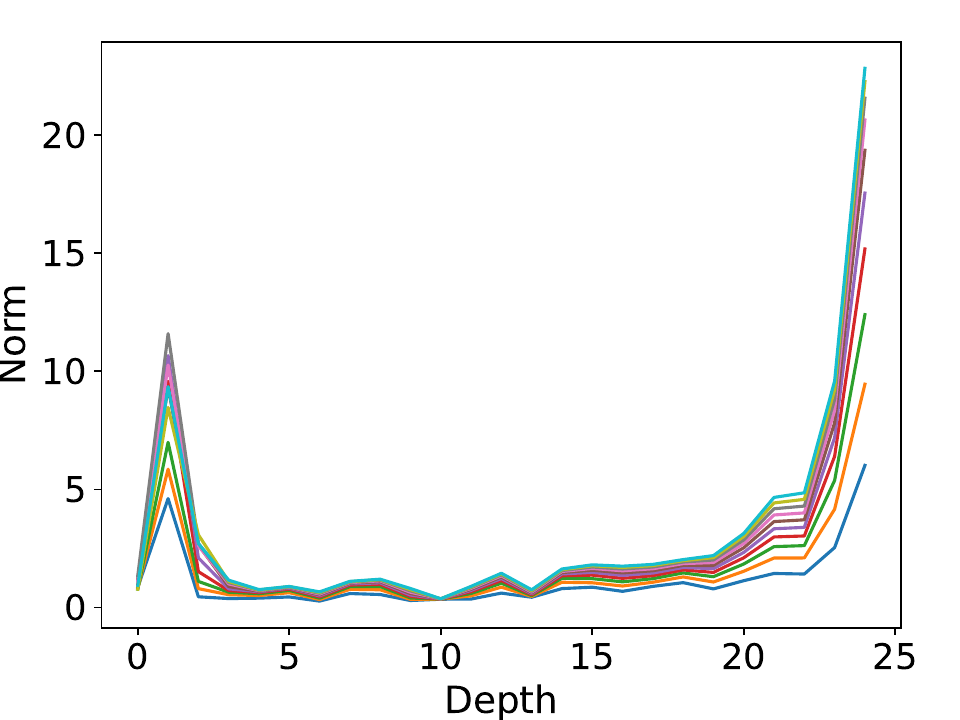}
        \caption{\black{At least 1 positive eigenvalue}}
    \end{subfigure}%
    \hfill 
    \begin{subfigure}[h]{0.3\textwidth}
        \centering
        \includegraphics[width=\textwidth]{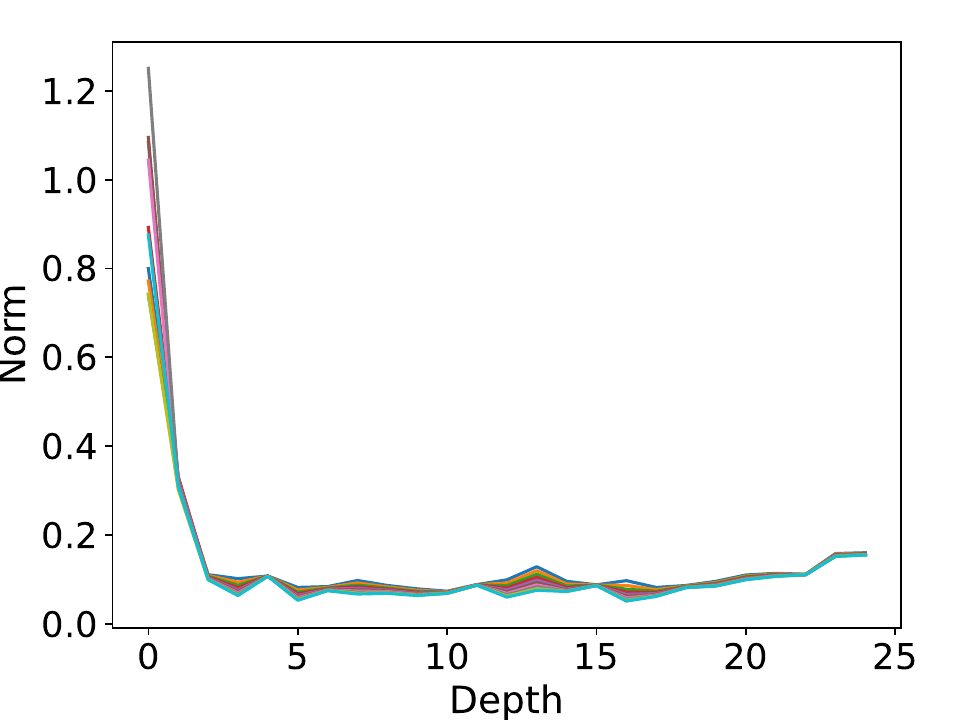}
        \caption{\black{Negative eigenvalues}}
    \end{subfigure}
    \caption{\black{Token norm evolution in standard Mamba architecture without $\operatorname{LayerNorm}$.}}
    \label{fig:dynamic_mamba_no_layernorm}
\vspace{-0.1in}
\end{figure}

\begin{figure}[h!]
\vspace{-0.1in}
    \centering
    \begin{subfigure}[h]{0.3\textwidth}
        \centering
        \includegraphics[width=\textwidth]{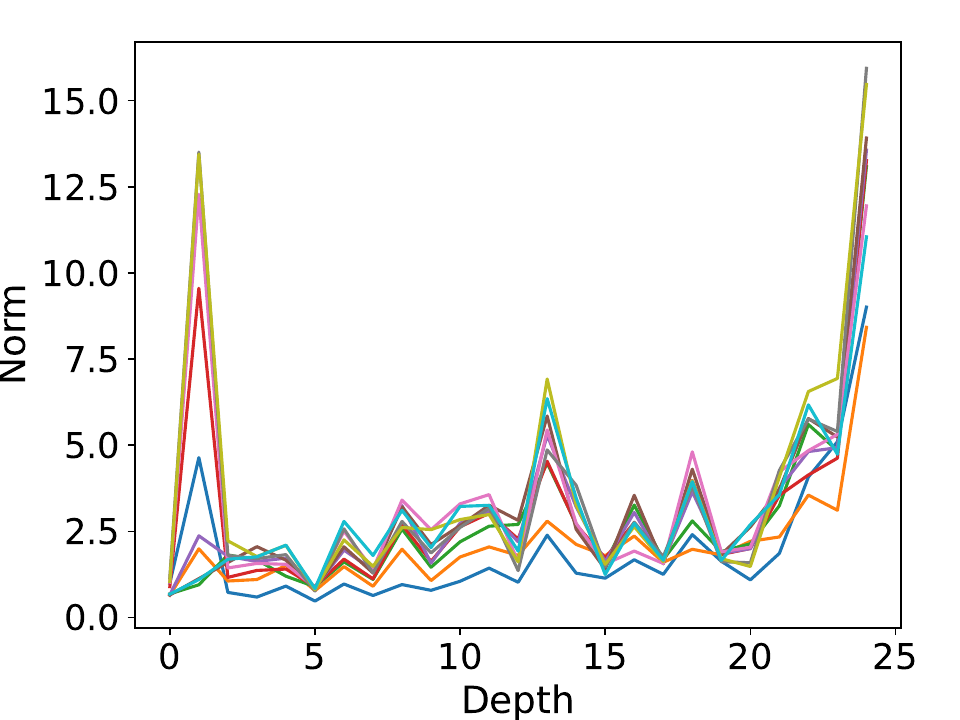}
        \caption{\black{Unconstrained eigenvalues}}
    \end{subfigure}%
    \hfill 
    \begin{subfigure}[h]{0.3\textwidth}
        \centering
        \includegraphics[width=\textwidth]{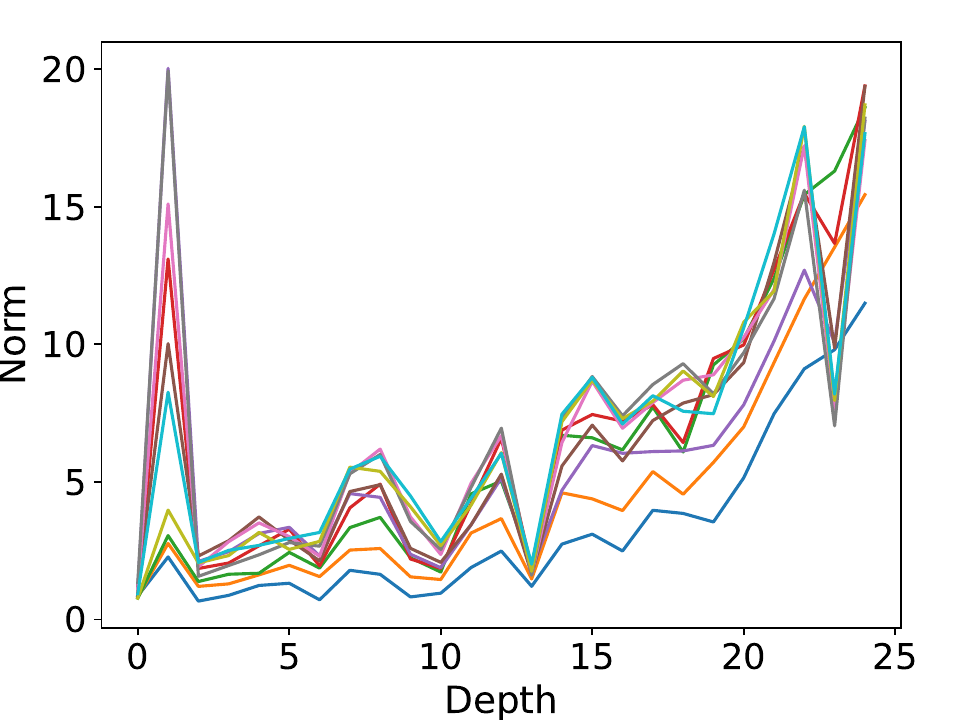}
        \caption{\black{At least 1 positive eigenvalue}}
    \end{subfigure}%
    \hfill 
    \begin{subfigure}[h]{0.3\textwidth}
        \centering
        \includegraphics[width=\textwidth]{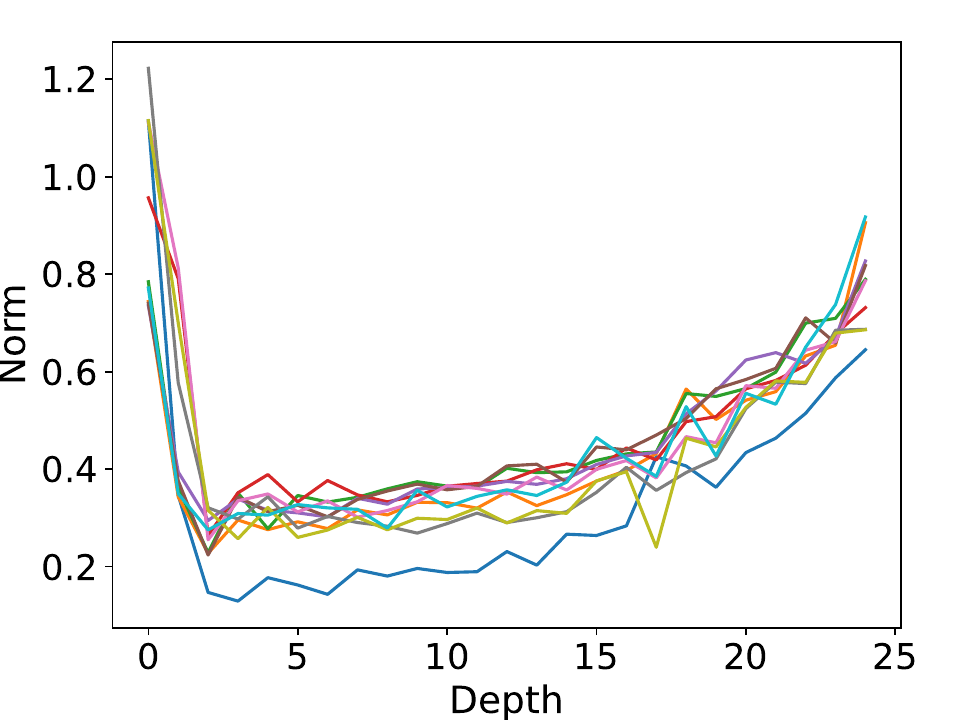}
        \caption{\black{Negative eigenvalues}}
    \end{subfigure}
    \caption{\black{Token norm evolution in standard Mamba architecture with $\operatorname{LayerNorm}$.}}
    \label{fig:dynamic_mamba_with_layernorm}
\end{figure}

 In this section, we present an experiment to analyze the token dynamics in the standard Mamba architecture in a practical setting. We evaluate three cases: 
\begin{itemize}
    \item[(a)] There is no constraint on the input-output projection matrix.
    \item[(b)] The input-output projection matrix is symmetric and has at least one positive eigenvalue.
    \item[(c)] The input-output projection matrix is symmetric and has only negative eigenvalues.
\end{itemize}

We first conduct simulations using a pretrained standard Mamba architecture without $\operatorname{LayerNorm}$ on {\sc WikiText103}. 
In Figure~\ref{fig:dynamic_mamba_no_layernorm}, we visualize the evolution of token norms across the blocks. Figure~\ref{fig:dynamic_mamba_no_layernorm} shows that tokens converge toward zero when the input-output projection matrix has only negative eigenvalues, while tokens diverge when the input-output projection matrix has at least one positive eigenvalue. Moreover, when there is no constraint on the input-output projection matrix, i.e., when we do not force it to always belong to the class of positive definite or negative definite matrices, the tokens' dynamic behaviors become quite complicated.
This figure suggests that our theoretical analysis of the tokens' dynamic behaviors in the standard Mamba model without $\operatorname{LayerNorm}$ might still be valid.

In addition, we also conduct simulations using a pretrained standard Mamba architecture with $\operatorname{LayerNorm}$ in a practical setting on {\sc WikiText103}. 
In Figure~\ref{fig:dynamic_mamba_with_layernorm}, we visualize the evolution of token norms across the blocks. Figure~\ref{fig:dynamic_mamba_with_layernorm} shows that the tokens' dynamics in a full Mamba setting can be quite complicated. When the input-output projection matrix has only negative eigenvalues, the tokens do not necessarily tend toward zero. In particular, with $\operatorname{LayerNorm}$ in the Mamba architecture, our theoretical results on the tokens' dynamics might no longer hold.

\subsection{Ablation study}

We perform an ablation study to analyze the effects of various components of Mamba in the token reordering technique. 
We examine the following configurations, both with and without the reordering technique:

\begin{itemize}
    \item[(1)] The full Mamba setting,
    \item[(2)] Mamba setting with weight sharing across layers,
    \item[(3)] Mamba setting without layer normalization,
    \item[(4)] Mamba setting without the short convolution layer.
\end{itemize}

To ensure fairness, all configurations are standardized to approximately 129M parameters. The results, shown in Table~\ref{tab:ablation_study}, highlight the significant contributions of each component and the reordering technique to the model's performance, emphasizing their collective importance in achieving optimal results. Our reordering method consistently yields at least 1.49 PPL improvement in all settings, which is significant for language modeling task with WikiText103.

\begin{table}[h!]
\centering

\caption{\black{Ablation study on language modeling task with WikiText103.}}
\black{
\begin{tabular}{cccc}
\hline
Model & With reordering & Without reordering & $\Delta$ \\ \hline
Mamba & \textbf{14.97} & 16.46 & 1.49 \\
Mamba + weight sharing & 17.77 & 20.51 & 2.74 \\ 
Mamba without LayerNorm & 17.82 & 19.31 & 1.49 \\ 
Mamba without Convolution & 21.81 & 23.38 & 1.57 \\ \hline
\end{tabular}
}
\label{tab:ablation_study}
\end{table}

}


\end{document}